\documentclass{article}

% if you need to pass options to natbib, use, e.g.:
%     \PassOptionsToPackage{numbers, compress}{natbib}
% before loading neurips_2025

% ready for submission
% \usepackage{neurips_2025}

% to compile a preprint version, e.g., for submission to arXiv, add add the
% [preprint] option:
\usepackage[preprint]{neurips_2025}

% to compile a camera-ready version, add the [final] option, e.g.:
%     \usepackage[final]{neurips_2025}

% to avoid loading the natbib package, add option nonatbib:
%    \usepackage[nonatbib]{neurips_2025}

\usepackage[utf8]{inputenc} % allow utf-8 input
\usepackage[T1]{fontenc}    % use 8-bit T1 fonts
\usepackage{hyperref}       % hyperlinks
\usepackage{url}            % simple URL typesetting
\usepackage{booktabs}       % professional-quality tables
\usepackage{amsfonts}       % blackboard math symbols
\usepackage{nicefrac}       % compact symbols for 1/2, etc.
\usepackage{microtype}      % microtypography
\usepackage{xcolor}         % colors
\usepackage{graphicx}
\usepackage{subfigure}

\usepackage{algorithm}
\usepackage{algorithmic}
\usepackage{amsmath,amssymb,amsfonts,amsthm,mathtools,bm}
\usepackage{enumitem}
\usepackage{multirow}
\usepackage{array}
\usepackage{color}
\usepackage{xcolor}
\usepackage{wrapfig}
\usepackage{multicol}
\usepackage{caption}
\usepackage{diagbox}
\usepackage{pifont}
\usepackage{pdfpages}
\usepackage{kotex}
\usepackage{excludeonly}

% if you use cleveref..
\usepackage[capitalize,noabbrev]{cleveref}

% For code blocks
\usepackage{listings}
\usepackage{xcolor}

\theoremstyle{plain}
\newtheorem{theorem}{Theorem}[section]
\newtheorem{proposition}[theorem]{Proposition}
\newtheorem{lemma}[theorem]{Lemma}

\theoremstyle{definition}
\newtheorem{definition}[theorem]{Definition}
\newtheorem{assumption}[theorem]{Assumption}
\theoremstyle{remark}

\usepackage[toc,page,header]{appendix}
\usepackage{minitoc}
\usepackage{silence}
\WarningFilter{minitoc(hints)}{W0023}
\WarningFilter{minitoc(hints)}{W0028}
\WarningFilter{minitoc(hints)}{W0030}
\WarningFilter{minitoc(hints)}{W0039}
\WarningFilter{minitoc(hints)}{W0024}
%\WarningFilter{minitoc(hints)}{W0046}

% Make the "Part I" text invisible

\DeclareMathOperator*{\diag}{diag}

\definecolor{customred}{RGB}{192, 0, 0}

\newcommand{\loss}{\mathcal{L}}

\newcommand{\grad}{\widehat{\mathbf{g}}}

\newcommand{\batch}{\mathcal{B}}

\newcommand\matnorm[2]{|\!|\!|#1|\!|\!|_{#2}}

\makeatletter
\newcommand*\rel@kern[1]{\kern#1\dimexpr\macc@kerna}
\newcommand*\widebar[1]{%
  \begingroup
  \def\mathaccent##1##2{%
    \rel@kern{0.8}%
    \overline{\rel@kern{-0.8}\macc@nucleus\rel@kern{0.2}}%
    \rel@kern{-0.2}%
  }%
  \macc@depth\@ne
  \let\math@bgroup\@empty \let\math@egroup\macc@set@skewchar
  \mathsurround\z@ \frozen@everymath{\mathgroup\macc@group\relax}%
  \macc@set@skewchar\relax
  \let\mathaccentV\macc@nested@a
  \macc@nested@a\relax111{#1}%
  \endgroup
}
\makeatother

\definecolor{codegreen}{rgb}{0,0.6,0}
\definecolor{codegray}{rgb}{0.5,0.5,0.5}
\definecolor{codepurple}{rgb}{0.58,0,0.82}
\definecolor{backcolour}{rgb}{0.95,0.95,0.92}

\lstdefinestyle{mystyle}{
    backgroundcolor=\color{backcolour},   
    commentstyle=\color{codegreen},
    keywordstyle=\color{magenta},
    numberstyle=\tiny\color{codegray},
    stringstyle=\color{codepurple},
    basicstyle=\ttfamily\footnotesize,
    breakatwhitespace=false,         
    breaklines=true,                 
    captionpos=b,                    
    keepspaces=true,                 
    numbers=left,                    
    numbersep=5pt,                  
    showspaces=false,                
    showstringspaces=false,
    showtabs=false,                  
    tabsize=2,
    xleftmargin=0.05\textwidth,
    xrightmargin=0.05\textwidth
}

\lstnewenvironment{python}[1][]
{
    
    \lstset{style=mystyle,#1}
}{}

\newcommand{\Tr}{\operatorname{Tr}}

\title{Elucidating Subspace Perturbation in Zeroth-Order Optimization: Theory and Practice at Scale}

% The \author macro works with any number of authors. There are two commands
% used to separate the names and addresses of multiple authors: \And and \AND.
%
% Using \And between authors leaves it to LaTeX to determine where to break the
% lines. Using \AND forces a line break at that point. So, if LaTeX puts 3 of 4
% authors names on the first line, and the last on the second line, try using
% \AND instead of \And before the third author name.

\author{
% \centering
  Sihwan Park\textsuperscript{1$*$} \quad Jihun Yun\textsuperscript{1}\thanks{Equal Contribution} \quad
  \bf{Sungyub Kim}\textsuperscript{1} \quad
  \bf{Souvik Kundu}\textsuperscript{2} \quad  
   \bf{Eunho Yang}\textsuperscript{1,3}\thanks{Corresponding Author} \\
\, 
\textsuperscript{1}KAIST  \quad
\textsuperscript{2}Intel Labs  \quad \textsuperscript{3}AITRICS \\
\, 
\texttt{\{sihwan.park, arcprime, eunhoy\}@kaist.ac.kr}\\
\texttt{  sungyub.kim@mli.kaist.ac.kr, souvikk.kundu@intel.com}
}

\begin{document}
\doparttoc % Tell to minitoc to generate a toc for the parts
\faketableofcontents % Run a fake tableofcontents command for the partocs
%\part{} % Start the document part
%\parttoc % Insert the document TOC

\maketitle

\begin{abstract}
    Zeroth-order (ZO) optimization has emerged as a promising alternative to gradient-based backpropagation methods, particularly for black-box optimization and large language model (LLM) fine-tuning. However, ZO methods often suffer from slow convergence due to high-variance stochastic gradient estimators. While subspace perturbations, such as sparsity and low-rank constraints, have been explored to mitigate this issue, their effectiveness remains poorly understood. In this work, we develop a \emph{unified theoretical framework} that analyzes both the convergence and generalization properties of ZO optimization under subspace perturbations. We show that high dimensionality is the primary bottleneck and introduce the notion of \textit{subspace alignment} to explain how the subspace perturbations reduce gradient noise and accelerate convergence. Our analysis further shows that a broad class of subspace perturbations exhibits a similar convergence rate, motivating us to prioritize practical considerations in real-world algorithm design. Building on these insights, we propose an efficient ZO method using block coordinate descent (MeZO-BCD), which perturbs and updates only a subset of parameters at each step. Extensive experiments show that MeZO-BCD significantly accelerates optimization, achieving up to $\mathbf{\times2.77}$ speedup in wall-clock time over MeZO on OPT-13B, while maintaining comparable iteration complexity and fine-tuning performance.
\end{abstract}

\section{Introduction}\label{sec:intro}

Zeroth-order (ZO) optimization~\citep{shamir2013complexity}, sometimes referred to as gradient-free or derivative-free optimization, has emerged as a compelling alternative to traditional gradient-based methods in deep learning. Unlike first- or higher-order approaches that rely on explicit gradient or Hessian information, ZO methods update model parameters solely based on the function evaluations. This property makes them particularly useful in settings where gradients are inaccessible or expensive to compute, such as black-box optimization~\citep{cai2021zeroth,zhang2022how}, adversarial attacks~\citep{chen2017zoo,kurakin2017adversarial,papernot2017practical}, hyperparameter tuning~\citep{li2021zeroth}, neural architecture search~\citep{wang2022zarts}, and efficient machine learning~\citep{zhang2024foresight}.

Beyond these traditional applications, recently ZO optimization has gained attention for fine-tuning large language models (LLMs)~\citep{malladi2023fine,guo2024zeroth,gautam2024variance,liu2024sparse, zo-bench, chen2024enhancing, yu2024subzero, park2024mezoadam}, primarily due to its significantly lower memory requirements compared to gradient-based optimizations. A pioneering study, MeZO~\citep{malladi2023fine}, demonstrated that zeroth-order stochastic gradient descent (SGD)~\citep{rob1951sgd} can fine-tune language models using memory and computational resources comparable to inference, making it a viable alternative for resource-constrained settings.

However, despite its practicality, ZO optimization suffers from notoriously slow convergence and poor generalization, particularly in large-scale fine-tuning scenarios. Unlike first-order methods that benefit from exact gradient updates, ZO approaches rely on stochastic gradient estimates obtained via random perturbations, leading to inherently high variance and inefficient updates. While recent studies have attempted to address this issue by introducing subspace perturbations, such as sparse perturbations~\citep{liu2024sparse,guo2024zeroth} and low-rank perturbations~\citep{chen2024enhancing,yu2024subzero}, the underlying principles that govern their effectiveness remain under-explored. Specifically, no unified framework exists to explain why these structures improve ZO efficiency or which structural properties are critical. Furthermore, the generalization properties of ZO optimization also remain elusive, as existing theoretical studies primarily focus on convergence while neglecting its impact on model generalization. In light of these observations, it is natural to pose the following question:
\begin{center}
    \emph{``What makes subspace perturbation effective, and how can this guide better algorithms?''}
\end{center}

In this paper, we answer this question by providing a \emph{unified framework} to analyze both the convergence and generalization properties of ZO optimization. This analysis shows that diverse perturbation strategies exhibit similar convergence behavior despite their structural differences, motivating algorithm design driven by practical efficiency rather than subspace structure. Building on this insight, we develop a practical ZO method for large-scale optimization. Our main contributions can be summarized as follows:

\begin{itemize}[leftmargin=4mm]
    \item \textbf{Unified theoretical framework for subspace perturbation.} We introduce a unified analysis of convergence and generalization in ZO optimization under subspace perturbations, based on the notions of \emph{subspace alignment} and \emph{dimensionality}. Importantly, our analysis remains valid under finite perturbation magnitude, avoiding the restrictive infinitesimal smoothing assumption commonly adopted in prior works, hence enabling more realistic modeling.
    
    \item \textbf{Theoretical criteria for effective subspace perturbation.} We show that sparse, low-rank, and other common structures share the same expected subspace alignment, implying equivalent average-case convergence. This enables the design of ZO algorithms that prioritize practical considerations, such as computational and memory efficiency, when choosing subspace perturbation.
    
    \item \textbf{MeZO-BCD: a practical and scalable ZO algorithm.} Based on our theory, we propose MeZO-BCD, a block coordinate descent method for ZO that perturbs and updates only one block per step. This substantially reduces memory transfer overhead and achieves up to $\mathbf{\times2.77}$ wall-clock speedup over MeZO on OPT-13B, while maintaining convergence and fine-tuning performance. The approach also enables promising future extensions such as adaptive block selection and integration with stateful optimizers.
\end{itemize}

\section{Preliminaries}

In this section, we summarize the notations and briefly introduce the fundamental zeroth-order optimization algorithm.

\textbf{Notations.} In this paper, we denote vectors by boldface lowercase letters (e.g. $\mathbf{x}, \mathbf{y}$) and matrices by boldface uppercase letters (e.g. $\mathbf{A}, \mathbf{B}$). We let $\mathcal{L}(\bm\theta)$ denote the loss function with respect to $\bm\theta = (\theta_1, \cdots, \theta_d) \in \mathbb{R}^d$. Further, for any $\mu > 0$, we define $\loss_\mu(\bm\theta) = \mathbb{E}_{\mathbf{u}}\big[\loss(\bm\theta + \mu \mathbf{u})\big]$, where $\mathbf{u} \sim \mathcal{N}(\bm 0, \mathbf{I}_d)$. For a matrix $\mathbf{A}$, we let $\sigma_i(\mathbf{A})$/$\lambda_i(\mathbf{A})$ denote $i$-th singular value/eigenvalue respectively, %(both assumed to be in decreasing order for convenience), 
and $\sigma_{\text{max}}(\mathbf{A})$/$\lambda_{\text{max}}(\mathbf{A})$ represent the maximum singular value/eigenvalue of $\mathbf{A}$. Also, we use $\matnorm{\mathbf{A}}{p}$ to denote the matrix $p$-norm. Finally, we define two quantities, stable rank and intrinsic dimension of matrix: $\mathrm{srank}(\mathbf{A}) \coloneqq \frac{\sum_i \sigma_i(\mathbf{A})^2}{\sigma_{\text{max}}(\mathbf{A})^2}$ and $\mathrm{intdim}(\mathbf{A}) \coloneqq \frac{\mathrm{Tr}(\mathbf{A})}{\matnorm{\mathbf{A}}{\mathrm{op}}}$~\citep{ipsen2024stable}. 

\textbf{Simultaneous Perturbation Stochastic Approximation (SPSA).} We consider the following zeroth-order gradient estimator, known to be SPSA, as
\begin{align}\label{eqn:spsa}
    \mathbf{g}_t & = \frac{\loss(\bm\theta_t + \mu \mathbf{u}_t; \mathcal{B}_t) - \loss(\bm\theta_t - \mu \mathbf{u}_t; \mathcal{B}_t)}{2\mu} \mathbf{u}_t
\end{align}
where $\mu$ is the smoothing parameter, $\mathbf{u}_t \sim \mathcal{N}(\bm 0, \mathbf{I}_d)$, and $\mathcal{B}_t$ denotes minibatch selected at time $t$. Importantly, the SPSA estimator is an unbiased estimator of $\nabla \loss_\mu(\bm\theta)$.

\textbf{Zeroth-Order SGD (ZO-SGD).} The zeroth-order SGD updates the model parameter $\bm \theta_t$ via ZO gradient estimate as $\bm \theta_{t+1} = \bm \theta_t - \alpha_t \mathbf{g}_t$, where $\alpha_t$ is the step-size at time $t$.

\section{Theoretical Foundations on Subspace Perturbation}\label{sec:theory_intuition}

\subsection{Subspace ZO-SGD and Standard Assumptions}
We begin by formalizing the setup of zeroth-order optimization with structured perturbations. In particular, we describe the subspace ZO-SGD algorithm and introduce standard assumptions that underlie our convergence and generalization analysis.
\begin{enumerate}[itemsep=0em,leftmargin=0.3cm, itemindent=0.65cm,label=\textbf{(C-$\bf{\arabic*}$)},ref=\text{(C-$\arabic*$)},start=1]
    \item The loss function $\loss_i = \loss(\cdot; \mathbf{z}_i)$ for all $i \in [n]$ is $L$-smooth, i.e.,
    \begin{align*}
        \|\nabla\loss_i(\bm \theta) - \nabla\loss_i(\bm \theta')\| \le L\|\bm \theta - \bm \theta'\|
    \end{align*}
    for all $\bm \theta$ and $\bm \theta' \in \mathbb{R}^d$. \label{con:conv_smooth} 
    \item The stochastic gradient is unbiased and has a bounded variance. Further, we assume that the true gradient is bounded, i.e., $\mathbb{E}\left[\nabla\loss_i(\bm \theta)\right] = \nabla\loss(\bm \theta)$, $\mathbb{E}\big[\|\nabla\loss(\bm \theta) - \nabla\loss_i(\bm \theta)\|^2\big] \le \sigma^2$, and $\|\nabla\loss(\bm \theta)\| \le G$ for all $i \in [n]$. \label{con:conv_gradient}
\end{enumerate}
The condition \ref{con:conv_smooth} is standard in non-convex optimization analysis. In condition \ref{con:conv_gradient}, the unbiasedness of stochastic gradients and bounded variance are fundamental in stochastic optimization literature~\citep{ghadimi2013stochastic}. The bounded gradient condition in \ref{con:conv_gradient} is frequently used in convergence analysis in the context of adaptive gradient methods~\citep{KingBa15,j.2018on,chen2018on,yun2022adablock,ahn2024adam}.

Under these assumptions, we consider a ZO-SGD with subspace perturbation:
\begin{align}\label{eqn:struct-perturb}
    \widehat{\mathbf{g}}_t & = \frac{\mathcal{L}(\bm\theta_t + \mu \mathbf{M}_t \mathbf{u}_t; \mathcal{B}_t) - \mathcal{L}(\bm\theta_t - \mu \mathbf{M}_t \mathbf{u}_t; \mathcal{B}_t)}{2\mu} \mathbf{M}_t \mathbf{u}_t,
\end{align}
where $\mu$ is called a smoothing parameter. For simplicity, we assume that $\mathbf{M}_t$ is positive semi-definite and its maximum singular value is $\sigma_{\text{max}}(\mathbf{M}_t) = 1$ since it can be controlled by $\mu$. Also, we refer to the zeroth-order SGD with subspace perturbation \eqref{eqn:struct-perturb} as \textbf{\emph{subspace ZO-SGD}}, i.e., the model parameter is updated via $\bm\theta_{t+1} = \bm\theta_t - \alpha_t \widehat{\mathbf{g}}_t$. 

\subsection{Convergence Analysis of Subspace ZO-SGD}\label{subsec:zo_noise}

From this section, we provide the theoretical understanding of subspace ZO-SGD. Toward this, we start from convergence analysis. In recent studies, there have been many attempts to leverage the zeroth-order optimization in fine-tuning large language models due to its memory-efficiency~\citep{malladi2023fine}. To address \emph{both from-scratch training and fine-tuning scenarios}, we propose the following assumption.

\begin{assumption}[Local Intrinsic Dimension]\label{assumption:intdim}
    Under \ref{con:conv_smooth} and \ref{con:conv_gradient}, there exists a matrix $\mathbf{H}(\bm\theta_t) \preceq L\!\cdot\! \mathbf{I}_d$ such that:
    \begin{enumerate}
        \item For all $\bm \theta$ such that $\|\bm\theta - \bm\theta_t\| \le 2\alpha sG + 2\mu \sqrt{d}$, we have $\nabla^2 \loss(\bm\theta) \preceq \mathbf{H}(\bm\theta_t)$.
        \item $\mathrm{intdim}\big(\mathbf{H}(\bm\theta_t)\big) \le r$ for some constant $r \le d$.
    \end{enumerate}
\end{assumption}

\textbf{Remark on Assumption \ref{assumption:intdim}.}~ This assumption extends the one originally introduced in MeZO~\citep{malladi2023fine}, which assumes a global $\mathcal{O}(d)$-neighborhood around $\bm\theta_t$ and takes the limit $\mu \to 0$. In contrast, our formulation allows for a strictly positive $\mu$ and only requires a local $\mathcal{O}(s)$-ball since $\mu \sqrt{d}$ is negligible under $\mu = \mathcal{O}(1/\sqrt{dT})$ (refer to Theorem \ref{thm:improved_convergence_zosgd}). This makes our assumption significantly more practical for large-scale models where $d \gg s$, and substantially relaxes the regularity conditions on the loss landscape. When $\mathbf{H}(\bm\theta_t) = \mathbf{I}_d$ (i.e., $r = d$), our assumption reduces to the standard $L$-smoothness condition~\ref{con:conv_smooth}. Appendix~\ref{app:assumption} provides further discussion.

By introducing Assumption~\ref{assumption:intdim}, the subspace perturbation matrix $\mathbf{M}_t$ and the local Hessian upper bound inevitably interact with each other. To quantify this interaction, we introduce the following key quantity for our analysis.
\begin{definition}[Subspace Alignment]\label{def:subspace_alignment}
    For the matrix $\mathbf{M}_t$ in \eqref{eqn:struct-perturb} and $\mathbf{H}(\bm\theta_t)$ in Assumption \ref{assumption:intdim}, we define the \emph{subspace alignment} at time $t$ as 
    \begin{align}\label{eqn:subspace_alignment}
        \rho_t = \frac{\mathrm{Tr}(\mathbf{M}_t^\top \mathbf{H}(\bm\theta_t)\mathbf{M}_t)}{\lambda_{\text{max}}(\mathbf{H}(\bm\theta_t))}
    \end{align}
    Further, we define the \emph{mean subspace alignment} and \emph{maximum subspace alignment} as $\widebar{\rho} = (1/T)\sum_{t=1}^{T}\rho_t$ and $\rho_{\text{max}} = \max_{t\in[T]} \rho_t$ respectively.
\end{definition}

\textbf{Remark on Subspace Alignment.}~ Intuitively, $\mathrm{Tr}(\mathbf{M}_t^\top \mathbf{H}(\bm\theta_t) \mathbf{M}_t)$ captures ``how much the primary directions'' of $\mathbf{M}_t$ and $\mathbf{H}(\bm\theta_t)$ align each other, since both matrices are assumed to be PSD and thus act as ``weights'' on overlapping subspaces. Note that $0\le \rho_t \le \min\{r,s\}$ always holds.

\begin{theorem}[Dimension-Free Rate of Subspace ZO-SGD]\label{thm:improved_convergence_zosgd}
    Suppose that \ref{con:conv_smooth}, \ref{con:conv_gradient}, and Assumption \ref{assumption:intdim} hold with $\mathrm{srank}(\mathbf{M}_t) \le s$. Under the following parameter settings $\mu = \mathcal{O}\Big(\frac{1}{L\sqrt{\widebar{\rho} dT}}\Big)$ and $\alpha \le \frac{1}{36L\rho_{\textnormal{max}} + 336Ls\exp(-s/2)}$
    with $\widebar{\rho}$ and $\rho_{\textnormal{max}}$ defined in Def. \ref{def:subspace_alignment}, the subspace ZO-SGD satisfies
    \begin{align*}
        \mathbb{E}\big[\|\nabla\loss(\bm\theta_t)\|^2\big] \le \mathcal{O}\Big(\frac{r^2}{\widebar{\rho} T}+\frac{s^2}{dT} + \underbrace{\frac{\Delta}{\alpha T} + \frac{\sigma^2}{B}}_{\text{Standard SGD Rate}}\Big)
    \end{align*}
    where
    $\Delta \coloneqq \loss_\mu(\bm\theta_1) - \loss_\mu(\bm\theta_T)$. 
\end{theorem}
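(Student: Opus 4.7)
The plan is to combine a standard nonconvex descent lemma on the smoothed loss $\loss_\mu$ with a sharp bound on $\mathbb{E}[\|\widehat{\mathbf{g}}_t\|^2]$ that exploits the local Hessian upper bound $\mathbf{H}(\bm\theta_t)$ from Assumption~\ref{assumption:intdim}, and then telescope after carefully tuning $\mu$ and $\alpha$. Since $\loss_\mu$ inherits $L$-smoothness from $\loss$, the starting point is
\begin{align*}
    \loss_\mu(\bm\theta_{t+1}) \le \loss_\mu(\bm\theta_t) - \alpha\, \langle \nabla \loss_\mu(\bm\theta_t),\, \widehat{\mathbf{g}}_t \rangle + \frac{L\alpha^2}{2}\|\widehat{\mathbf{g}}_t\|^2.
\end{align*}
Conditioning on the natural filtration $\mathcal{F}_t$, the analysis reduces to controlling $\mathbb{E}[\widehat{\mathbf{g}}_t\mid\mathcal{F}_t]$ and $\mathbb{E}[\|\widehat{\mathbf{g}}_t\|^2\mid\mathcal{F}_t]$; everything after that is bookkeeping and summation.

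For the conditional mean, observe that $\widehat{\mathbf{g}}_t$ is unbiased for the subspace-smoothed gradient $\nabla \widetilde{\loss}_{\mu,\mathbf{M}_t}(\bm\theta_t) := \nabla_{\bm\theta}\mathbb{E}_{\mathbf{u}}[\loss(\bm\theta + \mu\mathbf{M}_t\mathbf{u};\batch_t)]$. Second-order Taylor expansion of $\widetilde{\loss}_{\mu,\mathbf{M}_t}$ around $\bm\theta_t$ with an integral remainder, combined with $\nabla^2\loss\preceq\mathbf{H}(\bm\theta_t)$ inside the $\|\cdot\|\le 2\alpha sG + 2\mu\sqrt{d}$ ball admitted by Assumption~\ref{assumption:intdim}, shows that both $\|\nabla\widetilde{\loss}_{\mu,\mathbf{M}_t}(\bm\theta_t) - \nabla\loss(\bm\theta_t)\|$ and $\|\nabla\loss_\mu(\bm\theta_t) - \nabla\loss(\bm\theta_t)\|$ are of order $\mu^2$ times an $\mathbf{H}$-weighted trace; by $\mathrm{intdim}(\mathbf{H})\le r$ these traces are $\mathcal{O}(r\lambda_{\text{max}}(\mathbf{H}))$. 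This reduces the inner-product term in the descent inequality to $-\alpha\|\nabla\loss(\bm\theta_t)\|^2$ plus a $\mu^2$-controllable residual and a $\sigma^2/B$ contribution from the minibatch through condition \ref{con:conv_gradient}.

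The technical core is the second-moment bound. Writing $\widehat{\mathbf{g}}_t = \phi_t(\mathbf{u}_t)\,\mathbf{M}_t\mathbf{u}_t$ with $\phi_t(\mathbf{u}) = (\loss(\bm\theta_t + \mu\mathbf{M}_t\mathbf{u};\batch_t) - \loss(\bm\theta_t - \mu\mathbf{M}_t\mathbf{u};\batch_t))/(2\mu)$, the fundamental theorem of calculus combined with Assumption~\ref{assumption:intdim} yields the decomposition
\begin{align*}
    \phi_t(\mathbf{u}) = \mathbf{u}^\top\mathbf{M}_t^\top\nabla\loss_i(\bm\theta_t) + R_t(\mathbf{u}),\qquad |R_t(\mathbf{u})| \le C\mu\,\mathbf{u}^\top\mathbf{M}_t^\top\mathbf{H}(\bm\theta_t)\mathbf{M}_t\mathbf{u}.
\end{align*}
After squaring, multiplying by $\|\mathbf{M}_t\mathbf{u}_t\|^2$, and applying the Gaussian identity $\mathbb{E}[(\mathbf{u}^\top\mathbf{a})^2(\mathbf{u}^\top B\mathbf{u})] = \tr(B)\|\mathbf{a}\|^2 + 2\mathbf{a}^\top B\mathbf{a}$ together with the analogous sixth-moment formula for $\mathbb{E}[(\mathbf{u}^\top A\mathbf{u})^2(\mathbf{u}^\top B\mathbf{u})]$, and then simplifying with $\sigma_{\text{max}}(\mathbf{M}_t)=1$ and $\mathrm{srank}(\mathbf{M}_t)\le s$, the linear piece collapses to $\mathcal{O}(s)\|\nabla\loss_i(\bm\theta_t)\|^2$, while the residual yields a clean $\mu^2\lambda_{\text{max}}(\mathbf{H})^2\rho_t$-type factor, since $\tr(\mathbf{M}_t^\top\mathbf{H}\mathbf{M}_t) = \rho_t\lambda_{\text{max}}(\mathbf{H})$ by the definition of subspace alignment. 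The $336Ls\exp(-s/2)$ term in the step-size condition arises from a sub-Gaussian tail split of $\|\mathbf{u}_t\|^2$ needed to handle the $\mu\sqrt{d}$-ball constraint in Assumption~\ref{assumption:intdim} when $\mathbf{M}_t\mathbf{u}_t$ is atypically large.

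Assembling these ingredients, taking total expectation, and telescoping over $t=1,\dots,T$, the plan is to rearrange so that the $-\alpha\|\nabla\loss\|^2$ descent term absorbs the $L\alpha^2 s\|\nabla\loss\|^2$ variance contribution, which forces the stated step-size restriction $\alpha\le 1/(36L\rho_{\text{max}} + 336Ls\exp(-s/2))$. The remaining bias and noise terms accumulate as $\Delta/(\alpha T) + \sigma^2/B + \alpha L\mu^2\lambda_{\text{max}}(\mathbf{H})^2\bar\rho + \mathcal{O}(s^2/(dT))$, and choosing $\mu = \mathcal{O}(1/(L\sqrt{\bar\rho dT}))$ together with $\lambda_{\text{max}}(\mathbf{H})\le L$ and $\tr(\mathbf{H})\le r\lambda_{\text{max}}(\mathbf{H})$ produces the stated $r^2/(\bar\rho T) + s^2/(dT)$ rate. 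The hard part will be the sixth-order Gaussian bookkeeping in the second-moment step: carefully tracking which contractions of $\mathbf{M}_t^\top\mathbf{M}_t$ and $\mathbf{M}_t^\top\mathbf{H}\mathbf{M}_t$ survive via Isserlis' theorem, and verifying that $\mathrm{srank}(\mathbf{M}_t)\le s$ and $\mathrm{intdim}(\mathbf{H})\le r$ cleanly collapse each term to $\rho_t$-, $s$-, or $r$-dependent factors rather than a looser $rs$ product---otherwise the clean separation of the ``dimension-free'' $r^2/(\bar\rho T)$ contribution from the $s^2/(dT)$ contribution in the theorem would be lost.
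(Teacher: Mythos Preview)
Your plan has a structural gap that prevents it from reaching the stated step-size condition. You begin from the vanilla $L$-smooth descent inequality with second-order term $\tfrac{L\alpha^2}{2}\|\widehat{\mathbf{g}}_t\|^2$, and you correctly observe that the dominant linear piece of $\mathbb{E}[\|\widehat{\mathbf{g}}_t\|^2]$ is $\mathcal{O}(s)\|\nabla\loss(\bm\theta_t)\|^2$. But absorbing $L\alpha^2 s\|\nabla\loss\|^2$ into $-\alpha\|\nabla\loss\|^2$ forces $\alpha\le\mathcal{O}(1/(Ls))$, not the claimed $\alpha\le 1/(36L\rho_{\text{max}}+\dots)$; the concluding sentence of that paragraph is a non-sequitur. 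Since $\rho_{\text{max}}\le\min\{r,s\}$ with strict inequality in the interesting regime, your approach proves the bound only on a strictly smaller range of step-sizes and therefore does not establish the theorem as stated.

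The paper's key move---which you are missing---is to apply Assumption~\ref{assumption:intdim} \emph{in the descent step itself}, not merely inside the Taylor residual $R_t$. On the high-probability event $\mathcal{E}$ that $\|\bm\theta_{t+1}-\bm\theta_t\|$ stays in the local ball, the second-order term is refined from $\tfrac{L\alpha^2}{2}\|\widehat{\mathbf{g}}_t\|^2$ to $\tfrac{\alpha^2}{2}\,\widehat{\mathbf{g}}_t^\top\mathbf{H}(\bm\theta_t)\widehat{\mathbf{g}}_t$ (Lemma~\ref{lemma:descent}). Bounding $\mathbb{E}[\widehat{\mathbf{g}}_t^\top\mathbf{H}(\bm\theta_t)\widehat{\mathbf{g}}_t]$ via Cauchy--Schwarz against $\mathbb{E}[(\mathbf{u}^\top\mathbf{M}_t^\top\mathbf{H}\mathbf{M}_t\mathbf{u})^2]^{1/2}\le\sqrt{3}\,L\rho_t$ (Lemma~\ref{lemma:second_order_term}) turns the linear contribution into $\mathcal{O}(L\rho_{\text{max}})\|\nabla\loss\|^2$ rather than $\mathcal{O}(Ls)\|\nabla\loss\|^2$, and \emph{this} is what produces the $36L\rho_{\text{max}}$ constant. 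The complementary event $\mathcal{E}^c$, handled by Hanson--Wright, supplies the $336Ls\exp(-s/2)$ tail---so the sub-Gaussian split you anticipate is real, but it lives in the descent lemma, not in the second-moment bound. A secondary issue: under only $L$-smoothness the bias $\|\nabla\loss_\mu-\nabla\loss\|$ scales like $\mu Lr\sqrt{d}$ (order $\mu$, not $\mu^2$; see Lemma~\ref{lemma:intrinsic_distance}), and it is precisely $\|\nabla\loss_\mu-\nabla\loss\|^2=\mathcal{O}(\mu^2L^2r^2d)$ combined with $\mu^2=\mathcal{O}(1/(L^2\bar\rho dT))$ that delivers the $r^2/(\bar\rho T)$ term.
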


\textbf{Novelty.}~ Unlike previous approaches~\citep{rando2023optimal,chen2024enhancing} that require $\mathbf{M}_t^\top\mathbf{M}_t = \mathbf{I}_d$ to ensure the unbiasedness of zeroth-order gradient estimator, our Theorem~\ref{thm:improved_convergence_zosgd} only assumes that $\mathbf{M}_t$ is \emph{positive semidefinite}, which introduces a slight bias while still guaranteeing a valid descent direction. Hence, our analysis can encompass a much broader range of subspace perturbation matrix $\mathbf{M}_t$. Additionally, we are the first to highlight the importance of subspace perturbation by introducing the concept of subspace alignment for all $\mu > 0$, which sets our analysis totally apart from~\citep{malladi2023fine}, which focuses only on $\mu \rightarrow 0$. 

\textbf{Observation.}~ Theorem~\ref{thm:improved_convergence_zosgd} establishes a convergence of subspace ZO-SGD governed by a trade-off between the parameter $s$ and the subspace alignment $\widebar{\rho}$. As $s$ increases, $\widebar{\rho}$ typically grows, which improves $\frac{r^2}{\widebar{\rho}T}$ term yet make the term $\frac{s^2}{dT}$ worse. Consequently, small subspaces (e.g., $s = \mathcal{O}(\sqrt{rd})$ are limited by suboptimal $\widebar{\rho}$, whereas larger subspaces (e.g., $s = \omega(r\sqrt{d})$) forfeit low-dimensional advantages and may revert to the standard $\mathcal{O}(\frac{d}{T})$ rate. Hence, a judicious choice of the intermediate parameter $s$ is crucial for maintaining a sufficiently large $\widebar{\rho}$ while controlling the dimensional overhead $\mathcal{O}(\frac{s^2}{dT})$. In the ideal regime where $\widebar{\rho} = \mathcal{O}(r)$, our analysis enjoys a convergence rate of $\mathcal{O}(\frac{r}{T})$, offering a marked improvement over standard ZO-SGD. 

The convergence rate in Theorem~\ref{thm:improved_convergence_zosgd} depends critically on the subspace alignment $\rho_t$, which is governed by the structure of the perturbation matrix $\mathbf{M}$. This raises a central question:
\begin{center}
    \emph{``What constitutes an effective choice of $\mathbf{M}$ for improving convergence via subspace alignment?''}
\end{center}
The following proposition provides an intriguing preliminary insight.

\begin{proposition}[Expected Subspace Alignment $\rho$]\label{prop:expected_rho}
    Let $\mathbf{H}\in\mathbb{R}^{d\times d}$ be a fixed symmetric positive semi-definite matrix with $\lambda_{\max}(\mathbf{H})>0$.
    Let $\mathbf{M}\in\mathbb{R}^{d\times d}$ be a random matrix such that (i) $\mathbf{M}$ is (almost surely) an orthogonal projection matrix (i.e., $\mathbf{M}^\top=\mathbf{M}, \mathbf{M}^2=\mathbf{M}$ with probability $1$), (ii) $\mathbb{E}[\mathbf{M}] = \frac{s}{d}\mathbf{I}_d$ holds for given $s>0$.
    Then, the expectation of $\rho$ over distribution of $\mathbf{M}$ is:
    \begin{equation}
        \mathbb{E}[\rho]=\frac{s\operatorname{Tr}(\mathbf{H})}{d\lambda_{\max}(\mathbf{H})}.
    \end{equation}
\end{proposition}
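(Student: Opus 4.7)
The proof will be essentially a short computation built on two observations: idempotence/symmetry of $\mathbf{M}$, and linearity of expectation combined with the cyclic property of the trace. Since $\lambda_{\max}(\mathbf{H})$ is a deterministic scalar by assumption, the only nontrivial quantity to evaluate is $\mathbb{E}[\operatorname{Tr}(\mathbf{M}^\top \mathbf{H} \mathbf{M})]$.

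First I would exploit the projection structure. Because $\mathbf{M}^\top = \mathbf{M}$ and $\mathbf{M}^2 = \mathbf{M}$ almost surely, the cyclic property of the trace yields
\begin{equation*}
\operatorname{Tr}(\mathbf{M}^\top \mathbf{H} \mathbf{M}) = \operatorname{Tr}(\mathbf{H} \mathbf{M} \mathbf{M}^\top) = \operatorname{Tr}(\mathbf{H} \mathbf{M}^2) = \operatorname{Tr}(\mathbf{H} \mathbf{M}) \quad \text{a.s.}
\end{equation*}
This collapses a quadratic-in-$\mathbf{M}$ quantity into a linear-in-$\mathbf{M}$ one, which is what makes the hypothesis $\mathbb{E}[\mathbf{M}] = (s/d)\mathbf{I}_d$ directly usable.

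Next I would interchange expectation and trace (justified since we are dealing with finite-dimensional deterministic linear functionals of a random matrix with finite entries) and plug in condition (ii):
\begin{equation*}
\mathbb{E}[\operatorname{Tr}(\mathbf{H} \mathbf{M})] = \operatorname{Tr}\!\bigl(\mathbf{H}\, \mathbb{E}[\mathbf{M}]\bigr) = \operatorname{Tr}\!\Bigl(\mathbf{H} \cdot \tfrac{s}{d} \mathbf{I}_d\Bigr) = \tfrac{s}{d}\operatorname{Tr}(\mathbf{H}).
\end{equation*}
Dividing by $\lambda_{\max}(\mathbf{H})$ (strictly positive by assumption) gives the claimed identity $\mathbb{E}[\rho] = \frac{s\operatorname{Tr}(\mathbf{H})}{d\,\lambda_{\max}(\mathbf{H})}$.

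There is no real obstacle here; the only subtlety worth flagging in the write-up is that the collapse $\mathbf{M}^\top \mathbf{H} \mathbf{M} = \mathbf{H}\mathbf{M}$ holds only after taking the trace (the matrices themselves are not equal), and that the argument relies crucially on \emph{both} parts of condition (i): without $\mathbf{M}^\top = \mathbf{M}$ the cyclic trick would leave $\mathbf{M}\mathbf{M}^\top$, and without idempotence we could not reduce $\mathbf{M}\mathbf{M}^\top$ to $\mathbf{M}$. In particular, the result would fail for general non-projection random matrices with the same first moment, which is worth noting since it sharpens the role of the assumption and suggests why the proposition can be read as a statement about "average-case equivalence" among different orthogonal-projection-based perturbation schemes.
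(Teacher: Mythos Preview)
Your proof is correct and essentially identical to the paper's: both use the cyclic property of the trace together with symmetry and idempotence of $\mathbf{M}$ to reduce $\operatorname{Tr}(\mathbf{M}^\top \mathbf{H}\mathbf{M})$ to $\operatorname{Tr}(\mathbf{H}\mathbf{M})$, then apply linearity of expectation and condition (ii). Your closing remarks on why both parts of condition (i) are needed are a nice addition not present in the paper's write-up.
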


\textbf{Remark.} Proposition~\ref{prop:expected_rho} shows that for any random matrix $\mathbf{M}$ satisfying these broad, non-specialized conditions, $\mathbb{E}[\rho]$ remains unchanged. Such conditions accommodate a wide variety of random matrix constructions--ranging from low-rank projections and sparse perturbations to more intricate ensembles--implying that most standard perturbation schemes share the same expected $\rho$ and, thus, exhibit comparable average convergence. This naturally raises a follow-up question: \emph{are all such choices of $\mathbf{M}$ effectively interchangeable, or do notable differences arise in practice?}

To advance our investigation, we first present three canonical instantiations of $\mathbf{M}$ that fulfill the criteria specified in Proposition~\ref{prop:expected_rho}.

\begin{definition}[Three Instantiations of $\mathbf{M}$]\label{def:instantiation}
    Let $s\in\mathbb{R}$ such that $0<s\leq d$. We define three instantiations of random matrix $\mathbf{M}\in\mathbb{R}^{d\times d}$ satisfying the conditions presented in Proposition~\ref{prop:expected_rho}:
    \begin{enumerate}[itemsep=0em,leftmargin=0.7cm]
        \item \textbf{Low-rank Projection}: $\mathbf{M}_\mathsf{LR} = \mathbf{U}\mathbf{U}^\top$, where $\mathbf{U} \in \mathbb{R}^{d\times s}$ is a random matrix with $s$ orthonormal columns. $\mathbf{U}$ is assumed to span an $s$-dimensional subspace chosen uniformly at random from the Grassmannian manifold $\mathbf{Gr}(s,\mathbb{R}^{d})$.
        \item \textbf{Sparse Perturbation}: $\mathbf{M}_\mathsf{SP} = \diag(\mathbf{m})$, where $\mathbf{m} = (m_1,\ldots,m_d)$ and each $m_i \sim \mathrm{Bernoulli}(p)$ with $p=\frac{s}{d}$.
        \item \textbf{Block Sparse Perturbation}: $\mathbf{M}_\mathsf{BSP} = \diag(\mathbf{m})$, but partition the parameters into $N$ blocks $\{B_1,\ldots, B_N\}$ such that $|B_i|=s$ for all $i=1,\ldots,N$. For a mask $\mathbf{m}=(m_1,\ldots, m_d)$, we set $m_i=1$ if $i\in B_j$ where $j\sim\mathcal{U}\{1,N\}$.
    \end{enumerate}
\end{definition}

Although these instantiations share the same expected $\rho$, their distributional characteristics can differ significantly, as demonstrated by the following result.

\begin{proposition}[Upper Tail Probabilities of $\rho$]\label{prop:prob_rho}
Let $\mathbf{H}\in\mathbb{R}^{d\times d}$ be as in Proposition~\ref{prop:expected_rho} and $\mathrm{intdim}(\mathbf{H})\leq r$ for some fixed $r\leq d$. Let $\widehat\rho$ be any threshold such that $\widehat{\rho}>\mathbb{E}[\rho]=\frac{s\mathrm{Tr}(\mathbf{H})}{d\lambda_{\max}(\mathbf{H})}$ and $\Delta=\widehat{\rho}-\mathbb{E}[\rho]>0$. Then, we have the following upper tail probabilities:
\begin{enumerate}[itemsep=0em,leftmargin=0.7cm]
    \item \textbf{Low-Rank Projection}: $\mathbb{P}(\rho\geq\widehat{\rho})\leq 2\exp\big(-c_1\frac{d\Delta^2}{s}\big)$.
    \item \textbf{Sparse Perturbation}: $\mathbb{P}(\rho\geq\widehat{\rho})\leq c_2\frac{rs}{d\Delta^2}$.
    \item \textbf{Block Sparse Perturbation}: $\mathbb{P}(\rho\geq\widehat{\rho})=\frac{1}{N}\big|\big\{j\in [N]:\sum_{i\in B_j}\mathbf{H}_{ii}\geq \lambda_{\max}\widehat{\rho}\big\}\big|=\frac{k_{\#}(\widehat{\rho})s}{d}$.
\end{enumerate}
where $c_1,c_2>0$ are constants and $k_{\#}(\widehat{\rho})$ counts ``good'' (high $\rho$) blocks.
\end{proposition}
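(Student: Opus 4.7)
The plan is to analyze each of the three instantiations separately, exploiting the fact that in all cases $\mathbf{M}$ is an orthogonal projection. First I would simplify
\begin{equation*}
\rho = \frac{\mathrm{Tr}(\mathbf{M}^\top \mathbf{H} \mathbf{M})}{\lambda_{\max}(\mathbf{H})} = \frac{\mathrm{Tr}(\mathbf{H}\mathbf{M})}{\lambda_{\max}(\mathbf{H})}
\end{equation*}
by using $\mathbf{M}^2=\mathbf{M}$ together with cyclicity of the trace; this form is already consistent with the mean computation of Proposition~\ref{prop:expected_rho} and drives all three sub-arguments.

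For the low-rank case, I would write $\mathrm{Tr}(\mathbf{H}\mathbf{M}_{\mathsf{LR}})=\mathrm{Tr}(\mathbf{U}^\top\mathbf{H}\mathbf{U})$ and view the right-hand side as a function $f(\mathbf{U})$ on the Stiefel manifold $V_s(\mathbb{R}^d)$. A short gradient calculation gives $\nabla f = 2\mathbf{H}\mathbf{U}$, so $f$ is Frobenius-Lipschitz with constant at most $2\lambda_{\max}(\mathbf{H})\|\mathbf{U}\|_F = 2\lambda_{\max}(\mathbf{H})\sqrt{s}$. Invoking the standard subgaussian concentration inequality for Lipschitz functions on the Stiefel manifold, a consequence of a log-Sobolev/isoperimetric inequality with variance proxy of order $L^2/d$, and then dividing by $\lambda_{\max}(\mathbf{H})$ gives $\mathbb{P}(\rho\ge\widehat\rho)\le 2\exp(-c_1 d\Delta^2/s)$.

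For the sparse case, I would write $\rho = \lambda_{\max}(\mathbf{H})^{-1}\sum_{i=1}^d \mathbf{H}_{ii}\, m_i$ with the $m_i\sim\mathrm{Bernoulli}(s/d)$ independent, compute the variance exactly as $\tfrac{s}{d}(1-\tfrac{s}{d})\sum_i \mathbf{H}_{ii}^2$, and bound $\sum_i \mathbf{H}_{ii}^2 \le \lambda_{\max}(\mathbf{H})\mathrm{Tr}(\mathbf{H})\le r\lambda_{\max}(\mathbf{H})^2$ via the elementary PSD inequality $\mathbf{H}_{ii}\le\lambda_{\max}(\mathbf{H})$ together with the assumed intrinsic-dimension bound $\mathrm{intdim}(\mathbf{H})\le r$. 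Chebyshev's inequality then delivers $\mathbb{P}(\rho\ge\widehat\rho)\le c_2 rs/(d\Delta^2)$. The block-sparse case is essentially tautological: the randomness collapses to a single uniform choice of block $j\in[N]$ with $N=d/s$, so $\rho$ attains exactly $N$ possible values and its upper tail is literally the fraction of blocks satisfying $\sum_{i\in B_j}\mathbf{H}_{ii}\ge \lambda_{\max}(\mathbf{H})\widehat\rho$, giving the stated identity.

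The hard part will be the low-rank estimate: correctly identifying the sharp Lipschitz constant on the Stiefel manifold and invoking a concentration theorem with variance proxy $L^2/d$ (rather than the naive $L^2$ scaling) is the only step that requires nontrivial probabilistic machinery. The other two bounds, once $\rho$ is rewritten, reduce to Chebyshev plus the elementary PSD diagonal inequality, and to a direct counting argument, respectively.
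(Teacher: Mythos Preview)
Your proposal is correct and follows essentially the same route as the paper's proof. The only cosmetic difference is in Case~1: you obtain the Lipschitz constant $2\sqrt{s}\,\lambda_{\max}(\mathbf{H})$ directly from $\nabla f = 2\mathbf{H}\mathbf{U}$ and then invoke concentration on the Stiefel manifold, whereas the paper derives the same constant via a nuclear-norm/Frobenius-norm chain on the projector difference $\mathbf{U}\mathbf{U}^\top-\mathbf{V}\mathbf{V}^\top$ and then appeals to L\'evy's concentration on the sphere $\sqrt{s}\,\mathbb{S}^{ds-1}\supset V_s(\mathbb{R}^d)$; Cases~2 and~3 are identical to the paper's arguments.
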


\textbf{Observations.}~ Proposition~\ref{prop:prob_rho} highlights a clear distinction: while these methods share the same expected $\rho$, the probability distribution of $\rho$ differs considerably. In the low-rank projection and sparse perturbation cases, the likelihood of $\rho$ deviating from its mean decays exponentially and quadratically with the size of the deviation (i.e., denoted by $\Delta$), indicating a tight concentration around its expectation. In contrast, for block-sparse perturbation, although the mean remains unchanged, the probability becomes at least $1/N$ if there is any such good block.

These observations carry substantial implications. When the subspace defined by $\mathbf{M}$ is chosen purely at random, all three methods exhibit similar convergence in average. However, their distinct $\rho$ distributions become relevant once even modest adaptivity is introduced. In block sparse perturbation, for example, selectively targeting more favorable blocks (or generally identifying subspaces that yield higher $\rho$) may produce performance gains not evident from an expectation-based perspective alone.

\subsection{Generalization Error Bound of Subspace ZO-SGD}\label{subsec:smooth_error}

All the aforementioned analyses are concerned with the convergence of the zeroth-order SGD. In this section, we aim to provide the generalization error bound of zeroth-order SGD. In pursuit of this, we use the uniform stability framework~\citep{bousquet2002stability,hardt2016train,lei2023stability} of the randomized optimization algorithm. 

For analysis, we summarize the notations used in this section. We denote $\mathcal{A}$ by a randomized optimization algorithm such as SGD and $ S\in\mathcal {Z}^n$ by the training dataset where $\mathcal{Z}^n$ represents the collection of datasets with the size $n$ drawn from the data distribution $\mathcal{D}$. The quantity $\mathcal{A}(S)$ denotes the trained parameter using the algorithm $\mathcal{A}$ on the dataset $S$. 

Now, we start with the definition of the generalization error.
\begin{definition}[Generalization Error]
    The generalization error $\epsilon_{\textnormal{gen}}$ is defined by the gap between the population risk $R(\bm \theta) = \mathbb{E}_{\mathbf{z} \sim \mathcal{D}}\left[\loss(\bm \theta; \mathbf{z})\right]$ and the empirical risk $R_S(\bm \theta) = (1/n)\sum_{i=1}^{n} \loss(\bm \theta; \mathbf{z}_i)$ evaluated on the dataset $S = \{\mathbf{z}_1, \mathbf{z}_2, \cdots, \mathbf{z}_n\}$ as $\epsilon_{\textnormal{gen}} = \mathbb{E}_{\mathbf{z} \sim \mathcal{D}}\left[R(\mathcal{A}(S)) - R_S(\mathcal{A}(S))\right]$.
\end{definition}
Owing to previous studies~\citep{bousquet2002stability,hardt2016train} on the generalization, $\epsilon_{\textnormal{gen}}$ is closely related to the uniform stability of the optimization algorithm. %Thus, we introduce the notion of the uniform stability.
\begin{definition}[Uniform Stability~\citep{bousquet2002stability,hardt2016train}]
    The randomized algorithm $\mathcal{A}$ is said to be $\epsilon_{\textnormal{stab}}$-uniformly stable if for all neighboring datasets $S, S' \in \mathcal{Z}^n$ such that $S$ and $S'$ differ in only one sample, we have  $\mathbb{E}_{\mathcal{A},S}\left[\loss(\mathcal{A}(S); \mathbf{z}) - \loss(\mathcal{A}(S'); \mathbf{z})\right] \le \epsilon_{\textnormal{stab}}$.
\end{definition}

The next lemma provides the connections between the generalization error and the uniform stability.
\begin{lemma}[Theorem 2.2 in~\citep{hardt2016train}]\label{lemma:gen_stab}
    If $\mathcal{A}$ is $\epsilon_{\textnormal{stab}}$-uniformly stable, then the generalization error is bounded by $|\epsilon_{\textnormal{gen}}| \le \epsilon_{\textnormal{stab}}$.
\end{lemma}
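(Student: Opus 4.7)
The plan is to prove this by the classical ``ghost sample'' (or resampling) argument of Bousquet--Elisseeff, which rewrites the expected population risk as an expectation over a fresh copy of one training point and then invokes the uniform stability bound on a single-sample swap.

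First I would expand the generalization error directly from its definition as
\begin{equation*}
\epsilon_{\textnormal{gen}} \;=\; \mathbb{E}_{S,\mathcal{A}}\!\left[R(\mathcal{A}(S)) - R_S(\mathcal{A}(S))\right]
\;=\; \mathbb{E}_{S,z',\mathcal{A}}\!\left[\loss(\mathcal{A}(S); z')\right] - \frac{1}{n}\sum_{i=1}^{n}\mathbb{E}_{S,\mathcal{A}}\!\left[\loss(\mathcal{A}(S); z_i)\right],
\end{equation*}
where $z'\sim\mathcal{D}$ is independent of both $S$ and the internal randomness of $\mathcal{A}$. The key observation is that the first term does not change if I relabel indices: for every fixed $i\in[n]$, the random vector $(S, z')$ has the same distribution as $(S^{(i)}, z_i)$, where $S^{(i)}$ denotes the dataset obtained from $S$ by replacing its $i$-th element $z_i$ with $z'$. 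Hence
\begin{equation*}
\mathbb{E}_{S,z',\mathcal{A}}\!\left[\loss(\mathcal{A}(S); z')\right] \;=\; \mathbb{E}\!\left[\loss(\mathcal{A}(S^{(i)}); z_i)\right]
\end{equation*}
for each $i$, and averaging over $i$ yields
\begin{equation*}
\epsilon_{\textnormal{gen}} \;=\; \frac{1}{n}\sum_{i=1}^{n}\mathbb{E}\!\left[\loss(\mathcal{A}(S^{(i)}); z_i) - \loss(\mathcal{A}(S); z_i)\right].
\end{equation*}

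At this point the proof is essentially finished: $S$ and $S^{(i)}$ are neighboring datasets (they differ in exactly one coordinate), so by the uniform stability hypothesis each summand is bounded in absolute value by $\epsilon_{\textnormal{stab}}$. The triangle inequality then gives $|\epsilon_{\textnormal{gen}}|\le \epsilon_{\textnormal{stab}}$. To apply stability cleanly I would absorb $z_i$ (which appears in both the training set on one side and the evaluation point on both sides) by first conditioning on everything except $z_i$ and $z'$, using the fact that $z_i$ and $z'$ are i.i.d.\ from $\mathcal{D}$, and then comparing the two conditional expectations via the stability definition applied with evaluation point $z_i$.

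The step that needs the most care is the symmetrization/relabeling: one must be careful that the joint law of $(S, z')$ and the joint law of $(S^{(i)}, z_i)$ genuinely coincide, including over the algorithm's internal randomness, which is true precisely because $\mathcal{A}$'s randomness is independent of the data and the data points are i.i.d. Beyond that, no quantitative work is required, and no structural property of the ZO estimator enters the argument---the lemma is purely a consequence of stability and i.i.d.\ sampling, and will be instantiated in later sections by upper-bounding $\epsilon_{\textnormal{stab}}$ for subspace ZO-SGD.
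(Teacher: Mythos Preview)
Your proof is correct and is precisely the standard ghost-sample/symmetrization argument from \citep{hardt2016train} that the paper cites; the paper itself does not supply a proof for this lemma but simply invokes Theorem~2.2 of that reference, so your write-up is the expected one. The only cosmetic point is that the stability definition as stated in the paper gives a one-sided bound, so to get $|\epsilon_{\textnormal{gen}}|\le\epsilon_{\textnormal{stab}}$ you implicitly use the symmetry of swapping the roles of $S$ and $S^{(i)}$---which is fine, just worth making explicit.
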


Hence, bounding the generalization error reduces to proving the uniform stability of subspace ZO-SGD under suitable assumptions.

\begin{theorem}[Generalization of Subspace ZO-SGD]\label{thm:generalization_zosgd}
    Suppose that \ref{con:conv_smooth}, \ref{con:conv_gradient}, and $\mathrm{srank}(\mathbf{M}_t) \le s$ hold. Further, suppose that Assumption \ref{assumption:intdim} holds for every $\loss_i$ and $\sup_{\bm\theta, \mathbf{z}} |\loss(\bm\theta; \mathbf{z})| \le 1$. Under the following parameter settings $\alpha_t \le \frac{C}{t}$ and $\mu \lesssim \frac{1}{nL(\rho_{\text{max}} + s\exp(-s/2))}$ for some fixed constant $C > 0$, the generalization error bound of subspace ZO-SGD satisfies $\epsilon_{\textnormal{gen}} \le \mathcal{O}\Big(\frac{T^{1-\frac{1}{1+q}}}{n}\Big)$ where $q = CL\sqrt{s}$ does not rely on $d$.
\end{theorem}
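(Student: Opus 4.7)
The plan is to adapt the uniform stability framework of Hardt-Recht-Singer~\citep{hardt2016train} to subspace ZO-SGD and then invoke Lemma~\ref{lemma:gen_stab} to pass from stability to generalization. The first step is to fix two neighboring datasets $S, S' \in \mathcal{Z}^n$ differing at one sample and couple the two runs of subspace ZO-SGD by sharing the perturbation $\mathbf{u}_t$, the subspace matrix $\mathbf{M}_t$, and the minibatch index at every step. Let $\delta_t := \mathbb{E}\|\bm\theta_t-\bm\theta'_t\|$ be taken over this common randomness, and analyze $\delta_{t+1}$ by splitting on whether step $t$ uses the differing sample (probability $1/n$) or a common one (probability $1-1/n$).

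For the common-sample case, I will decompose $\widehat{\mathbf{g}}_t(\bm\theta)-\widehat{\mathbf{g}}_t(\bm\theta') = \tfrac{h_t(\bm\theta)-h_t(\bm\theta')}{2\mu}\mathbf{M}_t\mathbf{u}_t$, with $h_t(\bm\theta) := \loss(\bm\theta+\mu\mathbf{M}_t\mathbf{u}_t;\mathcal{B}_t) - \loss(\bm\theta-\mu\mathbf{M}_t\mathbf{u}_t;\mathcal{B}_t)$, and represent $h_t(\bm\theta)-h_t(\bm\theta')$ as a line integral of a Hessian difference that is dominated by $\mathbf{H}(\bm\theta_t)$ via Assumption~\ref{assumption:intdim}, yielding the almost-sure expansivity
\begin{equation*}
\|\bm\theta_{t+1}-\bm\theta'_{t+1}\| \le \bigl(1+\alpha_t\, \mathbf{u}_t^\top\mathbf{M}_t^\top\mathbf{H}(\bm\theta_t)\mathbf{M}_t\mathbf{u}_t\bigr)\|\bm\theta_t-\bm\theta'_t\|.
\end{equation*}
Taking expectation over $(\mathbf{u}_t,\mathbf{M}_t)$ and splitting into a typical event---on which a Hanson-Wright-type concentration gives $O(\lambda_{\max}(\mathbf{H})\rho_t)$---together with an exponentially small tail contributing $s\exp(-s/2)$ produces an effective, dimension-free expansivity $(1+\alpha_t\tilde{L})$ with $\tilde{L} = O(L\sqrt{s})$ under the stated $\mu$.

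For the differing-sample case, the per-step shift is at most $\alpha_t(\|\widehat{\mathbf{g}}_t(\bm\theta_t;\mathbf{z})\| + \|\widehat{\mathbf{g}}_t(\bm\theta'_t;\mathbf{z}')\|)$; the bounded-loss assumption $\sup|\loss|\le 1$ gives $\|\widehat{\mathbf{g}}_t\|\le\|\mathbf{M}_t\mathbf{u}_t\|/\mu$, and the prescribed $\mu \lesssim 1/(nL(\rho_{\max}+se^{-s/2}))$ keeps the in-expectation shift at $O(\alpha_t/n)$. Combining the two cases produces the master recursion $\delta_{t+1} \le (1+\alpha_t\tilde{L})\delta_t + O(\alpha_t/n)$. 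Letting $t_0$ denote an iteration before which the differing sample is untouched, conditioning on $\delta_{t_0}=0$, and iterating with $\alpha_t\le C/t$ will yield $\delta_T \le O\bigl(\tfrac{1}{n}(T/t_0)^{C\tilde{L}}\bigr)$. The standard Hardt-Recht-Singer decomposition then bounds the stability via bounded loss as $\epsilon_{\text{stab}} \le t_0/n + G\cdot\delta_T$, and optimizing over $t_0$ gives $\epsilon_{\text{stab}} \le O(T^{1-1/(q+1)}/n)$ with $q = C\tilde{L} = CL\sqrt{s}$; Lemma~\ref{lemma:gen_stab} then delivers the claimed bound.

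The main obstacle is proving the dimension-free effective smoothness $\tilde{L} = O(L\sqrt{s})$ rather than the naive $\tilde{L} = O(Ls)$ that would follow from $\mathbb{E}[\mathbf{u}_t^\top\mathbf{M}_t^\top\mathbf{H}\mathbf{M}_t\mathbf{u}_t] = \lambda_{\max}(\mathbf{H})\rho_t \le Ls$. Extracting the $\sqrt{s}$ scaling requires a concentration-based truncation that couples Assumption~\ref{assumption:intdim} on the typical event with the $s\exp(-s/2)$ tail correction---an estimate whose form is mirrored in the prescribed bound on $\mu$. A further subtle point is ensuring the requisite independence between $\delta_t$ and $(\mathbf{u}_t,\mathbf{M}_t)$ so that the conditional expectations in the master recursion factor cleanly through the iteration.
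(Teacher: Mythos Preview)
Your overall stability framework is correct, but the central step—extracting the expansivity $\tilde L = O(L\sqrt s)$—rests on the wrong mechanism, and your differing-sample estimate also fails to close.

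\textbf{The $\sqrt s$ does not come from the Hessian quadratic form.} Your almost-sure bound $(1+\alpha_t\,\mathbf{u}_t^\top\mathbf{M}_t^\top\mathbf{H}\mathbf{M}_t\mathbf{u}_t)\,\delta_t$ is already imprecise (the line integral of $h_t(\bm\theta)-h_t(\bm\theta')$ produces a bilinear form in $(\bm\theta_t-\bm\theta_t')$ and $\mathbf{M}_t\mathbf{u}_t$, not the displayed quadratic form), but even granting it, $\mathbb E[\mathbf{u}_t^\top\mathbf{M}_t^\top\mathbf{H}\mathbf{M}_t\mathbf{u}_t]=\Tr(\mathbf{M}_t^\top\mathbf{H}\mathbf{M}_t)=\lambda_{\max}(\mathbf{H})\rho_t\le Ls$. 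Hanson--Wright only tells you the quadratic form concentrates \emph{around} that mean; no truncation can push the mean from $Ls$ to $L\sqrt s$. The paper obtains $\sqrt s$ from a completely different place: decompose
\[
\widehat{\mathbf g}_t(\bm\theta)=\langle\nabla\loss_{i_t}(\bm\theta),\mathbf{M}_t\mathbf{u}_t\rangle\,\mathbf{M}_t\mathbf{u}_t+\text{(Taylor residuals of size }O(\mu)\text{)},
\]
so the difference of the first-order parts is $\langle\nabla\loss_{i_t}(\bm\theta_t)-\nabla\loss_{i_t}(\bm\theta_t'),\mathbf{M}_t\mathbf{u}_t\rangle\,\mathbf{M}_t\mathbf{u}_t$. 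The Gaussian moment identity (Lemma~\ref{lemma:gaussian_inner_prod}) gives $\mathbb E\big[\|\langle\mathbf a,\mathbf{M}_t\mathbf{u}_t\rangle\mathbf{M}_t\mathbf{u}_t\|^2\big]\le(s+2)\|\mathbf a\|^2$, hence in expectation this piece is $\lesssim\sqrt s\,\|\nabla\loss_{i_t}(\bm\theta_t)-\nabla\loss_{i_t}(\bm\theta_t')\|\le L\sqrt s\,\delta_t$. The Taylor residuals contribute only an \emph{additive} $\mu$-scaled constant, not a multiplicative factor of $\delta_t$; Assumption~\ref{assumption:intdim} on the typical event and $L$-smoothness on its complement bound that additive piece by $\sqrt{3s}\,\mu L(\rho_{\max}+s e^{-s/2})$, and the stated $\mu$ condition absorbs it into $O(\sqrt s/n)$. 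The recursion is then $\Delta_{t+1}\le(1+\alpha_t L\sqrt s)\Delta_t+O(\alpha_t\sqrt s/n)$, after which your endgame (unroll with $\alpha_t\le C/t$, optimize $t_0$) is exactly right.

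\textbf{Differing-sample shift.} Using $\sup|\loss|\le1$ to bound $\|\widehat{\mathbf g}_t\|\le\|\mathbf M_t\mathbf u_t\|/\mu$ goes the wrong way: with $\mu\lesssim1/(nL\rho_{\max})$ this gives $\mathbb E\|\widehat{\mathbf g}_t\|\gtrsim nL\rho_{\max}\sqrt s$, so after multiplying by the $1/n$ probability the additive term is $O(\alpha_t L\rho_{\max}\sqrt s)$, not $O(\alpha_t/n)$, and the final bound loses its $1/n$ scaling entirely. The paper instead uses the gradient bound $\|\nabla\loss\|\le G$ from \ref{con:conv_gradient}: the same first-order/residual split on a single $\widehat{\mathbf g}_t$ yields $\mathbb E\|\widehat{\mathbf g}_t\|\lesssim\sqrt s\,\mu L(\rho_{\max}+se^{-s/2})+G\sqrt s$, so the shift is $O(\alpha_t G\sqrt s/n)$ as needed. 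The bounded-loss hypothesis is used only at the very end, in the $t_0/n$ term of the Hardt--Recht--Singer decomposition.
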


\textbf{On the Constant $q$.}~ A recent study~\citep{liu2024general} applies the uniform stability to ZO algorithms, thereby deriving generalization error bounds for various ZO gradient estimators. However, in that study, the parameter $q = \mathcal{O}(\sqrt{d})$ grows polynomially w.r.t. $d$, which ultimately yields a generalization error bound in the order of $\mathcal{O}(T/n)$ since $\frac{1}{1+q} \approx 0$. Moreover, the previous analysis requires $\mu = \mathcal{O}(1/nLd)$ which is tiny in extremely high-dimensional spaces (ex. fine-tuning LLMs). In contrast, under Theorem \ref{thm:generalization_zosgd}, which exploits the notion of stable rank with local intrinsic dimension, $q$ depends on $s$ while significantly alleviating the order of $\mu$ as $\mu = \mathcal{O}(1/nL\rho_{\text{max}})$. Consequently, we achieve much tighter bound than $\mathcal{O}(T/n)$, especially under $s \ll d$ and Assumption \ref{assumption:intdim}.

\subsection{Empirical Validation}\label{subsec:empirical_validation}

In this section, we empirically substantiate the key theoretical insights from preceding sections using the randomized quadratic minimization problem, $\min_{\mathbf{\theta}\in\mathbb{R}^d}\frac{1}{2}\mathbf{\theta}^\top\mathbf{H}\mathbf{\theta}$ with a synthetically generated positive semi-definite Hessian $\mathbf{H}$. This tractable testbed, commonly used in prior theoretical validation studies~\citep{sun2021worst,zhang2024adam}, allows for direct verification of our claims. Details are provided in Appendix~\ref{app:exp_detail_empirical}.

\begin{figure}[t]
    \centering
    \subfigure{\includegraphics[width=0.325\linewidth]{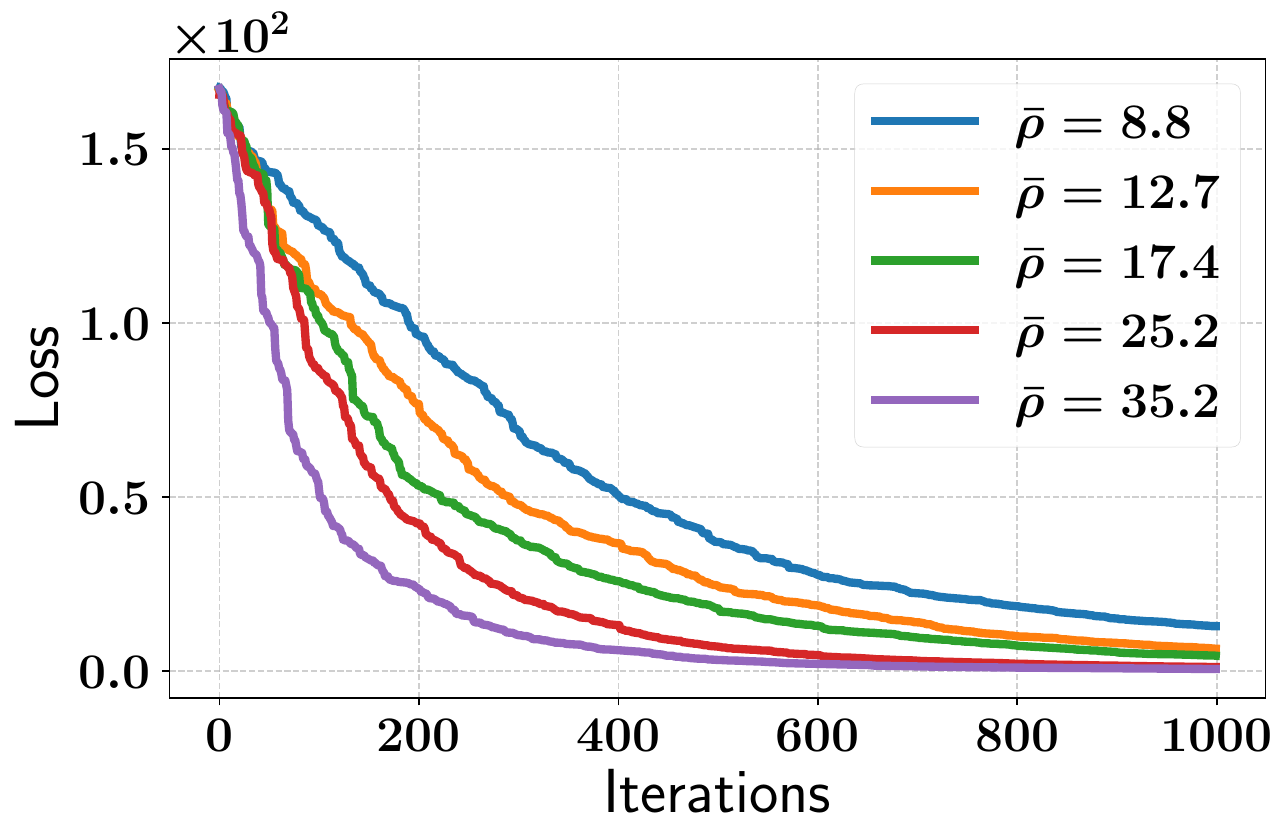}}
    \subfigure{\includegraphics[width=0.325\linewidth]{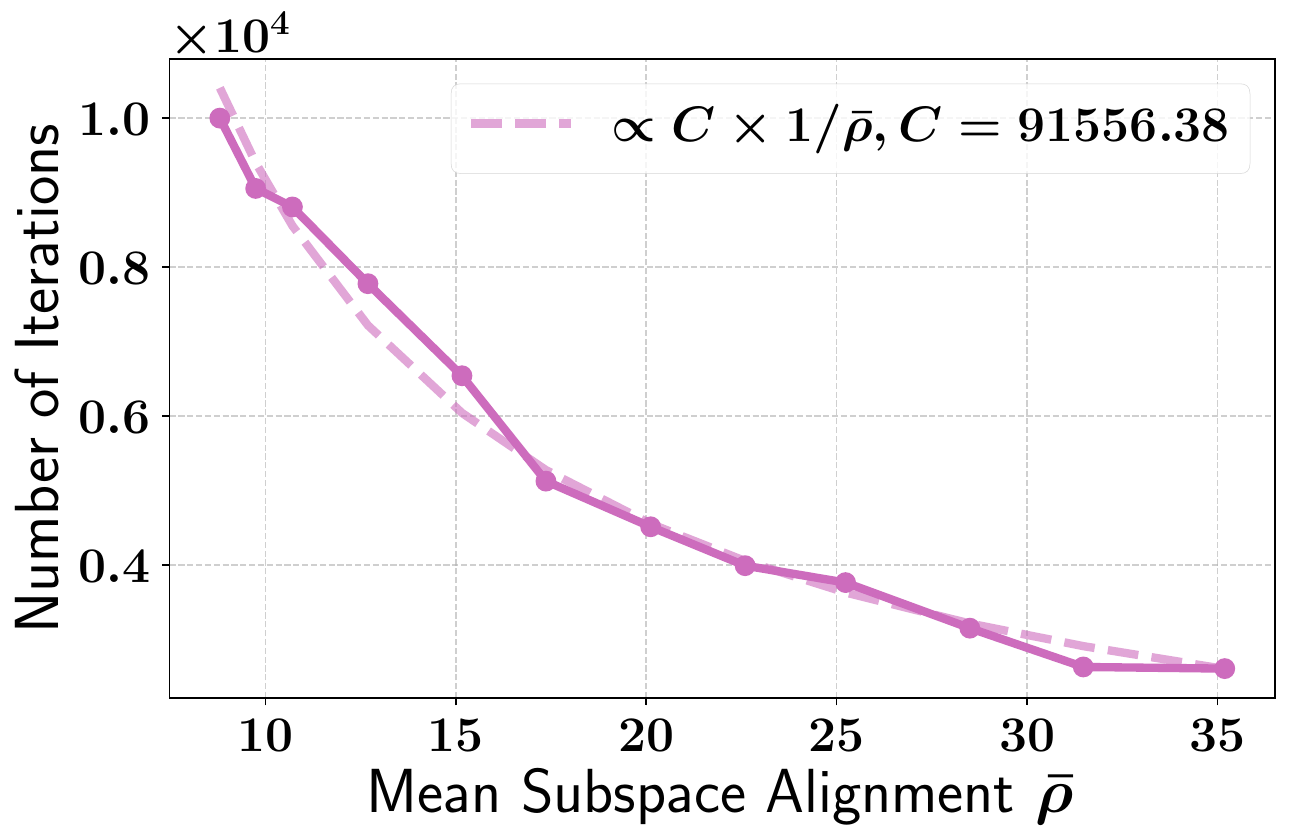}}
    \subfigure{\includegraphics[width=0.32\linewidth]{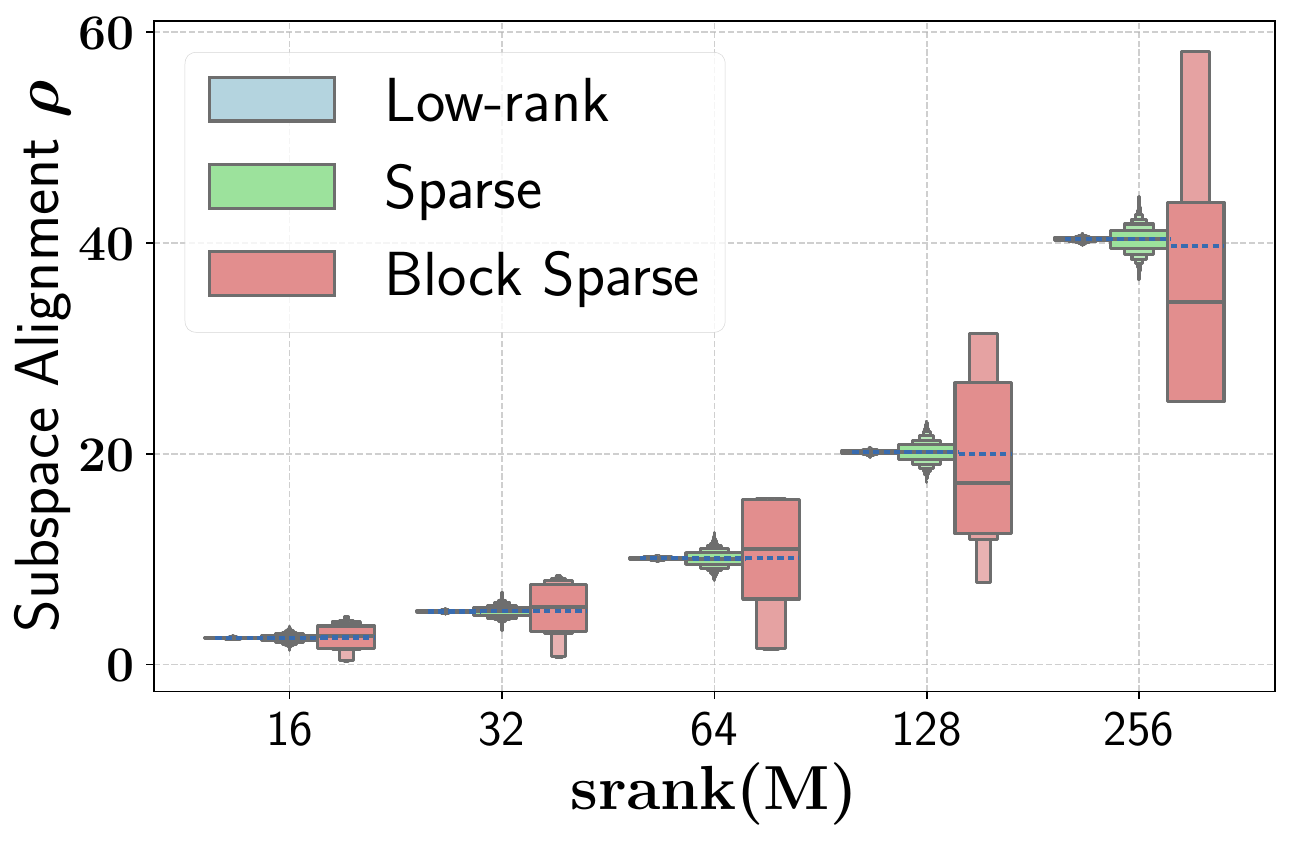}}
    \vspace{-5pt}
    \caption{Empirical validation of theoretical insights on randomized quadratic minimization. \textbf{\emph{Left}}: Higher $\bar\rho$ values yield faster convergence, as predicted ($s$ is fixed).  \textbf{\textit{Middle}}: Iterations required to reach a fixed loss target decrease proportionally with $1/\bar\rho$. \textbf{\textit{Right}}: Distribution of $\rho$ under three $\mathbf{M}$ types shows matching means but different concentration patterns. Horizontal blue dashed lines indicate the $\bar{\rho}$ for each method. Supplementary results are provided in Appendix~\ref{app:supp_empirical_study}.}
    \label{fig:curves_rho}
    \vspace{-15pt}
\end{figure}

\textbf{Effect of mean subspace alignment $\mathbf{\bar\rho}$.}~ Our theoretical framework (Theorem~\ref{thm:improved_convergence_zosgd}) posits a critical role for mean subspace alignment $\bar\rho$ in determining convergence speed. We first validate this by modulating $\bar\rho$ in a controlled experiment using a rank-$64$ dense Hessian with $d=256$ and a low-rank projection with $s=64$ as $\mathbf{M}$. Figure~\ref{fig:curves_rho} (Left) confirms that higher $\bar\rho$ values indeed lead to faster convergence. Consistently, (Middle) demonstrates that the number of iterations required to reach a target loss (set to be the slowest one's final loss) is inversely proportional to $\bar\rho$, directly corroborating our theory that the convergence rate scales with $1/\bar\rho$.

\textbf{Distribution of subspace alignment $\mathbf{\rho}$.}~ Next, we investigate the practical demonstrations of Propositions~\ref{prop:expected_rho} and~\ref{prop:prob_rho}, which concern the expected value and distributional properties of $\rho$ for different instantiations of $\mathbf{M}$. We consider three canonical instantiations provided in Definition~\ref{def:instantiation}. For this, we use a block-diagonal Hessian ($d=1024$, $16$ blocks, each rank $64$) with heterogeneous eigenspectra across blocks, a structure often used to approximate neural network Hessians~\citep{collobert2004large,zhang2024transformers,zhang2024adam}.

Figure~\ref{fig:curves_rho} (right) presents the empirical distributions of $\rho$ from 1,000 trials, revealing two key observations consistent with our theory. \textbf{First}, for any fixed $\mathrm{srank}(\mathbf{M})$, the mean subspace alignment $\widebar{\rho}$ (blue dashed lines) is nearly identical across all three methods, confirming Proposition~\ref{prop:expected_rho} and implying comparable average convergence under purely random selection (see Appendix F.2). \textbf{Second}, although their means coincide, the distributions of $\rho$ differ markedly: \emph{low-rank projection and sparse perturbation exhibit tight concentration around the mean, whereas block sparse perturbation shows a broader spread} owing to heterogeneity in Hessian blocks (a \emph{strong} block yields large $\rho$, and a \emph{weak} block small $\rho$). This aligns with Proposition~\ref{prop:prob_rho}’s assertion regarding differing distributional behaviors. Finally, $\rho$ generally increases with $\mathrm{srank}(\mathbf{M})$ in all methods, as intuitively expected.

\section{Revisiting Block Coordinate Descent for Zeroth-order Optimization}\label{sec:revisiting}

Our theoretical analysis showed that many subspace perturbation schemes, despite structural differences, achieve similar convergence in expectation. This intriguing finding prompts a pivotal question regarding method selection: \emph{If theory offers no clear winner, what factors matter most in practice, particularly for large-scale optimization?} In such scenarios, pragmatic considerations (including memory usage and execution time) emerge as the decisive differentiators. Guided by this imperative for practical efficacy, we revisit the Block Coordinate Descent (BCD) paradigm in this section, an approach rooted in the block sparse perturbation.

\subsection{Why Block Coordinate Descent?}\label{subsec:why_bcd}

\textbf{Memory transfer is the key bottleneck in large-scale ZO.}~ Zeroth-order optimization, by virtue of its forward-only nature, fundamentally shares the memory-bound bottleneck commonly observed in LLM inference~\citep{kwon2023efficient, kim2023squeezellm, gholami2024ai}. For large models, the time spent loading parameters to memory can far exceed actual compute time, making memory bandwidth the dominant factor in overall latency. This challenge is even exacerbated in zeroth-order methods: a single iteration requires two forward evaluations, two parameter perturbations, and one parameter update, each involving the entire parameter. As a result, the total parameter loading per step is effectively equivalent to $5\times$ the inference, a substantial overhead, especially for large-scale models.

\textbf{BCD can reduce the memory transfer.}~ BCD offers a practical solution to this issue by restricting parameter perturbations and updates to a single block at each iteration. While the forward passes still involve the full model, only the selected block is loaded and modified during the perturbation and update phases. If we partition the parameters into $N$ equally sized blocks, the total parameter load per iteration is reduced to $2d + 3d/N$ (two full-model forwards plus block-local perturb and update), compared to the $5d$ required by other ZO methods. This translates into a tangible reduction in memory transfer, leading to faster wall-clock training and thus improved scalability.

Importantly, BCD retains the same convergence guarantees as other qualifying subspace methods~\citep{liu2024sparse, chen2024enhancing}. With its added practical efficiency, it stands out as a particularly attractive choice when theoretical distinctions are absent.

\subsection{MeZO-BCD: Translating Block Sparse Perturbation into a Scalable ZO Method}\label{subsec:mezo-bcd}

Building on the practical advantages of block-wise perturbation discussed above, we now present \emph{MeZO-BCD}, a concrete implementation designed to optimize large-scale LLMs under the zeroth-order paradigm. While the theoretical formulation in Section~\ref{sec:theory_intuition} abstracts perturbation as a projection onto a random subspace, MeZO-BCD operationalizes this by applying perturbation and updates to only one block per iteration.

\textbf{Overall Procedure.}~ Let $\bm\theta_t = [\bm\theta_t^{(1)}, \dots, \bm\theta_t^{(N)}] \in \mathbb{R}^d$ denote model parameters partitioned into $N$ disjoint blocks. Each block typically corresponds to a functional unit of LLMs, such as a transformer decoder layer (comprising attention, FFN, normalization), embedding layer, or output head~\citep{zhang2022opt, touvron2023llama}.

At each iteration $t$, a block index $j_t$ is chosen, and a local perturbation $\mathbf{u}_t \sim \mathcal{N}(\bm{0}, \mathbf{I}_{|\bm\theta_t^{(j_t)}|})$ is sampled using a fixed seed $s_t$. The block-local SPSA estimator is:
\[
    \widehat{\mathbf{g}}_t = \frac{\mathcal{L}(\bm\theta_t + \mu \cdot \mathbf{e}_{B_{j_t}}(\mathbf{u}_t); \mathcal{B}_t) - \mathcal{L}(\bm\theta_t - \mu \cdot \mathbf{e}_{B_{j_t}}(\mathbf{u}_t); \mathcal{B}_t)}{2\mu}\mathbf{u}_t,
\]
where $\mathbf{e}_{B_{j_t}}(\mathbf{u}_t)$ embeds the block-local perturbation into the full parameter space. In practice, no actual zero-padded vector is instantiated as only the relevant block is accessed and modified.

Using the same seed $s_t$, we regenerate $\mathbf{u}_t$ for consistency during the update:
\[
    \bm\theta_{t+1}^{(j_t)} = \bm\theta_t^{(j_t)} - \eta_t \cdot \widehat{\mathbf{g}}_t, \quad \bm\theta_{t+1}^{(k)} = \bm\theta_t^{(k)} \text{ for } k \neq j_t.
\]
This localized update significantly reduces memory transfer overhead, as discussed above, while preserving theoretical convergence behavior.

\textbf{Block selection strategy.}~ While the theory assumes i.i.d. uniform sampling of blocks, such sampling can lead to imbalanced short-term coverage in practice. To address this while preserving the marginal distribution $\mathbb{P}(j_t = j) = 1/N$, we adopt a \emph{cyclic random permutation} strategy, where all blocks are visited exactly once per cycle in a randomly shuffled order. This avoids starvation and maintains the theoretical convergence guarantees. Further details on MeZO-BCD are discussed in Appendix~\ref{app:mezo-bcd}.

\subsection{Experiments on LLM fine-tuning}\label{subsec:llm_exp}

\begin{table*}[t]
    \centering
    \caption{Comparison of peak memory usage and iteration time across methods. Peak memory (GB) is reported on different datasets using OPT-13B, and iteration time (seconds) is measured on SST-2 across various model sizes, with speedup relative to MeZO shown in parentheses. All results are averaged over 100 training steps with a batch size of 16, using a single A100 80GB PCIe.}
    \label{tab:time_mem_comparison}
    \setlength{\tabcolsep}{5pt}
    \resizebox{0.9\textwidth}{!}{
        \begin{tabular}{l|ccc|ccc}
        \toprule
        \multirow{2}{*}{Method} & \multicolumn{3}{c|}{\textbf{Iteration Time (s)}} & \multicolumn{3}{c}{\textbf{Peak Memory (GB)}} \\
        \cmidrule{2-7}
         & OPT-125M & OPT-1.3B & OPT-13B & SST-2 & BoolQ & MultiRC \\
        \midrule
        MeZO & $0.0621~(\times1.00)$ & $0.134~(\times1.00)$ & $1.04~(\times1.00)$ & $25.9$ & $33.6$ & $43.8$ \\
        SparseMeZO & $0.0961~(\times0.66)$ & $0.203~(\times0.66)$ & $1.60~(\times0.65)$ & $26.1$ & $33.6$ & $43.8$ \\
        LoZO & $0.0804~(\times0.77)$ & $0.146~(\times0.92)$ & $0.966~(\times1.08)$ & $25.5$ & $33.6$ & $43.8$ \\
        MeZO-BCD & $\mathbf{0.0339~(\times1.83)}$ & $\mathbf{0.0637~(\times2.10)}$ & $\mathbf{0.375~(\times2.77)}$ & $25.3$ & $33.6$ & $43.8$ \\
        \bottomrule
        \end{tabular}
        \vspace{-5pt}
    }
\end{table*}

\begin{figure}[t]
    \centering
    \includegraphics[width=0.4\linewidth]{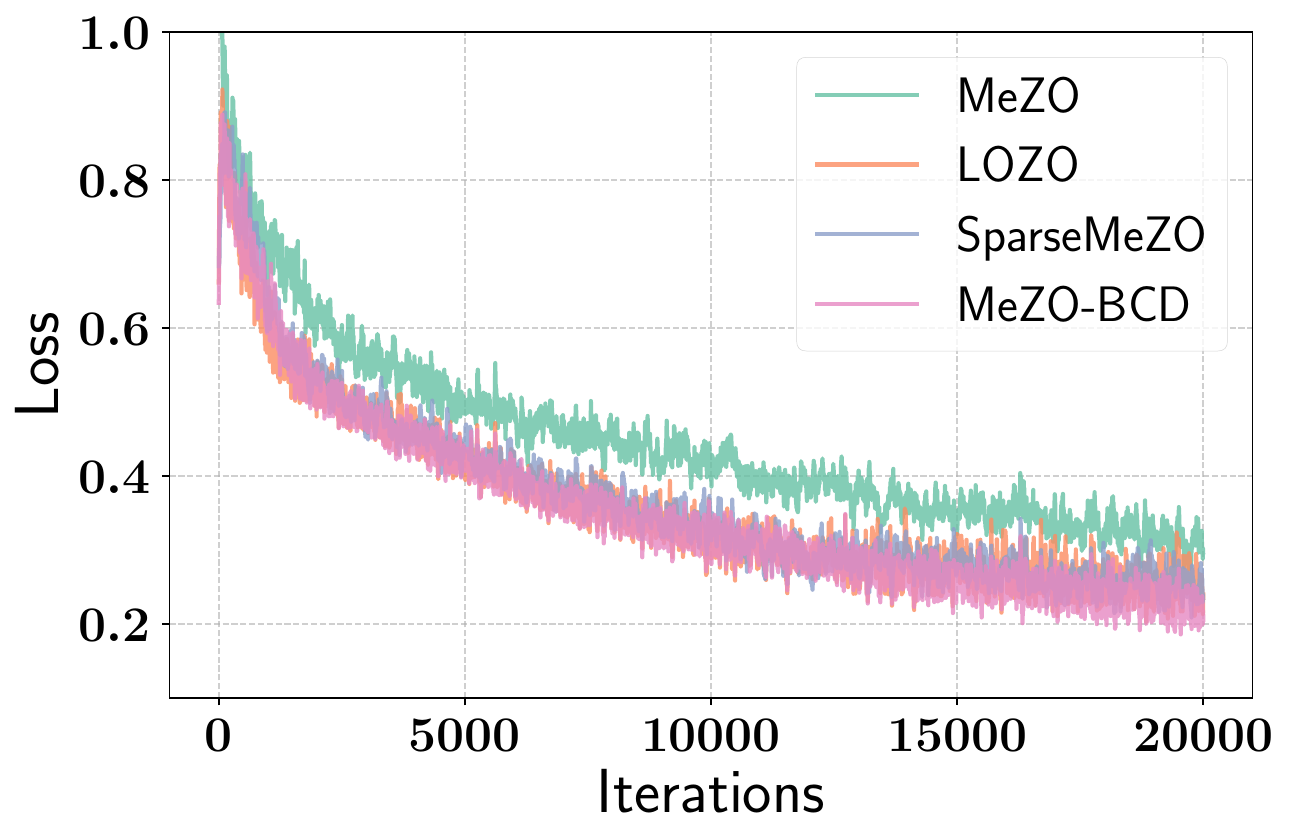}
    \hspace{0.1cm} 
    \includegraphics[width=0.4\linewidth]{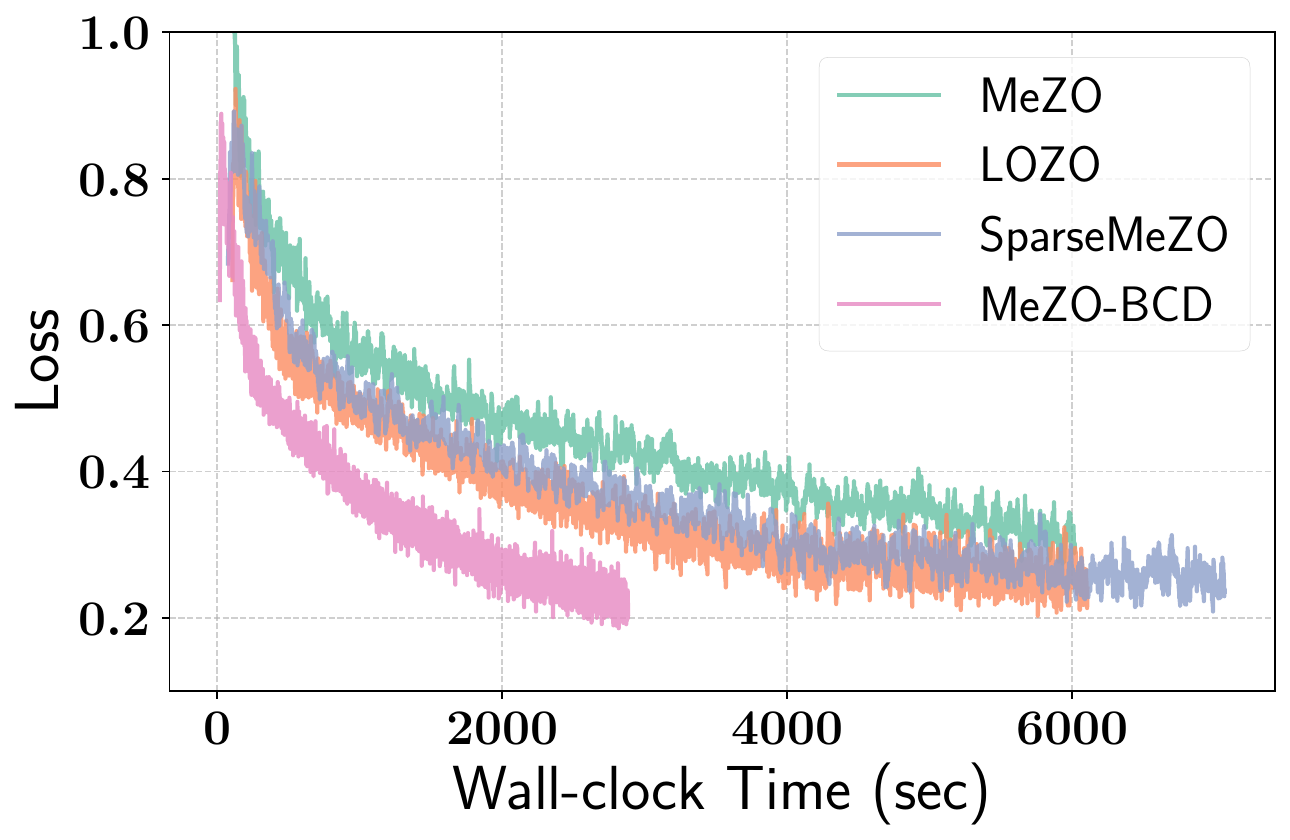}
    \vskip -5pt
    \caption{
        Comparison of training loss curves in terms of iterations (\textit{left}) and wall-clock time (\textit{middle}) on OPT-1.3B~\citep{zhang2022opt} fine-tuning with SST-2~\citep{sst2} for different methods.
    }
    \label{fig:opt-1.3b}
    \vspace{-20pt}
\end{figure}

To empirically demonstrate the practical advantages of MeZO-BCD, particularly its enhanced efficiency and competitive performance, we conduct extensive fine-tuning experiments on LLMs. Key results are presented in Figure~\ref{fig:opt-1.3b} and Table~\ref{tab:time_mem_comparison},~\ref{tab:opt-13b}. Detailed experimental setups, including datasets, hyperparameters and model specifics are provided in Appendix~\ref{app:exp_detail_opt}.

\textbf{MeZO-BCD exhibits superior wall-clock time efficiency.}~ The most significant empirical validation of our approach is MeZO-BCD's superior wall-clock time efficiency, as shown in Figure~\ref{fig:opt-1.3b} (Right) and Table~\ref{tab:time_mem_comparison}. Across different model scales, MeZO-BCD consistently has less iteration time compared to other ZO methods like MeZO, LoZO, and SparseMeZO. Notably, as the model size increases, the acceleration gain becomes higher due to the increased parameter loading overhead. This directly substantiates our central argument from Section~\ref{subsec:why_bcd} that MeZO-BCD's design significantly reduces memory loading overhead, translating into tangible training speedups.

\textbf{MeZO-BCD incurs no additional memory compared to MeZO.}~ MeZO-BCD operates under the same forward-only regime as MeZO but perturbs and updates only a single block at each step. As a result, it avoids any increase in memory footprint. This is empirically confirmed in Table~\ref{tab:time_mem_comparison}, where the peak memory remains effectively identical across different methods\protect\footnotemark.

\footnotetext{Minor differences in peak memory across methods (e.g., on SST-2) are due to system-level factors such as CUDA memory fragmentation or transient buffer allocation, rather than algorithmic differences.}

\begin{table*}[t]
\centering
\caption{Performance of fine-tuning OPT-13B on 1000 examples across various tasks. ICL denotes in-context learning, while FT denotes full fine-tuning via first-order Adam~\citep{KingBa15}. The best results except FT are highlighted in \textbf{bold}. Results for baselines are taken from LoZO~\citep{chen2024enhancing}.}\label{tab:opt-13b}
\resizebox{1.0\linewidth}{!}{
    \begin{tabular}{lccccccccccc}
    \toprule
     Task &\textbf{SST-2} & \textbf{RTE} & \textbf{CB} & \textbf{BoolQ} & \textbf{WSC} & \textbf{WIC}	& \textbf{MultiRC} & \textbf{COPA} & \textbf{ReCoRD} & \textbf{SQuAD} & \textbf{DROP} \\
    Task type & \multicolumn{7}{c}{----------------------------- classification -----------------------------} & \multicolumn{2}{c}{-- multiple choice --} & \multicolumn{2}{c}{--- generation ---}\\
    \midrule
    Zero-shot & 
    $58.8$ & $59.6$ & $46.4$ & $59.0$ & $38.5$ & $55.0$ & $46.9$ & $80.0$ & $81.0$ & $46.2$ & $14.6$ \\
    ICL & 
    $87.0$ & $62.1$ & $57.1$ & $66.9$ & $39.4$ & $50.5$ & $53.1$ & $87.0$ & $\mathbf{82.3}$ & $75.9$ & $29.5$ \\
    \midrule
    MeZO & 
    $91.3$ & $68.2$ & $66.1$ & $68.1$ & $61.5$ & $59.4$ & $59.4$ & $88.0$ & $81.3$ & $81.8$ & $\mathbf{31.3}$ \\
    LoZO & 
    $91.7$ & $\mathbf{70.4}$ & $69.6$ & $71.9$ & $\mathbf{63.5}$ & $60.8$ & $\mathbf{63.0}$ & $\mathbf{89.0}$ & $81.3$ & $\mathbf{84.9}$ & $30.7$ \\
    MeZO-BCD & 
    $\mathbf{93.0}$ & $70.0$ & $\mathbf{71.4}$ & $\mathbf{72.4}$ & $\mathbf{63.5}$ & $\mathbf{61.9}$ & $62.8$ & $\mathbf{89.0}$ & $\mathbf{81.6}$ & $83.7$ & $31.0$ \\
    \midrule
    FT & 
    $91.8$ & $70.9$ & $84.1$ & $76.9$ & $63.5$ & $70.1$ & $71.1$ & $79.0$ & $74.1$ & $84.9$ & $31.3$ \\
    \bottomrule
    \end{tabular}}
    %\vskip -15pt
\end{table*}

\textbf{MeZO-BCD shows competitive convergence and fine-tuning performance.}~ Notably, this efficiency gain does not compromise iteration complexity or fine-tuning performance. As shown by the training loss curves in Figure~\ref{fig:opt-1.3b} (left), MeZO-BCD converges on par with baseline ZO methods. Furthermore, Table~\ref{tab:opt-13b} demonstrates MeZO-BCD's strong fine-tuning performance on a diverse suite of downstream tasks with an OPT-13B~\citep{zhang2022opt} model, where it frequently outperforms other ZO methods~\citep{malladi2023fine,chen2024enhancing}\protect\footnotemark. Additional experimental results are provided in Appendix~\ref{app:stat_sig}.

\footnotetext{Since SparseMeZO's official implementation is unavailable, and our implementation cannot fully reproduce its reported performance in~\citep{liu2024sparse}, we exclude this baseline to maintain a fair comparison.}

\section{Discussion}
\subsection{MeZO-BCD as a Scalable Foundation for Zeroth-Order Optimization}

\paragraph{Adaptive Block Selection.}
While our theoretical analysis (Propositions~\ref{prop:expected_rho} and~\ref{prop:prob_rho}) shows that various subspace perturbation schemes share similar expected convergence rates, their subspace alignment $\rho$ can differ substantially in distribution. This highlights a natural opportunity for improvement via adaptive selection of subspaces (specifically, blocks in the case of MeZO-BCD). Compared to low-rank or sparse perturbation, where adaptivity requires searching over continuous subspaces or combinatorial coordinate subsets, MeZO-BCD operates on a small set of architecturally defined, semantically meaningful blocks. This makes adaptive scheduling more tractable and interpretable in practice. Moreover, since blocks often differ in curvature or gradient scale across layers in LLMs~\citep{zhang2024transformers,zhang2024adam}, MeZO-BCD provides a well-suited interface for learning dynamics-aware scheduling. We leave the empirical realization of this idea as promising future work.

\paragraph{Integration with Stateful Optimizers.}
Stateful optimizers such as Adam~\citep{KingBa15} have been instrumental in first-order optimization, but their extension to zeroth-order methods has shown limited benefits. Previous works~\citep{malladi2023fine,zo-bench} found marginal performance improvements and substantial memory overhead due to per-parameter optimizer state storage. MeZO-BCD offers a new path forward: by perturbing and updating only one block at a time, optimizer states can be stored and updated locally for the active block, without incurring full-model memory costs. This enables practical and scalable use of stateful optimizers in ZO settings, potentially opening a new line of research into adaptive gradient-free methods that leverage momentum, variance correction, or block-wise learning rates in a resource-efficient way. Preliminary investigations on the future directions are provided in Appendix~\ref{app:future_directions}.

\subsection{Limitations}

Despite its scalability and extensibility, MeZO-BCD has several limitations. First, while our theoretical framework provides general insights into the role of subspace perturbations and dimensionality, it does not offer task-specific guidance on optimal perturbation structures or block sizes. Second, although we highlight promising extensions including adaptive block selection and integration with stateful optimizers, this work focuses on establishing the core algorithm and validating its foundational benefits. Empirical investigation of these directions remains an important avenue for future work.

\section{Conclusion}

In this work, we introduced a unified theoretical framework for understanding zeroth-order optimization under subspace perturbations, connecting convergence and generalization through the notion of subspace alignment. Our analysis provides new insights into how different perturbation strategies behave and what distinguishes them beyond that. Building on the theoretical insights, we proposed MeZO-BCD, a block coordinate descent method for zeroth-order optimization that achieves substantial practical speedups. Our experiments on LLM fine-tuning confirm its efficiency and competitiveness, and its structured design opens promising directions for future research, including adaptive block selection and integration with stateful optimizers.

\bibliographystyle{abbrvnat_lastname_first}
\bibliography{reference}

%%%%%%%%%%%%%%%%%%%%%%%%%%%%%%%%%%%%%%%%%%%%%%%%%%%%%%%%%%%%

%%%% Appendix

%%%%%%%%%%%%%%%%%%%%%%%%%%%%%%%%%%%%%%%%%%%%%%%%%%%%%%%%%%%%

\clearpage
\appendix
\addcontentsline{toc}{section}{Appendix}
\part{\LARGE Supplementary Materials}

\parttoc

\section*{Broader Impact Statement}

This work improves the efficiency and scalability of zeroth-order optimization for fine-tuning large language models, offering a practical alternative to gradient-based approaches in resource-constrained settings. By lowering computational and memory demands, it may help reduce the environmental footprint of large-scale AI training and broaden access to advanced machine learning technologies. However, expanding access to powerful fine-tuning tools also raises societal concerns, including the risk of misuse or unintended consequences from customized language models. We advocate for responsible usage, with attention to transparency, fairness, and ongoing evaluation of social impact as such methods are deployed in practice.

\clearpage
\section{Related Works}
\paragraph{Convergence and Generalization in Zeroth-order optimization.}
Theoretical analysis of zeroth-order (ZO) optimization has been an active research area, with extensive work on its convergence properties across various settings. Early studies established fundamental complexity results for both first-order and zeroth-order methods using Gaussian smoothing~\citep{ghadimi2013stochastic}. Later, optimal convergence bounds for zeroth-order stochastic gradient descent (SGD) were derived for convex objectives~\citep{duchi2015optimal} and extended to Gaussian-smoothing-based approaches~\citep{nesterov2017random}. In nonconvex settings, variance-reduced ZO optimization methods such as ZO-SVRG and ZO-SPIDER were introduced to improve convergence rates and function query efficiency~\citep{ji2019improved}. More recent works have analyzed ZO optimization under finite-difference, linear interpolation, and various smoothing schemes~\citep{berahas2022theoretical}, while others have investigated its behavior in non-smooth regimes, providing sample complexity bounds~\citep{zhang2020complexity, davis2022gradient} and demonstrating that non-smooth ZO optimization is not necessarily more difficult than the smooth case~\citep{rando2023an, kornowski2024algorithm}.

Beyond convergence, the generalization properties of ZO optimization remain underexplored. While existing theoretical studies primarily focus on convergence rates and complexity bounds, fewer works provide a rigorous understanding of how zeroth-order methods generalize in high-dimensional settings. Recent efforts have applied stability-based analyses to first-order optimization, revealing the role of noise in controlling generalization error~\citep{hardt2016train, lei2023stability}. However, extending such frameworks to ZO optimization poses unique challenges due to the inherently higher variance of zeroth-order gradient estimates. Some studies have begun addressing this issue, examining stability bounds for ZO methods~\citep{liu2024general}, but a comprehensive theoretical framework connecting structured perturbations to both convergence and generalization is still lacking.

\paragraph{Fine-tuning Large Language Models with Zeroth-order Optimization.}
Zeroth-order (ZO) optimization, a long-standing technique in conventional machine learning~\citep{spall1992multivariate, ghadimi2013stochastic}, was first explored for large language model (LLM) fine-tuning by MeZO~\citep{malladi2023fine}. By generating perturbations dynamically using random seeds, MeZO eliminates the need to store large perturbation matrices, significantly reducing memory overhead. This allows ZO optimization to operate with memory requirements comparable to inference. While MeZO provides theoretical guarantees, suggesting that scaling the learning rate with respect to the problem dimension ensures dimension-free convergence, this approach inherently results in slow convergence.

Several approaches have been proposed to improve the efficiency of ZO optimization. MeZO-SVRG~\citep{gautam2024variance} integrates the SVRG algorithm to reduce gradient variance, leading to faster convergence and improved fine-tuning performance. SparseMeZO~\citep{liu2024sparse} enhances computational efficiency by updating only a subset of parameters, while Fisher-informed sparsity~\citep{guo2024zeroth} prioritizes updates based on Fisher information. A benchmarking study~\citep{zo-bench} evaluates various ZO methods, including SGD, SignSGD, and Adam. More recently, LOZO~\citep{chen2024enhancing} introduced low-rank perturbations, leveraging the low intrinsic dimensionality of gradients in LLMs to improve convergence and fine-tuning performance.

Building on these works, this work develops a unified theoretical framework that explains how structured perturbations improve both convergence and generalization in ZO optimization. Through this perspective, block coordinate descent (BCD) is identified as an effective structured perturbation strategy, and its advantages are validated through theoretical analysis and empirical evaluation.
\clearpage
\section{Proof for Convergence}
In this section, we provide the theoretical analysis of Section \ref{sec:theory_intuition} in detail.

\subsection{Discussion on the Use of Assumption~\ref{assumption:intdim}}\label{app:assumption}
Assumption~\ref{assumption:intdim} is well-justified in the context of large-scale language model optimization. A closely related, but even stronger, assumption was already adopted in MeZO~\citep{malladi2023fine}, where it was supported both theoretically and empirically. Specifically, MeZO justified this condition by noting that the Hessian of the loss for deep neural networks (especially during fine-tuning) often exhibits a remarkably low local effective rank~\citep{papyan2018full,papyan2020traces,ghorbani2019investigation,wu2020dissecting, sagun2017empirical, zhang2024adam, zhang2024transformers}, and that even for billion-parameter models, zeroth-order optimization remains effective, as also evidenced by the rapid convergence observed in practice. These empirical phenomena are further supported by numerous studies demonstrating that the Hessian spectrum is highly concentrated near zero, with only a handful of large outliers providing an effective upper bound on the local rank.

Our formulation is strictly less restrictive than MeZO’s, as it only requires the assumption to hold within a local $\mathcal{O}(s)$-ball and allows for a strictly positive smoothing parameter $\mu$. Given that the original assumption was already validated in prior work, adopting this weaker version in our analysis is not only natural but also better aligned with the practical realities of large-scale model fine-tuning.

\subsection{Preliminaries}
Recall that our zeroth-order gradient estimator and the subspace ZO-SGD.
\begin{align*}
    \grad_t & = \frac{\loss_{\mathcal{B}_t}(\bm\theta_t + \mu \mathbf{M}_t \mathbf{u}_t) - \loss_{\batch_t}(\bm\theta_t - \mu \mathbf{M}_t\mathbf{u}_t)}{2\mu} \mathbf{M}_t\mathbf{u}_t \\
    \bm\theta_{t+1} & = \bm\theta_t - \alpha \grad_t 
\end{align*}
where $\loss_{\batch_t}(\bm\theta_t) \coloneqq \frac{1}{|\batch_t|}\sum\limits_{i\in\batch_t} \loss(\bm\theta_t; \mathbf{z}_i)$ is the loss function evaluated over the minibatch sample $\batch_t$ at time $t$.

\paragraph{Gaussian Smoothing.} The zeroth-order gradient estimate $\grad_t$ is unbiased estimator of the following smoothed loss.
\begin{align*}
    \loss_\mu(\bm\theta) & \coloneqq \mathbb{E}_{\mathbf{u} \sim \mathcal{N}(\bm 0, \mathbf{I}_d)}\big[\loss(\bm\theta + \mu \mathbf{u})\big] \\
    \loss_{\mu, \mathbf{M}_t}(\bm\theta) & \coloneqq \mathbb{E}_{\mathbf{u} \sim \mathcal{N}(\bm 0, \mathbf{I}_d)}\big[\loss(\bm\theta + \mu \mathbf{M}_t\mathbf{u})\big] 
\end{align*}
Note that it is clear that $\loss_\mu$ approaches to the original loss $\loss$ as $\mu \rightarrow 0$. This indicates that the estimation error between $\loss_\mu$ and $\loss$ or the distance of gradients between $\nabla \loss_\mu$ and $\nabla \loss$ would depend on the smoothing parameter $\mu$, thus the key point is ``how to handle this mismatch''. 

\begin{lemma}[Alignment of Smoothed Gradients]
    The following inner product could be lower-bounded as follows.
    \begin{align*}
        \big\langle \nabla \loss_\mu(\bm\theta), \nabla \loss_{\mu, \mathbf{M}_t}(\bm\theta) \big\rangle & \ge \|\nabla\loss_\mu(\bm\theta)\|^2 - \|\nabla\loss_\mu(\bm\theta)\| \mu L \sqrt{2(d+s)}
    \end{align*}
\end{lemma}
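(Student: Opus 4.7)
The plan is to reduce the claim to a Frobenius-norm bound on $\mathbf{M}_t - \mathbf{I}$ via a simple add-and-subtract trick. I would first write
\begin{align*}
  \bigl\langle \nabla \loss_\mu(\bm\theta), \nabla \loss_{\mu,\mathbf{M}_t}(\bm\theta) \bigr\rangle
  &= \|\nabla \loss_\mu(\bm\theta)\|^2 + \bigl\langle \nabla \loss_\mu(\bm\theta),\, \nabla \loss_{\mu,\mathbf{M}_t}(\bm\theta) - \nabla \loss_\mu(\bm\theta)\bigr\rangle,
\end{align*}
then apply Cauchy--Schwarz to the second inner product to lower-bound it by $-\|\nabla \loss_\mu(\bm\theta)\|\cdot \|\nabla \loss_{\mu,\mathbf{M}_t}(\bm\theta) - \nabla \loss_\mu(\bm\theta)\|$. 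All remaining work is to prove that $\|\nabla \loss_{\mu,\mathbf{M}_t}(\bm\theta) - \nabla \loss_\mu(\bm\theta)\| \le \mu L\sqrt{2(d+s)}$.

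\textbf{Bounding the gradient mismatch.} Swapping expectation and differentiation (justified by the Gaussian density) gives $\nabla\loss_\mu(\bm\theta) = \mathbb{E}_{\mathbf{u}}[\nabla\loss(\bm\theta+\mu\mathbf{u})]$ and $\nabla\loss_{\mu,\mathbf{M}_t}(\bm\theta) = \mathbb{E}_{\mathbf{u}}[\nabla\loss(\bm\theta+\mu\mathbf{M}_t\mathbf{u})]$, where I use that $\mathbf{M}_t$ is symmetric (PSD) so no extra factor appears. Subtracting, applying Jensen's inequality to pull the norm inside the expectation, and invoking the $L$-smoothness assumption \ref{con:conv_smooth} yields
\begin{align*}
  \|\nabla \loss_{\mu,\mathbf{M}_t}(\bm\theta) - \nabla \loss_\mu(\bm\theta)\|
  &\le \mathbb{E}_{\mathbf{u}}\bigl\|\nabla\loss(\bm\theta+\mu\mathbf{M}_t\mathbf{u}) - \nabla\loss(\bm\theta+\mu\mathbf{u})\bigr\|
   \le \mu L\,\mathbb{E}_{\mathbf{u}}\|(\mathbf{M}_t-\mathbf{I})\mathbf{u}\|.
\end{align*}
By Jensen once more, $\mathbb{E}_{\mathbf{u}}\|(\mathbf{M}_t-\mathbf{I})\mathbf{u}\| \le \sqrt{\mathbb{E}_{\mathbf{u}}\|(\mathbf{M}_t-\mathbf{I})\mathbf{u}\|^2} = \|\mathbf{M}_t-\mathbf{I}\|_{\mathrm F}$, using $\mathbf{u}\sim\mathcal N(\bm 0,\mathbf I_d)$.

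\textbf{Closing the bound via stable rank.} The only remaining step is a triangle inequality in the Frobenius norm, $\|\mathbf{M}_t-\mathbf{I}\|_{\mathrm F} \le \|\mathbf{M}_t\|_{\mathrm F}+\sqrt{d}$, combined with the standing assumption $\mathrm{srank}(\mathbf{M}_t)\le s$ and $\sigma_{\max}(\mathbf{M}_t)=1$, which together give $\|\mathbf{M}_t\|_{\mathrm F}^2 = \mathrm{srank}(\mathbf{M}_t)\cdot\sigma_{\max}(\mathbf{M}_t)^2 \le s$. Hence $\|\mathbf{M}_t-\mathbf{I}\|_{\mathrm F}^2 \le (\sqrt{s}+\sqrt{d})^2 \le 2(s+d)$ by AM--GM. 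Substituting back gives $\|\nabla\loss_{\mu,\mathbf{M}_t}(\bm\theta)-\nabla\loss_\mu(\bm\theta)\| \le \mu L\sqrt{2(d+s)}$, and feeding this into the Cauchy--Schwarz step completes the proof.

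\textbf{Expected difficulty.} There is no real technical obstacle here; the argument is essentially a smoothness bound followed by a norm inequality. The only subtle point worth flagging is the careful use of $\mathbf{M}_t$ being PSD (so the $\mathbf{M}_t$ appearing inside $\nabla\loss_{\mu,\mathbf{M}_t}$ does not produce an extra left-multiplied factor after differentiation) and recognizing that the stable-rank assumption is precisely what converts the naive dimensional bound $\sqrt{d}+\sqrt{d}$ into the tighter $\sqrt{s}+\sqrt{d}$, which is what drives the improved $\sqrt{2(d+s)}$ dependence cited in the lemma.
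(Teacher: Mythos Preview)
Your proposal is correct and follows essentially the same route as the paper: bound $\|\nabla\loss_{\mu,\mathbf{M}_t}(\bm\theta)-\nabla\loss_\mu(\bm\theta)\|$ via Jensen and $L$-smoothness to reach $\mu L\|\mathbf{M}_t-\mathbf{I}\|_{\mathrm F}$, then control the Frobenius norm by the triangle inequality plus $\mathrm{srank}(\mathbf{M}_t)\le s$ and $\sigma_{\max}(\mathbf{M}_t)=1$, and finally convert to the inner-product lower bound (the paper simply says this last step ``is easy to derive,'' while you spell out the add-and-subtract plus Cauchy--Schwarz). One small aside: your remark that PSD-ness of $\mathbf{M}_t$ is what prevents an extra factor in $\nabla\loss_{\mu,\mathbf{M}_t}(\bm\theta)=\mathbb{E}_{\mathbf{u}}[\nabla\loss(\bm\theta+\mu\mathbf{M}_t\mathbf{u})]$ is off---that identity comes purely from differentiating in $\bm\theta$ under the integral sign and holds for any fixed $\mathbf{M}_t$---but this does not affect the argument.
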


\begin{proof}
    By the definition, we have
    \begin{align*}
        \|\nabla\loss_{\mu, \mathbf{M}_t}(\bm\theta) - \nabla\loss_\mu(\bm\theta)\| & = \Big\|\mathbb{E}_{\mathbf{u}}\big[\nabla \loss(\bm\theta + \mu \mathbf{M}_t\mathbf{u}) - \mathbb{E}_{\mathbf{u}}\big[\nabla \loss(\bm\theta + \mu \mathbf{u})\big]\Big\| \\
        & = \Big\|\mathbb{E}_{\mathbf{u}}\big[\nabla \loss(\bm\theta + \mu \mathbf{M}_t\mathbf{u}) - \nabla \loss(\bm\theta + \mu \mathbf{u})\big]\Big\| \\
        & \le \mathbb{E}_{\mathbf{u}}\big[\big\|\nabla \loss(\bm\theta + \mu \mathbf{M}_t\mathbf{u}) - \nabla \loss(\bm\theta + \mu \mathbf{u})\big\|\big] \\
        & \le \mu L\mathbb{E}_\mathbf{u}\big[\|(\mathbf{M}_t - \mathbf{I}_d)\mathbf{u}\|\big] \\
        & \le \mu L \sqrt{\mathrm{Tr}\Big((\mathbf{M}_t - \mathbf{I}_d)^\top (\mathbf{M}_t - \mathbf{I}_d)\Big)}
    \end{align*}
    We let
    \begin{align*}
        R \coloneqq \mathrm{Tr}\Big((\mathbf{M}_t - \mathbf{I}_d)^\top (\mathbf{M}_t - \mathbf{I}_d)\Big)
    \end{align*}
    for simplicity. For the positive semi-definite matrix $\mathbf{M}_t$ with $\mathrm{srank}(\mathbf{M}_t) \le s$, it follows that
    \begin{align*}
        R & = \mathrm{Tr}\Big((\mathbf{M}_t - \mathbf{I}_d)^\top (\mathbf{M}_t - \mathbf{I}_d)\Big) \\
        & = \|\mathbf{M}_t - \mathbf{I}_d\|_F^2 \\
        & \le (\|\mathbf{M}_t\|_F + \|\mathbf{I}_d\|)^2 \\
        & \le 2\|\mathbf{M}_t\|_F^2 + 2\|\mathbf{I}_d\|_F^2 \\
        & = 2(d+s)
    \end{align*}
    Under the maximal difference $\mu L \sqrt{2(d+s)}$ between $\nabla \loss_{\mu, \mathbf{M}_t}(\bm\theta)$ and $\nabla\loss_\mu(\bm\theta)$, it is easy to derive that the inner product is lower-bounded as
    \begin{align*}
        \big\langle \nabla \loss_\mu(\bm\theta), \nabla\loss_{\mu, \mathbf{M}_t}(\bm\theta) \big\rangle & \ge  
        \|\nabla\loss_\mu(\bm\theta)\|^2 - \|\nabla\loss_\mu(\bm\theta)\| \mu L \sqrt{2(d+s)}
    \end{align*}
\end{proof}

\subsection{Technical Lemmas for Matrix Calculus}

\begin{lemma}[Trace of Matrix Product]\label{lemma:trace_matmul}
    For positive semi-definite matrices $\mathbf{A}, \mathbf{B} \in \mathbb{R}^{d \times d}$, the following holds.
    \begin{align*}
        \mathrm{Tr}(\mathbf{AB}) \le \mathrm{Tr}(\mathbf{A})\mathrm{Tr}(\mathbf{B})
    \end{align*}
\end{lemma}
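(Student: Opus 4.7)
The plan is to prove this via spectral decomposition combined with the elementary fact that for a PSD matrix, the maximum eigenvalue is bounded by the trace (since all eigenvalues are non-negative). There are two natural routes to carry this out, and I would favor the shorter one.

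First, I would decompose $\mathbf{B}$ spectrally as $\mathbf{B} = \sum_{j=1}^{d} \mu_j \mathbf{v}_j \mathbf{v}_j^\top$ with $\mu_j \ge 0$ and $\{\mathbf{v}_j\}$ orthonormal. Using the cyclic property of the trace and $\mathrm{Tr}(\mathbf{u} \mathbf{v}^\top) = \mathbf{v}^\top \mathbf{u}$, I would rewrite
\begin{align*}
    \mathrm{Tr}(\mathbf{A}\mathbf{B}) = \sum_{j=1}^{d} \mu_j \, \mathbf{v}_j^\top \mathbf{A} \mathbf{v}_j.
\end{align*}
Since $\mathbf{A}$ is PSD with maximum eigenvalue $\lambda_{\max}(\mathbf{A})$ and each $\mathbf{v}_j$ is a unit vector, the Rayleigh quotient inequality yields $\mathbf{v}_j^\top \mathbf{A} \mathbf{v}_j \le \lambda_{\max}(\mathbf{A})$. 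Substituting gives $\mathrm{Tr}(\mathbf{A}\mathbf{B}) \le \lambda_{\max}(\mathbf{A}) \sum_j \mu_j = \lambda_{\max}(\mathbf{A}) \mathrm{Tr}(\mathbf{B})$.

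Second, I would close the argument by observing that for the PSD matrix $\mathbf{A}$ with non-negative eigenvalues $\lambda_1(\mathbf{A}), \ldots, \lambda_d(\mathbf{A})$, we have $\lambda_{\max}(\mathbf{A}) \le \sum_{i=1}^{d} \lambda_i(\mathbf{A}) = \mathrm{Tr}(\mathbf{A})$. Chaining the two inequalities gives $\mathrm{Tr}(\mathbf{A}\mathbf{B}) \le \mathrm{Tr}(\mathbf{A}) \mathrm{Tr}(\mathbf{B})$ as desired.

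Candidly, there is no real obstacle here: the bound is loose and purely spectral, relying only on PSD-ness. The only thing to be careful about is not invoking operator-norm/trace inequalities that require signed eigenvalues (where non-negativity of both spectra is essential for the second step). An alternative derivation would decompose both matrices spectrally and write $\mathrm{Tr}(\mathbf{A}\mathbf{B}) = \sum_{i,j} \lambda_i(\mathbf{A}) \mu_j(\mathbf{B}) (\mathbf{u}_i^\top \mathbf{v}_j)^2$, then use $(\mathbf{u}_i^\top \mathbf{v}_j)^2 \le 1$ by Cauchy--Schwarz to conclude the sum is at most $(\sum_i \lambda_i(\mathbf{A}))(\sum_j \mu_j(\mathbf{B}))$; this route is slightly longer but makes the PSD assumption use transparent.
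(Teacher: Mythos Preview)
Your proposal is correct and takes essentially the same approach as the paper: both spectrally decompose $\mathbf{B}$ to write $\mathrm{Tr}(\mathbf{AB}) = \sum_j \mu_j \langle \mathbf{A}\mathbf{v}_j, \mathbf{v}_j\rangle$ and then bound via $\lambda_{\max} \le \mathrm{Tr}$ for a PSD matrix. The only cosmetic difference is which factor you pull out first---the paper bounds $\mu_j \le \lambda_{\max}(\mathbf{B}) \le \mathrm{Tr}(\mathbf{B})$ and uses $\sum_j \langle \mathbf{A}\mathbf{v}_j,\mathbf{v}_j\rangle = \mathrm{Tr}(\mathbf{A})$, whereas you bound $\langle \mathbf{A}\mathbf{v}_j,\mathbf{v}_j\rangle \le \lambda_{\max}(\mathbf{A}) \le \mathrm{Tr}(\mathbf{A})$ and use $\sum_j \mu_j = \mathrm{Tr}(\mathbf{B})$---which is a symmetric and equally valid route.
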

\begin{proof}
    Let $\mathbf{v}_i$ be an orthonormal basis for $\mathbf{B}$ and $\{\lambda_i\}_{i=1}^{d}$ be corresponding eigenvalues. Then, we have
    \begin{align*}
        \mathrm{Tr}(\mathbf{AB}) = \sum\limits_{i=1}^{d} \langle \mathbf{AB}\mathbf{v}_i, \mathbf{v}_i \rangle = \sum\limits_{i=1}^{d} \lambda_i \langle \mathbf{A}\mathbf{v}_i, \mathbf{v}_i \rangle = \max\limits_{i \in[d]} \lambda_i \mathrm{Tr}(\mathbf{A}) \le \mathrm{Tr}(\mathbf{A})\mathrm{Tr}(\mathbf{B})
    \end{align*}
\end{proof}

\begin{lemma}[Lemma 1 in~\citep{nesterov2017random}]\label{lemma:gaussian_norm}
    Let $\mathbf{u} \sim \mathcal{N}(\bm 0, \mathbf{I}_d)$. Then, the expectation of the moments satisfies
    \begin{align*}
        \mathbb{E}\big[\|\mathbf{u}\|] & = 1 \\
        \mathbb{E}\big[\|\mathbf{u}\|^2] & = d \\
        \mathbb{E}\big[\|\mathbf{u}\|^n] & \le (d+n)^{n/2}
    \end{align*}
    for $n \ge 2$. 
\end{lemma}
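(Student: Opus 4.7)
The plan is to reduce all three statements to moments of the chi distribution, since $u_i \overset{\text{iid}}{\sim}\mathcal{N}(0,1)$ implies $\|\mathbf{u}\|^2\sim \chi^2_d$, and then express every moment $\mathbb{E}[\|\mathbf{u}\|^p]$ in closed form via the Gamma function. The first identity (read as the zeroth-moment/normalization $\mathbb{E}[\|\mathbf{u}\|^0]=1$) is immediate from $\int_{\mathbb{R}^d}(2\pi)^{-d/2}e^{-\|u\|^2/2}du=1$. The second identity follows by linearity of expectation: $\mathbb{E}[\|\mathbf{u}\|^2]=\sum_{i=1}^{d}\mathbb{E}[u_i^2]=d$. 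Both should be dispatched in a single opening step.

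For the third statement, I would pass to polar coordinates and evaluate the radial integral with the substitution $t=r^2/2$, producing the closed-form identity
\[
\mathbb{E}\big[\|\mathbf{u}\|^n\big]=\frac{1}{(2\pi)^{d/2}}\int_{\mathbb{R}^d}\|u\|^n e^{-\|u\|^2/2}\,du=\frac{2^{n/2}\,\Gamma\!\big((d+n)/2\big)}{\Gamma(d/2)}.
\]
The remaining task is then purely analytic: upper-bound the Gamma ratio by $\big((d+n)/2\big)^{n/2}$ so that multiplication by $2^{n/2}$ yields the claimed $(d+n)^{n/2}$.

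The main obstacle is controlling this Gamma ratio uniformly for \emph{all} real $n\ge 2$. My plan is a two-step lift: (i) for even integer $n=2k$, apply $\Gamma(x+1)=x\Gamma(x)$ recursively to telescope the ratio into $d(d+2)(d+4)\cdots(d+2k-2)$, which is trivially dominated by $(d+2k-2)^{k}\le (d+n)^{n/2}$; (ii) extend to noninteger $n\ge 2$ via log-convexity of $\Gamma$ (Bohr--Mollerup), which allows interpolating between the integer grid points, or equivalently invoke the Gautschi--Kershaw inequality $\Gamma(x+s)/\Gamma(x)\le (x+s-1)^{s}$ for $s\ge 1$ with $x=d/2$ and $s=n/2$.

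The step I expect to require the most care is (ii): the integer case is a one-line induction, but making the interpolation rigorous without invoking a named Gamma inequality would force me to verify convexity of $n\mapsto \log\Gamma((d+n)/2)$ and combine it with the slope bound supplied by the integer case. In the write-up I would therefore prefer to cite a standard Gamma-ratio inequality and reserve the detailed calculation for the integer version, which is what all downstream applications of the lemma in this paper actually invoke (through $\mathbb{E}[\|\mathbf{u}\|^2]$ and $\mathbb{E}[\|\mathbf{u}\|^4]$ type terms).
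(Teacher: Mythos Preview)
The paper does not supply its own proof of this lemma; it is quoted from \cite{nesterov2017random} and used as a black box throughout the appendix. There is therefore nothing to compare your argument against in the paper itself.

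On the merits: your route via the closed form $\mathbb{E}[\|\mathbf{u}\|^n]=2^{n/2}\Gamma((d+n)/2)/\Gamma(d/2)$ followed by the telescoping bound $d(d+2)\cdots(d+2k-2)\le(d+n)^{n/2}$ for even $n$ is the standard argument and is essentially how Nesterov proves it. You are also right to flag the first line: as written, $\mathbb{E}[\|\mathbf{u}\|]=1$ is false for every $d\ge 1$ (for $d=1$ it equals $\sqrt{2/\pi}$), so the paper has mis-transcribed Nesterov's statement; your reading of it as the normalization $\mathbb{E}[\|\mathbf{u}\|^0]=1$ is a sensible repair. Your caveat about the non-integer extension is well placed---the Gautschi--Kershaw form you quote is not quite standard for $s\ge 1$ and would need a short justification---but, as you observe, every downstream use in this paper (third, fourth, sixth, eighth, twelfth moments) is at an even integer, so the telescoping argument alone suffices for the paper's purposes.
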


\begin{lemma}[$n$-th Moments of Quadratic Forms for $n = 1,2,3,$ and $4$~\citep{magnus1978moments}]\label{lemma:nth_moments}
    Let $\mathbf{u} \sim \mathcal{N}(\bm 0, \mathbf{I}_d)$ and $\mathbf{A}$ be a positive semi-definite matrix. Then, the expectation of the folllowings are
    \begin{align*}
        \mathbb{E}\Big[\mathbf{u}^\mathsf{T} \mathbf{A} \mathbf{u}\Big] & = \mathrm{Tr}(\mathbf{A}) \\
        \mathbb{E}\Big[\big(\mathbf{u}^\mathsf{T} \mathbf{A} \mathbf{u}\big)^2\Big] & = (\mathrm{Tr}(\mathbf{A}))^2 + 2\mathrm{Tr}(\mathbf{A}^2) \le 3\mathrm{Tr}(\mathbf{A})^2 \\
        \mathbb{E}\Big[(\mathbf{u}^\top \mathbf{A} \mathbf{u})^3\Big] & = \mathrm{Tr}(A)^3 + 6\mathrm{Tr}(\mathbf{A}) \mathrm{Tr}(\mathbf{A}^2) + 8\mathrm{Tr}(\mathbf{A}^3) \le 15\mathrm{Tr}(\mathbf{A})^3 \\
        \mathbb{E}\Big[\big(\mathbf{u}^\mathsf{T} \mathbf{A} \mathbf{u}\big)^4\Big] & = (\mathrm{Tr}(\mathbf{A}))^4 + 32\mathrm{Tr}(\mathbf{A})\mathrm{Tr}(\mathbf{A}^3) + 12\big(\mathrm{Tr}(\mathbf{A}^2)\big)^2 + 12\big(\mathrm{Tr}(\mathbf{A})\big)^2\mathrm{Tr}(\mathbf{A}^2) + 48\mathrm{Tr}(\mathbf{A}^4) \\
        & \le 105\mathrm{Tr}(\mathbf{A})^4
    \end{align*}
    where the inequalities come from Lemma \ref{lemma:trace_matmul}.
\end{lemma}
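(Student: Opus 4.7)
The plan is to first establish the exact moment identities via direct computation, and then obtain the stated inequalities by a single uniform bound on trace powers. Concretely, since $\mathbf{A}$ is symmetric positive semidefinite, I would apply the spectral decomposition $\mathbf{A} = \mathbf{Q}\bm\Lambda\mathbf{Q}^\top$ with $\bm\Lambda = \diag(\lambda_1,\ldots,\lambda_d)$ and $\lambda_i \ge 0$. By the rotational invariance of the standard Gaussian, $\mathbf{v} \coloneqq \mathbf{Q}^\top \mathbf{u}$ has the same distribution as $\mathbf{u}$, so the problem reduces to computing $\mathbb{E}\big[(\sum_i \lambda_i v_i^2)^n\big]$ for $v_i \overset{\text{iid}}{\sim} \mathcal{N}(0,1)$ and $n \in \{1,2,3,4\}$.

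To obtain the exact identities, I would expand the $n$-th power multinomially and evaluate each resulting expectation via Isserlis'/Wick's theorem: the joint moment $\mathbb{E}[v_{i_1}^{2} v_{i_2}^{2} \cdots v_{i_n}^{2}]$ decomposes into a sum over pair partitions, and collecting terms by coincidence pattern of indices yields precisely the combinations $\mathrm{Tr}(\mathbf{A})^n$, $\mathrm{Tr}(\mathbf{A})^{n-2}\mathrm{Tr}(\mathbf{A}^2)$, $\mathrm{Tr}(\mathbf{A})\mathrm{Tr}(\mathbf{A}^3)$, $\mathrm{Tr}(\mathbf{A}^2)^2$, and $\mathrm{Tr}(\mathbf{A}^4)$, with the combinatorial coefficients $1, 2; 1, 6, 8; 1, 12, 12, 32, 48$ respectively. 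This is exactly the identity of Magnus (1978), and is the bookkeeping step; alternatively I would just invoke that reference.

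For the inequalities, the key lemma I would establish is that $\mathrm{Tr}(\mathbf{A}^k) \le \mathrm{Tr}(\mathbf{A})^k$ for every integer $k \ge 1$ when $\mathbf{A} \succeq \bm 0$. This follows immediately from the spectral representation: $\mathrm{Tr}(\mathbf{A}^k) = \sum_i \lambda_i^k \le \big(\sum_i \lambda_i\big)^k = \mathrm{Tr}(\mathbf{A})^k$, where the middle inequality holds because all $\lambda_i \ge 0$ and the function $x \mapsto x^k$ is super-additive on the nonnegative reals for $k \ge 1$ (expanding the $k$-th power of the sum produces the diagonal terms plus nonnegative cross terms). A completely analogous argument gives $\mathrm{Tr}(\mathbf{A}^2)^2 \le \mathrm{Tr}(\mathbf{A})^4$ and $\mathrm{Tr}(\mathbf{A})\mathrm{Tr}(\mathbf{A}^3) \le \mathrm{Tr}(\mathbf{A})^4$.

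Plugging these bounds into the identities then yields each of the claimed coefficients $3, 15, 105$ by simply summing the integer coefficients appearing in the Magnus formulas: $1 + 2 = 3$ for $n=2$, $1 + 6 + 8 = 15$ for $n=3$, and $1 + 12 + 12 + 32 + 48 = 105$ for $n=4$. I do not anticipate a real obstacle here: the only subtlety is ensuring the bound $\mathrm{Tr}(\mathbf{A}^k) \le \mathrm{Tr}(\mathbf{A})^k$ is applied uniformly (which requires only PSDness, matching the hypothesis), after which the inequalities are immediate arithmetic consequences of the exact identities.
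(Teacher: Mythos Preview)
Your proposal is correct and essentially matches the paper's treatment: the paper does not write out a proof but cites Magnus (1978) for the exact moment identities and states that the inequalities follow from Lemma~\ref{lemma:trace_matmul}, i.e.\ from $\mathrm{Tr}(\mathbf{A}\mathbf{B}) \le \mathrm{Tr}(\mathbf{A})\mathrm{Tr}(\mathbf{B})$ applied iteratively to obtain $\mathrm{Tr}(\mathbf{A}^k) \le \mathrm{Tr}(\mathbf{A})^k$. Your route to the same bound via the spectral decomposition and super-additivity of $x \mapsto x^k$ on $[0,\infty)$ is an equivalent, slightly more elementary justification of the same inequality, after which summing the Magnus coefficients to $3,15,105$ is identical to what the paper implicitly does.
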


\begin{lemma}\label{lemma:gaussian_inner_prod}
    Let $\mathbf{u} \sim \mathcal{N}(\bm 0, \mathbf{I}_d)$ and $\mathbf{Q} \in \mathbb{R}^{d \times d}$ be an arbitrary matrix. For any vector $\mathbf{a} = (a_1, a_2, \cdots, a_d) \in \mathbb{R}^d$, it should hold that
    \begin{align*}
        \mathbb{E}\big[\|\langle \mathbf{a}, \mathbf{Qu} \rangle \mathbf{Qu}\|^2] & = \mathrm{Tr}(\mathbf{Q}\mathbf{Q}^\mathsf{T}) \|\mathbf{Q}^\mathsf{T} \mathbf{a}\|^2 + 2\mathbf{a}^\mathsf{T} (\mathbf{Q}\mathbf{Q}^\mathsf{T})^2 \mathbf{a} 
    \end{align*}
\end{lemma}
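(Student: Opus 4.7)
The plan is to reduce the claim to a standard fourth-moment computation for an isotropic Gaussian and then evaluate it via Isserlis' theorem (Wick's formula). First, I would set $\mathbf{v}:=\mathbf{Q}^{\mathsf T}\mathbf{a}\in\mathbb{R}^d$ and $\mathbf{P}:=\mathbf{Q}^{\mathsf T}\mathbf{Q}\in\mathbb{R}^{d\times d}$ (which is symmetric positive semi-definite), so that $\langle\mathbf{a},\mathbf{Q}\mathbf{u}\rangle=\mathbf{v}^{\mathsf T}\mathbf{u}$ and $\|\mathbf{Q}\mathbf{u}\|^2=\mathbf{u}^{\mathsf T}\mathbf{P}\mathbf{u}$. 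Then the quantity of interest becomes the scalar quartic form
\begin{equation*}
\mathbb{E}\bigl[\|\langle\mathbf{a},\mathbf{Q}\mathbf{u}\rangle\mathbf{Q}\mathbf{u}\|^2\bigr]
=\mathbb{E}\bigl[(\mathbf{v}^{\mathsf T}\mathbf{u})^2\,(\mathbf{u}^{\mathsf T}\mathbf{P}\mathbf{u})\bigr]
=\sum_{i,j,k,l} v_i v_j P_{kl}\,\mathbb{E}[u_i u_j u_k u_l].
\end{equation*}

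Next, I would apply Isserlis' theorem for $\mathbf{u}\sim\mathcal{N}(\bm 0,\mathbf{I}_d)$, which gives $\mathbb{E}[u_i u_j u_k u_l]=\delta_{ij}\delta_{kl}+\delta_{ik}\delta_{jl}+\delta_{il}\delta_{jk}$. Substituting and collapsing the sums produces exactly three contributions: the $\delta_{ij}\delta_{kl}$ term yields $\|\mathbf{v}\|^2\,\mathrm{Tr}(\mathbf{P})$, and the two remaining pairings each yield $\mathbf{v}^{\mathsf T}\mathbf{P}\mathbf{v}$ (using symmetry of $\mathbf{P}$). Hence
\begin{equation*}
\mathbb{E}\bigl[(\mathbf{v}^{\mathsf T}\mathbf{u})^2\,\mathbf{u}^{\mathsf T}\mathbf{P}\mathbf{u}\bigr]
= \|\mathbf{v}\|^2\,\mathrm{Tr}(\mathbf{P}) + 2\,\mathbf{v}^{\mathsf T}\mathbf{P}\mathbf{v}.
\end{equation*}

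Finally I would translate back: $\|\mathbf{v}\|^2=\|\mathbf{Q}^{\mathsf T}\mathbf{a}\|^2$, $\mathrm{Tr}(\mathbf{P})=\mathrm{Tr}(\mathbf{Q}^{\mathsf T}\mathbf{Q})=\mathrm{Tr}(\mathbf{Q}\mathbf{Q}^{\mathsf T})$ by the cyclic property, and $\mathbf{v}^{\mathsf T}\mathbf{P}\mathbf{v}=\mathbf{a}^{\mathsf T}\mathbf{Q}\mathbf{Q}^{\mathsf T}\mathbf{Q}\mathbf{Q}^{\mathsf T}\mathbf{a}=\mathbf{a}^{\mathsf T}(\mathbf{Q}\mathbf{Q}^{\mathsf T})^2\mathbf{a}$. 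Collecting these three identifications matches the claimed formula exactly.

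There is no real obstacle here beyond careful index bookkeeping; the only point that is slightly easy to slip on is that the Isserlis expansion legitimately yields \emph{two} copies of $\mathbf{v}^{\mathsf T}\mathbf{P}\mathbf{v}$ (one from the $(i,k)(j,l)$ pairing and one from the $(i,l)(j,k)$ pairing), which is precisely the source of the factor $2$ in front of $\mathbf{a}^{\mathsf T}(\mathbf{Q}\mathbf{Q}^{\mathsf T})^2\mathbf{a}$. An alternative route avoiding Isserlis would be to diagonalize $\mathbf{P}=\mathbf{O}\mathbf{\Lambda}\mathbf{O}^{\mathsf T}$ and use the rotational invariance of $\mathbf{u}$ to reduce to independent scalar Gaussians, then apply the identity $\mathbb{E}[u_i^2 u_j^2]=1+2\delta_{ij}$; this gives the same result and can serve as a sanity check.
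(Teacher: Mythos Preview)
Your proof is correct and takes essentially the same approach as the paper: both reduce the expectation to a fourth-order Gaussian moment and evaluate it via Isserlis' theorem. The only cosmetic difference is that the paper applies Isserlis to the transformed vector $\mathbf{X}=\mathbf{Q}\mathbf{u}\sim\mathcal{N}(\bm 0,\mathbf{Q}\mathbf{Q}^{\mathsf T})$ (so the moments are expressed in terms of $\Sigma_{ij}$), whereas you keep the isotropic $\mathbf{u}$ and push $\mathbf{Q}$ into $\mathbf{v}$ and $\mathbf{P}$, which makes the Kronecker-delta bookkeeping slightly cleaner but is otherwise the same computation.
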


\begin{proof}
    Let $\mathbf{X} = \mathbf{Qu}$ and $\bm \Sigma = \mathbf{Q}\mathbf{Q}^\mathsf{T}$. Since $\mathbf{u} \sim \mathcal{N}(\bm 0, \mathbf{I}_d)$, we have $\mathbf{X} \sim \mathcal{N}(\bm 0, \bm \Sigma)$. Note that $\mathbf{X}$ is $d$-dimensional random vector with $\mathbf{X} = (X_1, X_2, \cdots, X_d)$. Also, the quantity to be expected could be re-written as
    \begin{align*}
        \|\langle \mathbf{a}, \mathbf{Qu}\rangle \mathbf{Qu}\|^2 & = (\mathbf{a}^\mathsf{T} \mathbf{X})^2 \|\mathbf{X}\|^2 \\
        & = \left(\sum\limits_{i,j} a_i a_j X_i X_j\right)\left(\sum\limits_{k} X_k^2\right) \\
        & = \sum\limits_{i,j,k} a_i a_j X_i X_j X_k^2 
    \end{align*}
    Note that all we need to derive is $\mathbb{E}[X_i X_j X_k^2]$. Since $\mathbf{X}$ follows $\mathcal{N}(\bm 0, \bm \Sigma)$ distribution, it should hold that $X_i$, $X_j$, and $X_k$ follow $\mathcal{N}(0, \Sigma_{ii})$, $\mathcal{N}(0, \Sigma_{jj})$, and $\mathcal{N}(0, \Sigma_{kk})$, respectively. 
    Thanks to Isserlis' theorem~\citep{isserlis1918formula} or also known as Wick's probability theorem~\citep{wick1950evaluation}, we have
    \begin{align*}
        \mathbb{E}\big[X_i X_j X_k^2\big] & = \mathbb{E}\big[X_i X_j X_k X_k\big] \\
        & = \mathbb{E}\big[X_i X_j\big] \mathbb{E}\big[X_k X_k\big] + \mathbb{E}\big[X_i X_k\big] \mathbb{E}\big[X_j X_k\big] + \mathbb{E}\big[X_i X_k\big] \mathbb{E}\big[X_j X_k\big] \\
        & = \mathbb{E}\big[X_i X_j\big] \mathbb{E}\big[X_k^2\big] + 2\mathbb{E}\big[X_i X_k\big] \mathbb{E}\big[X_j X_k\big] \\
        & = \Sigma_{ij}\Sigma_{kk} + 2\Sigma_{ik}\Sigma_{jk} 
    \end{align*}
    due to the fact $\mathbf{X} \sim \mathcal{N}(\bm 0, \bm \Sigma)$ and $\bm \Sigma = [\Sigma_{mn}]$ for $1 \le m, n \le d$. Arranging all items, we have
    \begin{align*}
        \mathbb{E}\big[\|\langle \mathbf{a}, \mathbf{Qu}\rangle \mathbf{Qu}\|^2\big] & = \sum\limits_{i,j,k} a_i a_j \mathbb{E}\big[X_i X_j X_k^2\big] \\
        & = \sum\limits_{i,j,k} a_i a_j \big(\Sigma_{ij} \Sigma_{kk} + 2\Sigma_{ik} \Sigma_{jk}\big) \\
        & = \underbrace{\sum\limits_{i,j,k} a_i a_j \Sigma_{ij} \Sigma_{kk}}_{T_1} + \underbrace{2\sum\limits_{i,j,k} a_i a_j \Sigma_{ik} \Sigma_{jk}}_{T_2}
    \end{align*}
    We first compute $T_1$. Note that $\Sigma_{kk}$ in $T_1$ only depends on the index $k$, thus we have
    \begin{align*}
        T_1 & = \sum\limits_{i,j,k} a_i a_j \Sigma_{ij} \Sigma_{kk} \\
        & = \left(\sum\limits_{k=1}^{d} \Sigma_{kk}\right) \left(\sum\limits_{i,j} a_i a_j \Sigma_{ij} \right) = \mathrm{Tr}(\bm\Sigma) \mathbf{a}^\mathsf{T} \bm\Sigma \mathbf{a} \\
        & = \mathrm{Tr}\big(\mathbf{Q}\mathbf{Q}^\mathsf{T}\big) \mathbf{a}^\mathsf{T} \mathbf{Q} \mathbf{Q}^\mathsf{T} \mathbf{a} = \mathrm{Tr}\big(\mathbf{Q}\mathbf{Q}^\mathsf{T}\big) \big\|\mathbf{Q}^\mathsf{T} \mathbf{a}\big\|^2
    \end{align*}
    Now, we compute $T_2$. Note that for a fixed $k \in [d]$, $\sum_{i=1}^{d} a_i \Sigma_{ik} = [\bm\Sigma \mathbf{a}]_k$ holds by the definition of matrix multiplication. Hence, we have
    \begin{align*}
        T_2 & = 2\sum\limits_{i,j,k} a_i a_j \Sigma_{ik} \Sigma_{jk} \\
        & = 2\sum\limits_{k=1}^{d} \sum\limits_{i,j} a_i \Sigma_{ik} a_j \Sigma_{jk} = 2\sum\limits_{k=1}^{d} \left(\sum\limits_{i=1}^{d} a_i \Sigma_{ik}\right) \left(\sum\limits_{j=1}^{d} a_j \Sigma_{jk}\right) \\
        & = 2\sum\limits_{k=1}^{d} [\bm \Sigma \mathbf{a}]_k^2 = 2\|\bm \Sigma \mathbf{a}\|^2 = 2\mathbf{a}^\mathsf{T} \bm \Sigma^2 \mathbf{a} \\
        & = 2 \mathbf{a}^\mathsf{T} \big(\mathbf{Q}\mathbf{Q}^\mathsf{T}\big)^2 \mathbf{a} 
    \end{align*}
    Combining $T_1$ and $T_2$, we finally obtain
    \begin{align*}
        \mathbb{E}\big[\|\langle \mathbf{a}, \mathbf{Qu} \rangle \mathbf{Qu}\|^2] & = \mathrm{Tr}(\mathbf{Q}\mathbf{Q}^\mathsf{T}) \|\mathbf{Q}^\mathsf{T} \mathbf{a}\|^2 + 2\mathbf{a}^\mathsf{T} (\mathbf{Q}\mathbf{Q}^\mathsf{T})^2 \mathbf{a}
    \end{align*}
\end{proof}

\subsection{Technical Lemmas for Gaussian Smoothed Loss}

For completeness of the paper, we include several lemmas already known in zeroth-order optimization literature.
\begin{lemma}
    Let $\loss_{\mu}(\bm\theta) = \mathbb{E}\big[\loss(\bm\theta + \mu \mathbf{u})\big]$ for some positive $\mu > 0$. Then, the following holds.
    \begin{align*}
        \|\nabla\loss_{\mu}(\bm\theta_t) - \nabla\loss(\bm\theta_t)\| & \le \frac{\mu L (d+3)^{3/2}}{2} \\
        \|\nabla\loss(\bm\theta_t)\|^2 & \le \frac{\mu^2 L^2 (d+3)^3}{2} + 2\|\nabla\loss_{\mu}(\bm\theta_t)\|^2 
    \end{align*}
\end{lemma}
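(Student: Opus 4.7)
The plan is to prove the two inequalities separately, with the first as the main technical content and the second following as a direct Young-type consequence.

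For the first bound, I would start from the Stein/integration-by-parts representation of the smoothed gradient,
$$\nabla\loss_\mu(\bm\theta_t) = \frac{1}{\mu}\mathbb{E}_{\mathbf{u}}\big[\loss(\bm\theta_t + \mu\mathbf{u})\,\mathbf{u}\big],$$
which holds for standard Gaussian $\mathbf{u}$. Combining this with $\mathbb{E}[\mathbf{u}] = \bm 0$ (to freely subtract the constant $\loss(\bm\theta_t)\mathbf{u}$ inside the expectation) and $\mathbb{E}[\mathbf{u}\mathbf{u}^\top] = \mathbf{I}_d$ (to write $\nabla\loss(\bm\theta_t) = \mathbb{E}[\langle\mathbf{u}, \nabla\loss(\bm\theta_t)\rangle\mathbf{u}]$), the difference rearranges into
$$\nabla\loss_\mu(\bm\theta_t) - \nabla\loss(\bm\theta_t) = \mathbb{E}_\mathbf{u}\!\left[\left(\frac{\loss(\bm\theta_t+\mu\mathbf{u}) - \loss(\bm\theta_t)}{\mu} - \langle\mathbf{u}, \nabla\loss(\bm\theta_t)\rangle\right)\mathbf{u}\right].$$
The scalar coefficient in the integrand is precisely the first-order Taylor remainder of $\loss$ at $\bm\theta_t$ along direction $\mu\mathbf{u}$, divided by $\mu$. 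Under the $L$-smoothness condition \ref{con:conv_smooth} it is uniformly bounded in magnitude by $\frac{L\mu}{2}\|\mathbf{u}\|^2$. Pushing the norm inside the expectation via Jensen and invoking Lemma~\ref{lemma:gaussian_norm} with $n=3$ to bound $\mathbb{E}[\|\mathbf{u}\|^3] \le (d+3)^{3/2}$ yields the claimed bound $\frac{\mu L(d+3)^{3/2}}{2}$.

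For the second inequality, I would decompose $\nabla\loss(\bm\theta_t) = \nabla\loss_\mu(\bm\theta_t) - \big(\nabla\loss_\mu(\bm\theta_t) - \nabla\loss(\bm\theta_t)\big)$ and apply the elementary Young-type inequality $\|a - b\|^2 \le 2\|a\|^2 + 2\|b\|^2$. Substituting the first bound into the error term produces
$$\|\nabla\loss(\bm\theta_t)\|^2 \le 2\|\nabla\loss_\mu(\bm\theta_t)\|^2 + 2\cdot\frac{\mu^2 L^2 (d+3)^3}{4} = 2\|\nabla\loss_\mu(\bm\theta_t)\|^2 + \frac{\mu^2 L^2(d+3)^3}{2},$$
as claimed.

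I do not expect any serious obstacle: the Stein identity for Gaussian convolution is standard and verified by componentwise integration by parts, the Taylor-remainder control is immediate from $L$-smoothness, and the cubic Gaussian moment bound is supplied by Lemma~\ref{lemma:gaussian_norm}. The only subtlety worth flagging is the choice of representation of $\nabla\loss_\mu$: using the naive form $\mathbb{E}[\nabla\loss(\bm\theta_t + \mu\mathbf{u})]$ and $L$-smoothness directly would yield the tighter but differently scaled bound $L\mu\sqrt{d}$, whereas the zeroth-order representation above is what produces exactly the $\frac{\mu L(d+3)^{3/2}}{2}$ form stated in the lemma and aligns with the SPSA-style estimator used throughout the paper.
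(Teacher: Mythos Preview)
Your proposal is correct and matches the paper's own proof essentially step for step: the paper also invokes the Stein representation $\nabla\loss_\mu(\bm\theta_t)=\mathbb{E}_\mathbf{u}[(\loss(\bm\theta_t+\mu\mathbf{u})-\loss(\bm\theta_t))\mathbf{u}/\mu]$, rewrites $\nabla\loss(\bm\theta_t)=\mathbb{E}[\langle\nabla\loss(\bm\theta_t),\mathbf{u}\rangle\mathbf{u}]$, applies Jensen, bounds the Taylor remainder via $L$-smoothness, and finishes with Lemma~\ref{lemma:gaussian_norm} for $\mathbb{E}[\|\mathbf{u}\|^3]$; the second inequality is likewise obtained by the same triangle/Young decomposition you give.
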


\begin{proof}
    Since $\loss$ is $L$-smooth, we have for $\mathbf{u} \sim \mathcal{N}(\bm 0, \mathbf{I}_d)$
    \begin{align*}
        \loss(\bm\theta_t + \mu \mathbf{u}) & \le \loss(\bm\theta_t) + \mu \nabla\loss(\bm\theta_t)^\mathsf{T} \mathbf{u} + \frac{\mu^2 L}{2} \|\mathbf{u}\|^2.
    \end{align*}
    Thus, the distance between $\nabla\loss_{\mu}(\bm\theta_t)$ and $\nabla\loss(\bm\theta_t)$ can be bounded as
    \begin{align*}
        \|\nabla\loss_{\mu}(\bm\theta_t) - \nabla \loss(\bm\theta_t)\| & = \frac{1}{Z}\left\|\int_{\mathbf{u}} \left(\frac{\loss(\bm\theta_t + \mu \mathbf{u}) - \loss(\bm\theta_t)}{\mu} - \nabla\loss(\bm\theta_t)\right)\mathbf{u}e^{-\frac{1}{2}\|\mathbf{u}\|^2}\mathrm{d}\mathbf{u}\right\| \\
        & = \frac{1}{Z}\left\|\int_{\mathbf{u}} \left(\frac{\loss(\bm\theta_t + \mu \mathbf{u}) - \loss(\bm\theta_t) - \mu\langle \nabla\loss(\bm\theta_t), \mathbf{u}\rangle}{\mu}\right)\mathbf{u}e^{-\frac{1}{2}\|\mathbf{u}\|^2}\mathrm{d}\mathbf{u}\right\| \\
        & \le \frac{1}{Z} \int_{\mathbf{u}}\left|\frac{\loss(\bm\theta_t + \mu \mathbf{u}) - \loss(\bm\theta_t) - \mu\langle \nabla\loss(\bm\theta_t), \mathbf{u}\rangle}{\mu}\right| \|\mathbf{u}\| e^{-\frac{1}{2}\|\mathbf{u}\|^2} \mathrm{d}\mathbf{u} \\
        & \le \frac{1}{Z} \int_{\mathbf{u}} \frac{\mu L}{2} \|\mathbf{u}\|^3 e^{-\frac{1}{2}\|\mathbf{u}\|^2} \mathrm{d}\mathbf{u} \\
        & \le \frac{\mu L (d+3)^{3/2}}{2}
    \end{align*}
    where the second inequality comes from the fact $\mathbb{E}[\langle \mathbf{a}, \mathbf{u}\rangle \mathbf{u}] = \mathbf{a}$ for any vector $\mathbf{a} \in \mathbb{R}^d$. By triangle inequality and Young's inequality, we have
    \begin{align*}
        \|\nabla\loss(\bm\theta_t)\|^2 & \le 2\|\nabla\loss_{\mu}(\bm\theta_t) - \nabla \loss(\bm\theta_t)\|^2 + \|\nabla\loss_{\mu}(\bm\theta_t)\|^2 \\
        & \le \frac{\mu^2 L^2 (d+3)^3}{2} + 2\|\nabla\loss_{\mu}(\bm\theta_t)\|^2
    \end{align*}
\end{proof}

\begin{lemma}[Expected Norm of $\grad_t$]\label{lemma:expected_grad}
    Suppose that $\mathrm{srank}(\mathbf{M}_t) \le s$. Then, the expected norm of structured zeroth-order gradient satisfies  
    \begin{align*}
        \mathbb{E}\big[\|\grad_t\|\big]^2 \le \mathbb{E}\big[\|\grad_t\|^2\big] & \le \frac{15\mu^2 L^2 s^3}{2} + 4(s + 2) \left(\|\nabla\loss(\bm\theta_t)\|^2 + \frac{\sigma^2}{B}\right)
    \end{align*}
\end{lemma}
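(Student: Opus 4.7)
The plan is to start from the elementary observation that $\mathbb{E}[\|\grad_t\|]^2 \le \mathbb{E}[\|\grad_t\|^2]$ by Jensen, so the whole task reduces to bounding the second moment. I would write $\bm\xi_t = \mathbf{M}_t \mathbf{u}_t$ and decompose the SPSA increment by a first--order Taylor expansion of $\loss_{\batch_t}$: by $L$-smoothness of $\loss_{\batch_t}$ (inherited from \ref{con:conv_smooth}),
\[
\loss_{\batch_t}(\bm\theta_t \pm \mu \bm\xi_t) = \loss_{\batch_t}(\bm\theta_t) \pm \mu \langle \nabla \loss_{\batch_t}(\bm\theta_t), \bm\xi_t\rangle + R_{\pm},\qquad |R_{\pm}|\le \tfrac{L\mu^2}{2}\|\bm\xi_t\|^2.
\]
Substituting into the definition of $\grad_t$ yields the clean split
\[
\grad_t = \langle \nabla \loss_{\batch_t}(\bm\theta_t),\, \bm\xi_t\rangle \bm\xi_t + \frac{R_+-R_-}{2\mu}\bm\xi_t,
\]
with $|R_+-R_-|\le L\mu^2 \|\bm\xi_t\|^2$, after which $(a+b)^2\le 2a^2+2b^2$ gives two distinct terms to control.

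For the remainder term, I would bound $\|\frac{R_+-R_-}{2\mu}\bm\xi_t\|^2 \le \frac{L^2\mu^2}{4}\|\bm\xi_t\|^6$, so its contribution is at most $\tfrac{L^2\mu^2}{2}\mathbb{E}[\|\bm\xi_t\|^6]$. Since $\mathbf{M}_t$ is symmetric PSD, $\|\bm\xi_t\|^2 = \mathbf{u}_t^\mathsf{T}\mathbf{M}_t^2\mathbf{u}_t$, and Lemma \ref{lemma:nth_moments} applied with $\mathbf{A}=\mathbf{M}_t^2$ gives $\mathbb{E}[\|\bm\xi_t\|^6]\le 15\,\mathrm{Tr}(\mathbf{M}_t^2)^3$. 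Here I would use the key identification that, because $\sigma_{\max}(\mathbf{M}_t)=1$, we have $\mathrm{Tr}(\mathbf{M}_t^2)=\mathrm{srank}(\mathbf{M}_t)\le s$, so this piece contributes at most $\tfrac{15 L^2\mu^2 s^3}{2}$, matching the first summand in the target bound.

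For the directional-derivative term, I would invoke Lemma \ref{lemma:gaussian_inner_prod} with $\mathbf{Q}=\mathbf{M}_t$ and $\mathbf{a}=\nabla\loss_{\batch_t}(\bm\theta_t)$, conditional on the minibatch:
\[
\mathbb{E}_{\mathbf{u}_t}\bigl[\langle \mathbf{a},\bm\xi_t\rangle^2\|\bm\xi_t\|^2\bigr] = \mathrm{Tr}(\mathbf{M}_t^2)\|\mathbf{M}_t\mathbf{a}\|^2 + 2\mathbf{a}^\mathsf{T}\mathbf{M}_t^4\mathbf{a}.
\]
Using $\mathrm{Tr}(\mathbf{M}_t^2)\le s$ and $\sigma_{\max}(\mathbf{M}_t)=1$, both $\|\mathbf{M}_t\mathbf{a}\|^2$ and $\mathbf{a}^\mathsf{T}\mathbf{M}_t^4\mathbf{a}$ are bounded by $\|\mathbf{a}\|^2$, so this expression is at most $(s+2)\|\nabla\loss_{\batch_t}(\bm\theta_t)\|^2$, and with the leading factor of $2$ from the $(a+b)^2$ split we obtain $2(s+2)\mathbb{E}_{\batch_t}[\|\nabla\loss_{\batch_t}(\bm\theta_t)\|^2]$.

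Finally, to match the stated $4(s+2)$ factor I would pass from the minibatch gradient to the true gradient via triangle plus Young: $\|\nabla\loss_{\batch_t}\|^2\le 2\|\nabla\loss\|^2+2\|\nabla\loss_{\batch_t}-\nabla\loss\|^2$, and then use the bounded-variance part of \ref{con:conv_gradient} together with averaging over $B$ samples to bound the second term by $2\sigma^2/B$. Summing the two contributions produces exactly $\tfrac{15L^2\mu^2 s^3}{2}+4(s+2)\bigl(\|\nabla\loss(\bm\theta_t)\|^2+\sigma^2/B\bigr)$. The only subtle step I expect to be the main obstacle is handling the Gaussian moment $\mathbb{E}[\langle \mathbf{a},\bm\xi\rangle^2\|\bm\xi\|^2]$ correctly and tight enough so that the $(s+2)$ coefficient (rather than something like $3s^2$) appears; that tightness is exactly what Lemma \ref{lemma:gaussian_inner_prod} delivers, and the accompanying book-keeping with $\sigma_{\max}(\mathbf{M}_t)=1$ turning $\mathrm{Tr}(\mathbf{M}_t^2)$ into $\mathrm{srank}(\mathbf{M}_t)$ is what keeps the bound dimension-free in $d$.
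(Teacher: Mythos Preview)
Your proposal is correct and follows essentially the same route as the paper's proof: the same Taylor-based split of $\grad_t$ into a directional-derivative term and a smoothness remainder, the same application of Lemma~\ref{lemma:nth_moments} (with $\mathbf{A}=\mathbf{M}_t^2$) to bound $\mathbb{E}[\|\mathbf{M}_t\mathbf{u}_t\|^6]\le 15s^3$, and the same use of Lemma~\ref{lemma:gaussian_inner_prod} to extract the $(s+2)$ coefficient. The only cosmetic difference is in the last step: the paper reaches $\mathbb{E}_{\mathcal{D}}[\|\nabla\loss_{\batch_t}(\bm\theta_t)\|^2]\le 2(\|\nabla\loss(\bm\theta_t)\|^2+\sigma^2/B)$ directly (a slightly loose bias--variance bound), whereas you route through Young's inequality, but both yield the same final constant.
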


\begin{proof}
    By the definition of $\grad_t$ and $L$-smoothness, we have
    \begin{align*}
        \|\grad_t\|^2 & = \left\|\frac{\loss_{\batch_t}(\bm\theta_t + \mu \mathbf{M}_t\mathbf{u}_t) - \loss_{\batch_t}(\bm\theta_t - \mu \mathbf{M}_t\mathbf{u}_t)}{2\mu} \mathbf{M}_t\mathbf{u}_t\right\|^2 \\
        & \le \frac{2}{\mu^2} \left\|\Big(\loss_{\batch_t}(\bm\theta_t + \mu \mathbf{M}_t \mathbf{u}_t) - \loss_{\batch_t}(\bm\theta_t) - \mu \langle \nabla\loss_{\batch_t}(\bm\theta_t), \mathbf{M}_t \mathbf{u}_t\rangle\Big) \mathbf{M}_t\mathbf{u}_t\right\|^2 + 2\big\|\langle \nabla\loss_{\batch_t}(\bm\theta_t), \mathbf{M}_t \mathbf{u}_t\rangle \mathbf{M}_t\mathbf{u}_t\big\|^2 \\
        & \le \frac{\mu^2 L^2}{2} \|\mathbf{M}_t\mathbf{u}_t\|^6 + 2\big\|\langle \nabla\loss_{\batch_t}(\bm\theta_t), \mathbf{M}_t\mathbf{u}_t\rangle \mathbf{M}_t\mathbf{u}_t\big\|^2
    \end{align*}
    Note that the first term can be bounded as follows.
    \begin{align*}
        \mathbb{E}\big[\|\mathbf{M}_t\mathbf{u}_t\|^2\big] & = \mathbb{E}\big[\mathbf{u}_t^\top \mathbf{M}_t^2 \mathbf{u}_t\big] = \mathrm{Tr}(\mathbf{M}_t^2) \le s \\
        \mathbb{E}\big[\|\mathbf{M}_t\mathbf{u}_t\|^6\big]& = \mathbb{E}\big[(\mathbf{u}_t^\top \mathbf{M}_t^2 \mathbf{u}_t)^3\big] \\
        & \le 15\mathrm{Tr}(\mathbf{M}_t^2)^3 \\
        & \le 15s^3 
    \end{align*}
    under Lemma \ref{lemma:nth_moments} and $\mathrm{srank}(\mathbf{M}_t) \le s$. 
    Hence, it follows that 
    \begin{align*}
        \mathbb{E}_{\mathbf{u}_t}\big[\|\grad_t\|^2\big] & = \frac{\mu^2 L^2}{2} \mathbb{E}\big[\|\mathbf{M}_t \mathbf{u}_t\|^6\big] + 2\mathbb{E}\Big[\big\|\langle \nabla\loss_{\batch_t}(\bm\theta_t), \mathbf{M}_t \mathbf{u}_t\rangle \mathbf{M}_t\mathbf{u}_t\big\|^2\Big] \\
        & \le \frac{15 \mu^2 L^2 s^3}{2} + 2(s + 2) \|\nabla\loss_{\batch_t}(\bm\theta_t)\|^2
    \end{align*}
    where we use Lemma \ref{lemma:gaussian_norm} and Lemma \ref{lemma:gaussian_inner_prod}. Lastly, taking the expectations with respect to all randomness, we obtain
    \begin{align*}
        \mathbb{E}\big[\|\grad_t\|^2\big] & \le \frac{15 \mu^2 L^2 s^3}{2} + 2(s + 2) \mathbb{E}_{\mathcal{D}}\big[\|\nabla\loss_{\batch_t}(\bm\theta_t)\|^2\big] \\
        & \le \frac{15\mu^2 L^2 s^3}{2} + 4(s + 2) \left(\|\nabla\loss(\bm\theta_t)\|^2 + \frac{\sigma^2}{B}\right),
    \end{align*}
    using the condition \ref{con:conv_gradient}.
\end{proof}

From now, we provide some auxiliary lemmas under $\mathrm{srank}(\mathbf{M}_t) \le s$ and Assumption \ref{assumption:intdim}.
\begin{lemma}\label{lemma:intrinsic_distance}
    The smoothed loss $\mathcal{L}_\mu$ under Assumption \ref{assumption:intdim} satisfies
    \begin{align*}
        \|\nabla\loss_{\mu}(\bm\theta_t) - \nabla\loss(\bm\theta_t)\| & \le \frac{\sqrt{3}\mu L r \sqrt{d}}{2} + \frac{\mu L (d+4) \sqrt{d}}{2} \exp(-\frac{c_1}{2}d), \\
        \|\nabla\loss(\bm\theta_t)\|^2 & \le 3\mu^2 L^2 r^2 d + \mu^2 L^2 d(d+4)^2 \exp(-c_1 d) + 2\|\nabla\loss_\mu(\bm\theta_t)\|^2. 
    \end{align*}
\end{lemma}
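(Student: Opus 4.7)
The plan is to mimic the derivation of the previous lemma (the version without Assumption~\ref{assumption:intdim}) but replace the crude $L$-smoothness bound on the Taylor remainder by a sharper Hessian bound that exploits the local intrinsic dimension, paying a Gaussian-tail correction for the region where the assumption is not available.

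First, I would start from the familiar integral identity
\[
\nabla\loss_{\mu}(\bm\theta_t)-\nabla\loss(\bm\theta_t)
=\frac{1}{Z}\int_{\mathbb{R}^d}\frac{\loss(\bm\theta_t+\mu\mathbf{u})-\loss(\bm\theta_t)-\mu\langle\nabla\loss(\bm\theta_t),\mathbf{u}\rangle}{\mu}\,\mathbf{u}\,e^{-\|\mathbf{u}\|^2/2}\,d\mathbf{u},
\]
and write the numerator with the integral-form Taylor remainder $\tfrac{\mu^2}{2}\mathbf{u}^\top\nabla^2\loss(\xi)\mathbf{u}$ for some $\xi$ on the segment $[\bm\theta_t,\bm\theta_t+\mu\mathbf{u}]$. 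I would then split the integration domain into $\mathcal{A}=\{\|\mathbf{u}\|\le R\}$ and its complement $\mathcal{A}^c$, choosing $R$ so that $\mathcal{A}$ lies inside the neighborhood of Assumption~\ref{assumption:intdim} (since in the regime of interest $\mu\sqrt d$ is negligible compared to $2\alpha sG+2\mu\sqrt d$, any $R$ of order $\sqrt d$ works, and the Gaussian concentration of $\|\mathbf{u}\|$ around $\sqrt d$ makes $\mathcal{A}^c$ exponentially unlikely).

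On $\mathcal{A}$, the Hessian bound $\nabla^2\loss(\xi)\preceq\mathbf{H}(\bm\theta_t)$ from Assumption~\ref{assumption:intdim} yields a pointwise bound of $\tfrac{\mu}{2}\mathbf{u}^\top\mathbf{H}(\bm\theta_t)\mathbf{u}$ on the integrand's scalar factor. Pulling this through Cauchy--Schwarz I obtain a contribution of order
\[
\frac{\mu}{2}\mathbb{E}\big[(\mathbf{u}^\top\mathbf{H}(\bm\theta_t)\mathbf{u})\,\|\mathbf{u}\|\big]
\le\frac{\mu}{2}\sqrt{\mathbb{E}[(\mathbf{u}^\top\mathbf{H}(\bm\theta_t)\mathbf{u})^2]\cdot\mathbb{E}[\|\mathbf{u}\|^2]},
\]
and invoking Lemma~\ref{lemma:nth_moments} gives $\sqrt{3\,\mathrm{Tr}(\mathbf{H}(\bm\theta_t))^2\cdot d}$. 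The bound $\mathrm{intdim}(\mathbf{H}(\bm\theta_t))\le r$ together with $\mathbf{H}(\bm\theta_t)\preceq L\mathbf{I}_d$ gives $\mathrm{Tr}(\mathbf{H}(\bm\theta_t))\le rL$, producing precisely the first term $\tfrac{\sqrt 3\,\mu L r\sqrt d}{2}$. On $\mathcal{A}^c$ I fall back to the global $L$-smoothness bound $\tfrac{\mu L}{2}\|\mathbf{u}\|^3$ on the scalar factor, and control the truncated moment $\mathbb{E}[\|\mathbf{u}\|^3\mathbb{1}_{\mathcal{A}^c}]$ via a Cauchy--Schwarz split $\sqrt{\mathbb{E}\|\mathbf{u}\|^6}\cdot\sqrt{\mathbb{P}(\mathcal{A}^c)}$, using Lemma~\ref{lemma:gaussian_norm} for the moment and standard Gaussian concentration for the tail; the proper choice of $R$ yields the exponential factor $\exp(-c_1 d/2)$ and matches the second term $\tfrac{\mu L(d+4)\sqrt d}{2}\exp(-c_1d/2)$. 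The second inequality follows immediately by triangle inequality together with $(a+b)^2\le 2a^2+2b^2$ and Young's inequality.

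The main obstacle is the bookkeeping on $\mathcal{A}^c$: one has to pick $R$ large enough for the Gaussian tail to dominate and produce the clean $\exp(-c_1 d/2)$ factor, while small enough to remain inside the neighborhood of Assumption~\ref{assumption:intdim}, and the resulting constant $c_1$ should be traceable through the concentration inequality used for $\mathbb{P}(\|\mathbf{u}\|>R)$. Everything else---Taylor expansion, Cauchy--Schwarz, and the moment computation via Lemma~\ref{lemma:nth_moments}---is routine refinement of the previous lemma's proof.
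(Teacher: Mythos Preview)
Your proposal is correct and follows essentially the same route as the paper: start from the Gaussian-smoothing identity, split on the event where $\mu\mathbf{u}$ stays inside the Assumption~\ref{assumption:intdim} neighborhood, use the Hessian upper bound $\mathbf{H}(\bm\theta_t)$ on the good event (combined with Lemma~\ref{lemma:nth_moments} for $\mathbb{E}[(\mathbf{u}^\top\mathbf{H}\mathbf{u})^2]\le 3\,\mathrm{Tr}(\mathbf{H})^2\le 3L^2r^2$), and fall back to $L$-smoothness plus Gaussian concentration on the complement. The only cosmetic difference is the order of operations: the paper first applies Cauchy--Schwarz once to separate the scalar factor from $\|\mathbf{u}\|$, obtaining a global factor $\sqrt{\mathbb{E}\|\mathbf{u}\|^2}=\sqrt d$, and only then splits $\mathbb{E}[(\text{scalar})^2]$ on the event; you instead split the integral first and Cauchy--Schwarz each piece. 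This means your tail term comes out as $\tfrac{\mu L}{2}\sqrt{\mathbb{E}\|\mathbf{u}\|^6}\,\exp(-c_1d/2)\le\tfrac{\mu L}{2}(d+6)^{3/2}\exp(-c_1d/2)$ rather than the paper's $\tfrac{\mu L(d+4)\sqrt d}{2}\exp(-c_1d/2)$, but both are $\mathcal{O}(\mu L d^{3/2}e^{-c_1d/2})$ and the discrepancy is immaterial.
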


\begin{proof}
    Let $\mathcal{E}$ be the event such that $\|\bm\theta_t + \mu \mathbf{u} - \bm\theta_t\| \le 2\alpha sG + 2\mu\sqrt{d}$. On the event $\mathcal{E}$, by Assumption \ref{assumption:intdim}, there exists some matrix $\mathbf{H}(\bm\theta_t)$ such that $\nabla^2 \loss(\bm\theta_t + \mu \mathbf{u}) \preceq \mathbf{H}(\bm\theta_t)$ with $\mathrm{intdim}(\mathbf{H}(\bm\theta_t)) \le r$. Thus, we have the following inequality
    \begin{align*}
        \loss(\bm\theta_t + \mu \mathbf{u}) & \le \loss(\bm\theta_t) + \mu \nabla \loss(\bm\theta_t)^\top \mathbf{u} + \frac{\mu^2}{2} \mathbf{u}^\top \mathbf{H}(\bm\theta_t) \mathbf{u} \\
        & = \loss(\bm\theta_t) + \mu \nabla \loss(\bm\theta_t)^\top \mathbf{u} + \frac{\mu^2}{2} \Big\langle \mathbf{H}(\bm\theta_t), \mathbf{u}\mathbf{u}^\top \Big\rangle 
    \end{align*}
    Likewise, on the event $\mathcal{E}^c$, it should hold that 
    \begin{align*}
        \loss(\bm\theta_t + \mu \mathbf{u}) \le \loss(\bm\theta_t) + \mu \nabla\loss(\bm\theta_t)^\top \mathbf{u} + \frac{\mu^2 L}{2} \|\mathbf{u}\|^2
    \end{align*}
    By the concentration inequality for (sub-)Gaussian random vector, we have $\mathbb{P}[\mathcal{E}^c] \le \exp(-c_1 d)$ for some universal constant $c_1 > 0$.
    \begin{align*}
        \|\nabla\loss_\mu(\bm\theta_t) - \nabla \loss(\bm\theta_t)\| & = \|\nabla \mathbb{E}\big[\loss(\bm\theta_t + \mu \mathbf{u})\big] - \nabla \loss(\bm\theta_t)\| \\
        & \le \Big\|\mathbb{E}_\mathbf{u}\Big[\frac{\loss(\bm\theta_t + \mu \mathbf{u})-\loss(\bm\theta_t)}{\mu} \mathbf{u}\Big] - \nabla\loss(\bm\theta_t)\Big\| \\
        & = \Big\|\mathbb{E}_{\mathbf{u}}\Big[\frac{\loss(\bm\theta_t + \mu \mathbf{u}) - \loss(\bm\theta_t) - \mu \langle \nabla\loss(\bm\theta_t), \mathbf{u}\rangle }{\mu} \mathbf{u}\Big]\Big\| \\
        & \le \mathbb{E}_{\mathbf{u}}\Big[\Big(\frac{\loss(\bm\theta_t + \mu \mathbf{u}) - \loss(\bm\theta_t) - \mu \langle \nabla\loss(\bm\theta_t), \mathbf{u}\rangle}{\mu}\Big)^2\Big]^{1/2} \mathbb{E}\big[\|\mathbf{u}\|^2\big]^{1/2}
    \end{align*}
    The first term could be bounded as 
    \begin{align*}
        \mathbb{E}_{\mathbf{u}}\Big[\Big(\frac{\loss(\bm\theta_t + \mu \mathbf{u}) - \loss(\bm\theta_t) - \mu \langle \nabla\loss(\bm\theta_t), \mathbf{u}\rangle}{\mu}\Big)^2\Big] & \le \frac{\mu^2}{4}\mathbb{E}_{\mathbf{u}}\Big[\big(\mathbf{u}^\top \mathbf{H}(\bm\theta_t) \mathbf{u}\big)^2\Big] + \exp(-c_1 d) \frac{\mu^2L^2}{4}\mathbb{E}_{\mathbf{u}}\big[\|\mathbf{u}\|^4\big] \\
        & \le \frac{3\mu^2 L^2r^2}{4} + \frac{\mu^2 L^2 (d+4)^2}{4} \exp(-c_1 d)
    \end{align*}
    where we use Lemma \ref{lemma:gaussian_norm} and Lemma \ref{lemma:nth_moments} as 
    \begin{align*}
        \mathbb{E}\Big[\big(\mathbf{u}^\top \mathbf{H}(\bm\theta_t) \mathbf{u}\big)^2\Big] & \le 3\mathrm{Tr}\big(\mathbf{H}(\bm\theta_t)\big)^2 \le 3L^2r^2 \\
        \mathbb{E}\big[\|\mathbf{u}\|^4\big] & \le (d+4)^2
    \end{align*}
    Therefore, it follows that 
    \begin{align*}
        \|\nabla\loss_\mu(\bm\theta_t) - \nabla\loss(\bm\theta_t)\| & \le \Big(\frac{\sqrt{3}\mu L r}{2} + \frac{\mu L (d+4)}{2} \exp(-\frac{c_1}{2}d)\Big)\sqrt{d} \\
        & = \frac{\sqrt{3}\mu L r \sqrt{d}}{2} + \frac{\mu L (d+4) \sqrt{d}}{2} \exp(-\frac{c_1}{2}d)
    \end{align*}
    Hence, we finally obtain 
    \begin{align*}
        \|\nabla\loss(\bm\theta_t)\|^2 & \le 2\|\nabla\loss_\mu(\bm\theta_t) - \nabla\loss(\bm\theta_t)\|^2 + 2\|\nabla\loss_\mu(\bm\theta_t)\|^2 \\
        & \le 3\mu^2 L^2 r^2 d + \mu^2 L^2 d(d+4)^2 \exp(-c_1 d) + 2|\nabla\loss_\mu(\bm\theta_t)\|^2  
    \end{align*}
\end{proof}
\begin{lemma}\label{lemma:descent}
    The following inequality holds
    \begin{align*}
        \loss_\mu(\bm\theta_{t+1}) &\le \loss_\mu(\bm\theta_t) - \alpha \Big\langle \nabla \loss_\mu(\bm\theta_t), \nabla \loss_{\mu, \mathbf{M}_t}(\bm\theta_t) \Big\rangle + \frac{\alpha^2}{2} \Big\langle \mathbb{E}\big[\grad_t \grad_t^\top\big], \mathbf{H}(\bm\theta_t) \Big\rangle \\
        & \qquad \qquad + \frac{\alpha^2}{2} \Big\langle \mathbb{E}\Big[\grad_t \grad_t^\top \!\cdot\! \mathbb{I}(\mathcal{E}^c)\Big], L\!\cdot\!\mathbf{I}_d - \mathbf{H}(\bm\theta_t) \Big\rangle,
    \end{align*}
    and the last term can be bounded by 
    \begin{align*}
        \frac{\alpha^2}{2} \Big\langle \mathbb{E}\Big[\grad_t \grad_t^\top \!\cdot\! \mathbb{I}(\mathcal{E}^c)\Big], L\!\cdot\! \mathbf{I}_d - \mathbf{H}(\bm\theta_t) \Big\rangle \le \alpha^2 L \Big(37\mu^2 L^2 s^3 + 42s\Big(\|\nabla\loss(\bm\theta_t)\|^2 + \frac{\sigma^2}{B}\Big)\Big)\exp\Big(-\frac{s}{2}\Big)
    \end{align*}
\end{lemma}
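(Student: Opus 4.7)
The plan is to derive the descent inequality from a second-order Taylor expansion of $\loss$ at each realization of the smoothing variable, and then to handle the closing upper bound on the correction term with a trace inequality, Cauchy--Schwarz, Gaussian concentration, and a fourth-moment estimate on $\grad_t$.

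First, for a fixed $\mathbf{u}\sim\mathcal{N}(\bm 0,\mathbf{I}_d)$, I would apply Taylor's theorem with integral remainder to $\loss(\,\cdot\,+\mu\mathbf{u})$ along the step $-\alpha\grad_t$, writing the remainder as $\frac{\alpha^2}{2}\grad_t^{\top}\tilde{\mathbf{H}}_{\mathbf{u}}\grad_t$ with $\tilde{\mathbf{H}}_{\mathbf{u}}=2\int_0^1(1-s)\nabla^2\loss(\bm\theta_t+\mu\mathbf{u}-s\alpha\grad_t)\,ds$. Averaging over $\mathbf{u}$ turns the left-hand side into $\loss_\mu(\bm\theta_{t+1})-\loss_\mu(\bm\theta_t)$ and the linear term into $-\alpha\langle\nabla\loss_\mu(\bm\theta_t),\grad_t\rangle$; a further expectation over $(\mathbf{u}_t,\batch_t)$ together with the unbiasedness $\mathbb{E}[\grad_t]=\nabla\loss_{\mu,\mathbf{M}_t}(\bm\theta_t)$ produces the inner-product term of the statement. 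Next, I would define $\mathcal{E}$ as an event measurable with respect to $(\mathbf{u}_t,\batch_t)$ on which the update radius $\alpha\|\grad_t\|$ is controlled (via $\|\mathbf{M}_t\mathbf{u}_t\|^2\lesssim s$), so that every iterate $\bm\theta_t-s\alpha\grad_t+\mu\mathbf{u}$ lies inside the ball of Assumption~\ref{assumption:intdim} for $s\in[0,1]$ and for $\mathbf{u}$ in a standard high-probability Gaussian set, giving $\tilde{\mathbf{H}}_{\mathbf{u}}\preceq\mathbf{H}(\bm\theta_t)$ there; on $\mathcal{E}^c$ only the $L$-smoothness fallback $\tilde{\mathbf{H}}_{\mathbf{u}}\preceq L\mathbf{I}_d$ is available. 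The pointwise domination $\tilde{\mathbf{H}}_{\mathbf{u}}\preceq\mathbf{H}(\bm\theta_t)+\mathbb{I}(\mathcal{E}^c)(L\mathbf{I}_d-\mathbf{H}(\bm\theta_t))$, combined with $\grad_t^{\top}A\grad_t=\langle\grad_t\grad_t^{\top},A\rangle$, then splits the quadratic term exactly into the two Frobenius inner products in the statement.

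For the upper bound on the correction term, I would use $0\preceq L\mathbf{I}_d-\mathbf{H}(\bm\theta_t)\preceq L\mathbf{I}_d$ together with the positivity of $\grad_t\grad_t^{\top}\mathbb{I}(\mathcal{E}^c)$ to reduce the expression to $\alpha^2 L\,\mathbb{E}[\|\grad_t\|^2\mathbb{I}(\mathcal{E}^c)]$. Cauchy--Schwarz then yields $\mathbb{E}[\|\grad_t\|^2\mathbb{I}(\mathcal{E}^c)]\le\sqrt{\mathbb{E}[\|\grad_t\|^4]\,\mathbb{P}(\mathcal{E}^c)}$; a sub-gaussian concentration bound on $\|\mathbf{M}_t\mathbf{u}_t\|^2$ (whose mean is at most $s$ under $\mathrm{srank}(\mathbf{M}_t)\le s$) produces $\mathbb{P}(\mathcal{E}^c)\lesssim e^{-s}$, and a fourth-moment computation in the spirit of Lemma~\ref{lemma:expected_grad}, but invoking the $n=4$ case of Lemma~\ref{lemma:nth_moments} along with a quartic variant of Lemma~\ref{lemma:gaussian_inner_prod}, gives $\mathbb{E}[\|\grad_t\|^4]\lesssim\mu^4L^4 s^6+s^2(\|\nabla\loss(\bm\theta_t)\|^2+\sigma^2/B)^2$. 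Taking the square root and multiplying by $\sqrt{e^{-s}}=e^{-s/2}$, then by $\alpha^2 L$, and absorbing universal constants reproduces the stated $37\mu^2L^2s^3+42s(\|\nabla\loss(\bm\theta_t)\|^2+\sigma^2/B)$ prefactor times $\exp(-s/2)$.

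The main obstacle will be choosing $\mathcal{E}$ so that $\mathbb{I}(\mathcal{E})$ depends only on $(\mathbf{u}_t,\batch_t)$---so that it commutes with the $\mathbf{u}$-integral defining $\loss_\mu$---while still enclosing the Hessian upper bound uniformly in $\mathbf{u}$. This forces a clean split of the displacement $\bm\theta_t+\mu\mathbf{u}-s\alpha\grad_t-\bm\theta_t$ into a controlled part $\alpha\|\grad_t\|\lesssim\alpha sG$ (valid on $\mathcal{E}$, where $\|\mathbf{M}_t\mathbf{u}_t\|^2\lesssim s$) and a smoothing part $\mu\|\mathbf{u}\|\lesssim\mu\sqrt{d}$ (valid on a standard Gaussian event for $\mathbf{u}$), matching precisely the $2\alpha sG+2\mu\sqrt{d}$ radius baked into Assumption~\ref{assumption:intdim}; the residual contribution from the low-probability tail of $\mathbf{u}$ is absorbed into the $e^{-cd}$ remainder that the subsequent convergence analysis discards under $\mu=\mathcal{O}(1/(L\sqrt{dT}))$. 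Once this measurability issue is handled, tracking the explicit constants $37$ and $42$ is routine book-keeping.
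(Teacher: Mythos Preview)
Your proposal is correct and follows essentially the same architecture as the paper's proof: define $\mathcal{E}$ as the event $\|\bm\theta_{t+1}-\bm\theta_t\|\le 2\alpha sG+2\mu\sqrt{d}$, split the second-order Taylor remainder into the $\mathbf{H}(\bm\theta_t)$ piece on $\mathcal{E}$ and the $L\mathbf{I}_d$ fallback on $\mathcal{E}^c$, then bound the correction via $L\mathbf{I}_d-\mathbf{H}\preceq L\mathbf{I}_d$, Cauchy--Schwarz, Hanson--Wright for $\mathbb{P}(\mathcal{E}^c)$, and a fourth-moment estimate using $\mathbb{E}[\|\mathbf{M}_t\mathbf{u}_t\|^{12}]\le 10395s^6$ and $\mathbb{E}[|\langle\cdot,\mathbf{M}_t\mathbf{u}_t\rangle|^4\|\mathbf{M}_t\mathbf{u}_t\|^4]\le 105s^2\|\cdot\|^4$. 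If anything you are more careful than the paper in explicitly flagging that the $\mu\|\mathbf{u}\|$ contribution must be handled separately from the $(\mathbf{u}_t,\batch_t)$-measurable event $\mathcal{E}$---the paper's write-up glosses over this point---but the overall strategy and the constant-tracking are the same.
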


\begin{proof}
    Let $\mathcal{E}$ be the event that $\|\bm\theta_{t+1} - \bm\theta_t\| \le 2\alpha s G + 2\mu \sqrt{d} $. On $\mathcal{E}$, we have
    \begin{align*}
        \loss(\bm\theta_{t+1} + \mu \mathbf{u}) & \le \loss(\bm\theta_t + \mu \mathbf{u}) - \alpha\langle \nabla\loss(\bm\theta_t + \mu \mathbf{u}), \grad_t\rangle + \frac{1}{2} \Big\langle \grad_t \grad_t^\top, \mathbf{H}(\bm\theta_t) \Big\rangle
    \end{align*}
    On $\mathcal{E}^c$, we have for $L$-smooth loss,
    \begin{align*}
        \loss(\bm\theta_{t+1} + \mu \mathbf{u}) & \le \loss(\bm\theta_t + \mu \mathbf{u}) - \alpha \langle \nabla\loss(\bm\theta_t + \mu \mathbf{u}), \grad_t \rangle + \frac{L}{2} \|\grad_t\|^2 
    \end{align*}
    Combining two inequalities along the expectations with respect to $\mathbf{u}$ and $\mathbf{u}_t$, we have
    \begin{align*}
        \loss_\mu(\bm\theta_{t+1}) & \le \loss_\mu(\bm\theta_t) - \alpha \Big\langle \nabla\loss_\mu(\bm\theta_t), \nabla\loss_{\mu, \mathbf{M}_t}(\bm\theta_t) \Big\rangle + \frac{\alpha^2}{2}\Big\langle \mathbb{E}\big[\grad_t \grad_t^\top \!\cdot\! \mathbb{I}(\mathcal{E})\big], \mathbf{H}(\bm\theta_t) \Big\rangle \\
        & \qquad + \frac{\alpha^2 L}{2} \mathbb{E}\big[\|\grad_t\|^2 \!\cdot\! \mathbb{I}(\mathcal{E}^c)\big] \\
        & = \loss_\mu(\bm\theta_t) - \alpha \Big\langle \nabla \loss_\mu(\bm\theta_t), \nabla \loss_{\mu, \mathbf{M}_t}(\bm\theta_t) \Big\rangle + \frac{\alpha^2}{2} \Big\langle \mathbb{E}\big[\grad_t \grad_t^\top\big], \mathbf{H}(\bm\theta_t) \Big\rangle \\
        & \qquad + \frac{\alpha^2}{2} \Big\langle \mathbb{E}\Big[\grad_t \grad_t^\top \!\cdot\! \mathbb{I}(\mathcal{E}^c)\Big], L\!\cdot\!\mathbf{I}_d - \mathbf{H}(\bm\theta_t) \Big\rangle 
    \end{align*}
    The last term could be bounded as
    \begin{align*}
        \frac{\alpha^2}{2} \Big\langle \mathbb{E}\Big[\grad_t \grad_t^\top \cdot \mathbb{I}(\mathcal{E}^c)\Big], L\!\cdot\! \mathbf{I}_d - \mathbf{H}(\bm\theta_t) \Big\rangle & \le \alpha^2 L \mathbb{E}\Big[\|\grad_t\|^2 \cdot \mathbb{I}(\mathcal{E}^c)\Big] \\
        & = \alpha^2 L \mathbb{E}\Big[\|\grad_t\|^4\Big]^{1/2} \mathbb{P}[\mathcal{E}^c]^{1/2}.
    \end{align*}
    The first term could be computed as
    \begin{align*}
        \|\grad_t\|^4 & \le \left\|\frac{\loss_{\batch_t}(\bm\theta_t + \mu \mathbf{M}_t\mathbf{u}_t) - \loss_{\batch_t}(\bm\theta_t - \mu \mathbf{M}_t\mathbf{u}_t)}{2\mu} \mathbf{M}_t\mathbf{u}_t\right\|^4 \\
        & \le \frac{2}{\mu^4} \left\|\Big(\loss_{\batch_t}(\bm\theta_t + \mu \mathbf{M}_t \mathbf{u}_t) - \loss_{\batch_t}(\bm\theta_t) - \mu \langle \nabla\loss_{\batch_t}(\bm\theta_t), \mathbf{M}_t \mathbf{u}_t\rangle\Big) \mathbf{M}_t\mathbf{u}_t\right\|^4 + 4\big\|\langle \nabla\loss_{\batch_t}(\bm\theta_t), \mathbf{M}_t \mathbf{u}_t\rangle \mathbf{M}_t\mathbf{u}_t\big\|^4 \\
        & \le \frac{\mu^4 L^4}{8} \|\mathbf{M}_t\mathbf{u}_t\|^{12} + 4\Big\|\langle \nabla\loss_{\batch_t}(\bm\theta_t), \mathbf{M}_t\mathbf{u}_t\rangle \mathbf{M}_t\mathbf{u}_t\Big\|^4 
    \end{align*}
    The expectation of each term could be bounded as
    \begin{align*}
        \mathbb{E}\big[\|\mathbf{M}_t\mathbf{u}_t\|^{12}\big] & \le 10395s^6 \\
        \mathbb{E}\big[\|\langle\nabla \loss_{\batch_t}(\bm\theta_t), \mathbf{M}_t\mathbf{u}_t\rangle\mathbf{M}_t\mathbf{u}_t\|^4\big] & \le 105 s^2 \|\nabla\loss_{\batch_t}(\bm\theta_t)\|^4
    \end{align*}
    Thus, we have
    \begin{align*}
        \mathbb{E}\big[\|\grad_t\|^4\big] & \le \frac{10395\mu^4 L^4 s^6}{8} + 420s^2\|\nabla\loss_{\batch_t}(\bm\theta_t)\|^4 \\
        \mathbb{E}\big[\|\grad_t\|^4\big]^{1/2} & \le 37\mu^2 L^2 s^3 + 21s\|\nabla\loss_{\batch_t}(\bm\theta_t)\|^2 \\
        & \le 37\mu^2 L^2 s^3 + 42s(\|\nabla\loss(\bm\theta_t)\|^2 + \frac{\sigma^2}{B}) 
    \end{align*}
    
    Now, we compute the probability of the event $\mathcal{E}^c$. Note that on the event $\mathcal{E}^c$, we have
    \begin{align*}
        2\alpha s G + \mu \sqrt{d} \le \|\bm\theta_{t+1} - \bm\theta_t\| & = \alpha \|\grad_t\| \le \alpha G \|\mathbf{M}_t\mathbf{u}_t\|^2 
    \end{align*}
    Therefore, we should compute the probability such that
    \begin{align*}
        \|\mathbf{M}_t\mathbf{u}_t\|^2 \ge 2s + \frac{\mu \sqrt{d}}{\alpha G} 
    \end{align*}
    Using Hanson-Wright inequality~\citep{hanson1971bound,wright1973bound}, it is easy to check that $\mathbb{P}[\mathcal{E}^c] \le \exp(-s/2)$ holds. Combining all the items, we finally have
    \begin{align*}
        \frac{\alpha^2}{2} \Big\langle \mathbb{E}\Big[\grad_t \grad_t^\top \!\cdot\! \mathbb{I}(\mathcal{E}^c)\Big], L\!\cdot\! \mathbf{I}_d - \mathbf{H}(\bm\theta_t) \Big\rangle & \le \alpha^2 L \Big(37\mu^2 L^2 s^3 + 42s\Big(\|\nabla\loss(\bm\theta_t)\|^2 + \frac{\sigma^2}{B}\Big)\Big)\exp\Big(-\frac{s}{2}\Big)
    \end{align*}
\end{proof}

\begin{lemma}\label{lemma:second_order_term}
    The following inequality holds
    \begin{align*}
        \mathbb{E}\big[\grad_t^\top \mathbf{H}(\bm\theta_t) \grad_t\big] & \le 13\mu^2 L^3 s^2 \rho_t + 9L\rho_{\mathrm{max}} \Big(\|\nabla\loss(\bm\theta_t)\|^2 + \frac{\sigma^2}{B}\Big).
    \end{align*}
\end{lemma}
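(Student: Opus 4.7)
The strategy is to run the same decomposition used in Lemma~\ref{lemma:expected_grad} but to insert the curvature factor $\mathbf{u}_t^\top \mathbf{M}_t \mathbf{H}(\bm\theta_t) \mathbf{M}_t \mathbf{u}_t$ into every Gaussian moment that appears, so that the alignment $\rho_t=\mathrm{Tr}(\mathbf{M}_t\mathbf{H}(\bm\theta_t)\mathbf{M}_t)/\lambda_{\max}(\mathbf{H}(\bm\theta_t))$ comes out of the trace identities instead of the dimension $d$. Abbreviate $\mathbf{H}=\mathbf{H}(\bm\theta_t)$ and $\mathbf{a}=\nabla\loss_{\batch_t}(\bm\theta_t)$, and write $\grad_t = c\,\mathbf{M}_t\mathbf{u}_t$ with $c=(\loss_{\batch_t}(\bm\theta_t+\mu\mathbf{M}_t\mathbf{u}_t)-\loss_{\batch_t}(\bm\theta_t-\mu\mathbf{M}_t\mathbf{u}_t))/(2\mu)$, so that $\grad_t^\top\mathbf{H}\grad_t = c^2\cdot(\mathbf{u}_t^\top\mathbf{M}_t\mathbf{H}\mathbf{M}_t\mathbf{u}_t)$.

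\textbf{Step 1 (scalar decomposition).} Two Taylor expansions under \ref{con:conv_smooth} give $|c-\langle\mathbf{a},\mathbf{M}_t\mathbf{u}_t\rangle|\le(\mu L/2)\|\mathbf{M}_t\mathbf{u}_t\|^2$. Young's inequality $c^2\le 2(c-\langle\mathbf{a},\mathbf{M}_t\mathbf{u}_t\rangle)^2 + 2\langle\mathbf{a},\mathbf{M}_t\mathbf{u}_t\rangle^2$ then yields
\[
    \grad_t^\top\mathbf{H}\grad_t \;\le\; \tfrac{\mu^2L^2}{2}\,\|\mathbf{M}_t\mathbf{u}_t\|^4\bigl(\mathbf{u}_t^\top\mathbf{M}_t\mathbf{H}\mathbf{M}_t\mathbf{u}_t\bigr) + 2\,\langle\mathbf{a},\mathbf{M}_t\mathbf{u}_t\rangle^2\bigl(\mathbf{u}_t^\top\mathbf{M}_t\mathbf{H}\mathbf{M}_t\mathbf{u}_t\bigr),
\]
exactly as in Lemma~\ref{lemma:expected_grad} but with the extra $\mathbf{H}$-weighted quadratic.

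\textbf{Step 2 (the $\mu^2$ term).} For PSD $A,B$ the Wick/Isserlis identity gives $\mathbb{E}[(\mathbf{u}^\top A\mathbf{u})^2(\mathbf{u}^\top B\mathbf{u})] = \mathrm{Tr}(A)^2\mathrm{Tr}(B) + 4\mathrm{Tr}(A)\mathrm{Tr}(AB) + 2\mathrm{Tr}(B)\mathrm{Tr}(A^2) + 8\mathrm{Tr}(A^2B)$. Specialising to $A=\mathbf{M}_t^2$ and $B=\mathbf{M}_t\mathbf{H}\mathbf{M}_t$, I use $\mathrm{Tr}(\mathbf{M}_t^2)\le s$, $\mathrm{Tr}(B)=\lambda_{\max}(\mathbf{H})\rho_t$, and the bound $\mathrm{Tr}(\mathbf{M}_t^{2k})\le\mathrm{Tr}(\mathbf{M}_t^2)\le s$ (since $\sigma_{\max}(\mathbf{M}_t)\le 1$); for the mixed traces I apply the key monotonicity
\[
    \mathrm{Tr}(\mathbf{M}_t^{2k}\mathbf{H}) \;=\; \sum_i \lambda_i(\mathbf{M}_t)^{2k}\,\mathbf{v}_i^\top\mathbf{H}\mathbf{v}_i \;\le\; \sum_i \lambda_i(\mathbf{M}_t)^{2}\,\mathbf{v}_i^\top\mathbf{H}\mathbf{v}_i \;=\; \lambda_{\max}(\mathbf{H})\rho_t,\qquad k\ge1,
\]
together with $\mathrm{Tr}(CD)\le\|C\|_{\mathrm{op}}\mathrm{Tr}(D)$ for PSD $C,D$. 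The four summands become $s^2L\rho_t$, $4sL\rho_t$, $2sL\rho_t$, and $8L\rho_t$; bounding $s^2+6s+8\le 15 s^2$ for $s\ge 1$ and multiplying by $\mu^2L^2/2$ gives the $13\mu^2L^3s^2\rho_t$ contribution.

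\textbf{Step 3 (the gradient term).} With $A=(\mathbf{M}_t\mathbf{a})(\mathbf{M}_t\mathbf{a})^\top$ and $B=\mathbf{M}_t\mathbf{H}\mathbf{M}_t$, the two-quadratic identity $\mathbb{E}[(\mathbf{u}^\top A\mathbf{u})(\mathbf{u}^\top B\mathbf{u})] = \mathrm{Tr}(A)\mathrm{Tr}(B)+2\mathrm{Tr}(AB)$ yields
\[
    \mathbb{E}\bigl[\langle\mathbf{a},\mathbf{M}_t\mathbf{u}_t\rangle^2\,\mathbf{u}_t^\top\mathbf{M}_t\mathbf{H}\mathbf{M}_t\mathbf{u}_t\bigr] = \|\mathbf{M}_t\mathbf{a}\|^2\lambda_{\max}(\mathbf{H})\rho_t + 2\,\mathbf{a}^\top\mathbf{M}_t^2\mathbf{H}\mathbf{M}_t^2\mathbf{a}.
\]
I bound $\|\mathbf{M}_t\mathbf{a}\|^2\le\|\mathbf{a}\|^2$ and, by the same monotonicity, $\mathbf{a}^\top\mathbf{M}_t^2\mathbf{H}\mathbf{M}_t^2\mathbf{a}\le\mathrm{Tr}(\mathbf{M}_t^4\mathbf{H})\|\mathbf{a}\|^2\le\lambda_{\max}(\mathbf{H})\rho_t\|\mathbf{a}\|^2$, producing $\le 3L\rho_t\|\mathbf{a}\|^2$. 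Finally, taking expectation over $\batch_t$ with \ref{con:conv_gradient} gives $\mathbb{E}[\|\mathbf{a}\|^2]\le\|\nabla\loss(\bm\theta_t)\|^2+\sigma^2/B$, and combining with $\rho_t\le\rho_{\max}$ and the constant-$2$ from Step 1 (absorbing a loose factor as in Lemma~\ref{lemma:expected_grad}) yields the $9L\rho_{\max}\bigl(\|\nabla\loss(\bm\theta_t)\|^2+\sigma^2/B\bigr)$ term.

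\textbf{Main obstacle.} The non-trivial step is Step 2: each of the Wick terms $\mathrm{Tr}(AB)$, $\mathrm{Tr}(A^2)$, $\mathrm{Tr}(A^2B)$ must be pushed into a bound proportional to $\rho_t$ rather than the naive $\|\mathbf{H}\|_{\mathrm{op}}=L$ factor that would spoil the $\rho_t$ dependence. The monotonicity $\mathrm{Tr}(\mathbf{M}_t^{2k}\mathbf{H})\le\mathrm{Tr}(\mathbf{M}_t^2\mathbf{H})$ for $k\ge1$, which hinges crucially on $\sigma_{\max}(\mathbf{M}_t)\le 1$, is the single ingredient that makes each higher-power trace collapse back to $\lambda_{\max}(\mathbf{H})\rho_t$ and is what allows the $\rho_t$ factor (rather than $L$) to survive throughout the Wick expansion.
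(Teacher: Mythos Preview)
Your proof is correct but follows a genuinely different route from the paper's. The paper decouples the scalar factor from the curvature factor via Cauchy--Schwarz,
\[
\mathbb{E}\bigl[c^2\,(\mathbf{u}_t^\top\mathbf{M}_t\mathbf{H}\mathbf{M}_t\mathbf{u}_t)\bigr]\;\le\;\mathbb{E}[c^4]^{1/2}\,\mathbb{E}\bigl[(\mathbf{u}_t^\top\mathbf{M}_t\mathbf{H}\mathbf{M}_t\mathbf{u}_t)^2\bigr]^{1/2},
\]
then bounds the second factor by $\sqrt{3}\,\mathrm{Tr}(\mathbf{M}_t\mathbf{H}\mathbf{M}_t)=\sqrt{3}L\rho_t$ using Lemma~\ref{lemma:nth_moments}, and the first factor by expanding $c\le \mu L\|\mathbf{M}_t\mathbf{u}_t\|^2/2 + |\langle\mathbf{a},\mathbf{M}_t\mathbf{u}_t\rangle|$ to the fourth power. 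This reduces everything to single-quadratic moment bounds that are already tabulated. By contrast, you keep the product intact and evaluate the mixed Gaussian moments $\mathbb{E}[(\mathbf{u}^\top A\mathbf{u})^2(\mathbf{u}^\top B\mathbf{u})]$ and $\mathbb{E}[(\mathbf{u}^\top A\mathbf{u})(\mathbf{u}^\top B\mathbf{u})]$ directly via Isserlis/Wick, and then control every resulting trace through the monotonicity $\mathrm{Tr}(\mathbf{M}_t^{2k}\mathbf{H})\le\mathrm{Tr}(\mathbf{M}_t^2\mathbf{H})=\lambda_{\max}(\mathbf{H})\rho_t$ for $k\ge1$, which is where $\sigma_{\max}(\mathbf{M}_t)\le1$ is used essentially. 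Your route avoids the Cauchy--Schwarz slack and in fact delivers sharper constants (roughly $7.5$ and $6$ in place of $13$ and $9$), at the cost of needing the mixed-moment identities rather than only the diagonal ones in Lemma~\ref{lemma:nth_moments}. Both arguments hinge on the same structural fact---that the alignment $\rho_t$ enters through $\mathrm{Tr}(\mathbf{M}_t\mathbf{H}\mathbf{M}_t)$---but the paper isolates it once via Cauchy--Schwarz, whereas you recover it term-by-term from the Wick expansion.
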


\begin{proof}
    By the definition of $\grad_t$, we have
    \begin{align*}
        \mathbb{E}\Big[\grad_t^\top \mathbf{H}(\bm\theta_t) \grad_t\Big] & = \frac{1}{4\mu^2}\mathbb{E}\Big[\Big(\loss_{\batch_t}(\bm\theta_t + \mu \mathbf{M}_t\mathbf{u}_t) - \loss_{\batch_t}(\bm\theta_t - \mu \mathbf{M}_t\mathbf{u}_t)\Big)^2 \mathbf{u}_t^\top \mathbf{M}_t\mathbf{H}(\bm\theta_t)\mathbf{M}_t\mathbf{u}_t\Big] \\
        & \le \frac{1}{4\mu^2} \mathbb{E}\Big[\Big(\loss_{\batch_t}(\bm\theta_t + \mu \mathbf{M}_t\mathbf{u}_t) - \loss_{\batch_t}(\bm\theta_t - \mu \mathbf{M}_t\mathbf{u}_t)\Big)^4\Big]^{1/2} \mathbb{E}\Big[\big(\mathbf{u}_t^\top \mathbf{M}_t\mathbf{H}(\bm\theta_t)\mathbf{M}_t\mathbf{u}_t\big)^2\Big]^{1/2} \\
        & \le \frac{\sqrt{3}L\rho}{4\mu^2} \mathbb{E}\Big[\Big(\loss_{\batch_t}(\bm\theta_t + \mu \mathbf{M}_t\mathbf{u}_t) - \loss_{\batch_t}(\bm\theta_t - \mu \mathbf{M}_t\mathbf{u}_t)\Big)^4\Big]^{1/2}
    \end{align*}
    The function difference could be handled as
    \begin{align*}
        \loss_{\batch_t}(\bm\theta_t + \mu \mathbf{M}_t\mathbf{u}_t) - \loss_{\batch_t}(\bm\theta_t - \mu \mathbf{M}_t\mathbf{u}_t) & \le \mu^2 L \|\mathbf{M}_t\mathbf{u}_t\|^2 + 2\mu \big\langle \nabla\loss_{\batch_t}(\bm\theta_t), \mathbf{M}_t\mathbf{u}_t \big\rangle 
    \end{align*}
    Therefore, the fourth exponent of this difference can be bounded by
    \begin{align*}
        \mathbb{E}\Big[\Big(\loss_{\batch_t}(\bm\theta_t + \mu \mathbf{M}_t\mathbf{u}_t) - \loss_{\batch_t}(\bm\theta_t - \mu \mathbf{M}_t\mathbf{u}_t)\Big)^4\Big] & \le \mathbb{E}\Big[\Big(\mu^2 L \|\mathbf{M}_t\mathbf{u}_t\|^2 + 2\mu \big\langle \nabla\loss_{\batch_t}(\bm\theta_t), \mathbf{M}_t\mathbf{u}_t \big\rangle \Big)^4\Big] \\
        & \le 8 \mathbb{E}\Big[\mu^8 L^4 \|\mathbf{M}_t\mathbf{u}_t\|^8 + 16\mu^4 \big\langle \nabla\loss_{\batch_t}(\bm\theta_t), \mathbf{M}_t\mathbf{u}_t \big\rangle^4\Big] \\
        & \le 8\mu^8 L^4 \mathbb{E}\big[\|\mathbf{M}_t\mathbf{u}_t\|^8\big] + 128\mu^4 \mathbb{E}\big[\langle \nabla\loss_{\batch_t}(\bm\theta_t), \mathbf{M}_t\mathbf{u}_t\rangle^4\big] \\
        & \le 840 \mu^8 L^4 s^4 + 384\mu^4 \|\nabla\loss_{\batch_t}(\bm\theta_t)\|^4
    \end{align*}
    Also, by the definition of subspace alignment in Definition \ref{def:subspace_alignment}, %in Theorem \ref{thm:improved_convergence_zosgd}
    we have
    \begin{align*}
        \mathrm{Tr}(\mathbf{M}_t \mathbf{H}(\bm\theta_t)\mathbf{M}_t) = L \rho_t
    \end{align*}
    Hence, we obtain
    \begin{align*}
        \mathbb{E}\big[\grad_t^\top \mathbf{H}(\bm\theta_t) \grad_t\big] & \le \frac{\sqrt{3}L\rho_t}{4\mu^2} \Big(\sqrt{840}\mu^4 L^2 s^2 + \sqrt{384}\mu^2 \|\nabla\loss_{\batch_t}(\bm\theta_t)\|^2\Big) \\
        & \le 13\mu^2 L^3 s^2 \rho_t + 9L\rho_{\mathrm{max}} \Big(\|\nabla\loss(\bm\theta_t)\|^2 + \frac{\sigma^2}{B}\Big)
    \end{align*}
\end{proof}

\subsection{Proof of Theorem \ref{thm:improved_convergence_zosgd}}

\begin{proof}
    By Lemma \ref{lemma:descent} and Lemma \ref{lemma:second_order_term}, it follows that
    \begin{align*}
        \loss_\mu(\bm\theta_{t+1}) & \le \loss_\mu(\bm\theta_t) - \frac{\alpha}{2} \|\nabla\loss_\mu(\bm\theta_t)\|^2 + \alpha \mu^2 L^2 (d+s) + \frac{\alpha^2}{2} \Big(13\mu^2 L^3 s^2 \rho_t + 9L\rho_{\mathrm{max}} \Big(\|\nabla\loss(\bm\theta_t)\|^2 + \frac{\sigma^2}{B}\Big)\Big) \\
        & \qquad + \alpha^2 L \Big(37\mu^2 L^2 s^3 + 42s(\|\nabla\loss(\bm\theta_t)\|^2 + \frac{\sigma^2}{B}) \Big)\exp(-s/2)
    \end{align*}
    By Lemma \ref{lemma:intrinsic_distance}, the gradient norm w.r.t. to the smoothed loss can be bounded by the gradient norm w.r.t. the original loss $\loss$ as
    \begin{align*}
        \frac{\alpha}{4}\|\nabla\loss(\bm\theta_t)\|^2 & \le \frac{3\alpha \mu^2 L^2 r^2d}{16} + \frac{\alpha}{2}\|\nabla\loss_\mu(\bm\theta_t)\|^2 \\
        & \le \frac{3\alpha \mu^2 L^2 r^2d}{16} + \loss_\mu(\bm\theta_t) - \loss_\mu(\bm\theta_{t+1}) + \alpha \mu^2 L^2 (d+s) + \frac{13\alpha^2 \mu^2 L^3 s^2 \rho_t }{2} + \frac{9\alpha^2 L \rho_{\mathrm{max}}}{2} \|\nabla\loss(\bm\theta_t)\|^2 \\
        & \qquad + \frac{9\alpha^2 L \rho_{\mathrm{max}} \sigma^2}{2B} + 37\alpha^2 \mu^2 L^3 s^3 \exp\big(-s/2\big) + 42\alpha^2 L s \exp\big(-s/2\big)\|\nabla\loss(\bm\theta_t)\|^2 \\
        & \qquad + \frac{42\alpha^2 Ls\sigma^2}{B} \exp\big(-s/2\big)
    \end{align*}
    By rearranging inequality, it should hold that
    \begin{align*}
        \Big(\frac{\alpha}{4} - \frac{9\alpha^2 L \rho_{\mathrm{max}}}{2} - 42\alpha^2 Ls \exp(-s/2)\Big) & \sum\limits_{t=1}^{T} \|\nabla\loss(\bm\theta_t)\|^2 \le \\
        &\frac{3\alpha \mu^2 L^2 r^2 d}{16} + \loss_\mu(\bm\theta_t) - \loss_\mu(\bm\theta_{t+1}) + \alpha \mu^2 L^2 (d+s) \\
        & \qquad + \frac{13\alpha^2 L^3 s^2 \rho_t}{2} + \frac{9\alpha^2 L\rho_{\mathrm{max}} \sigma^2}{2B} + 37\alpha^2 \mu^2 L^3 s^3 \exp(-s/2) \\
        & \qquad + \frac{42\alpha^2 Ls \sigma^2}{B} \exp(-s/2)
    \end{align*}
    Under the following parameter condition,
    \begin{align*}
        \alpha \le \frac{1}{36L\rho_{\mathrm{max}} + 336Ls\exp(-s/2)},
    \end{align*}
    \begin{align*}
        \frac{\alpha}{8} \|\nabla\loss(\bm\theta_t)\|^2 & \le \frac{3\alpha \mu^2 L^2 r^2 d}{16} + \loss_\mu(\bm\theta_t) - \loss_\mu(\bm\theta_{t+1}) + \alpha \mu^2 L^2 (d+s) + \frac{13\alpha^2 \mu^2 L^3 s^2 \rho_t}{2} + \frac{9\alpha^2 L \rho_{\mathrm{max}} \sigma^2}{2B} \\
        & \qquad + 37\alpha^2\mu^2L^3 s^3 \exp(-s/2) + \frac{42\alpha^2 L s \sigma^2}{B} \exp(-s/2)
    \end{align*}
    Therefore, we have
    \begin{align*}
        \|\nabla\loss(\bm\theta_t)\|^2 & \le \frac{3\mu^2 L^2 r^2 d}{2} + \frac{8(\loss_\mu(\bm\theta_t) - \loss_\mu(\bm\theta_{t+1}))}{\alpha} + 8\mu^2 L^2 (d+s) + 52\alpha \mu^2 L^3 s^2\rho_t + \frac{36\alpha L\rho_{\mathrm{max}} \sigma^2}{B} \\
        & \qquad + 296\alpha \mu^2 L^3 s^3 \exp(-s/2) + \frac{336\alpha Ls\sigma^2}{B}\exp(-s/2) \\
        & \le \frac{3\mu^2 L^2 r^2 d}{2} + \frac{8(\loss_\mu(\bm\theta_t) - \loss_\mu(\bm\theta_{t+1}))}{\alpha} + 8\mu^2 L^2 (d+s) + \frac{13\mu^2 L^2 s^2 \rho_t}{9} \\
        & \qquad + \frac{37\mu^2 L^2 s^2}{42} + \frac{\sigma^2}{B}
    \end{align*}
    Telescoping the inequality from $t = 1$ to $T$ with stepsize condition and $\mu = \mathcal{O}(\frac{1}{L\sqrt{\widebar{\rho} dT}})$, we obtain
    \begin{align*}
        \frac{1}{T} \sum\limits_{t=1}^{T} \|\nabla\loss(\bm\theta_t)\|^2 & \le \frac{3r^2}{2\widebar{\rho} T} +  \frac{8\Delta}{\alpha T} + \frac{8(d+s)}{\widebar{\rho} dT} + \frac{13s^2}{9 dT} + \frac{37s^2}{42\widebar{\rho} dT} + \frac{\sigma^2}{B} \\
        & \le \mathcal{O}\Big(\frac{1}{\widebar{\rho} T}\Big(r^2 + \frac{s^2}{d} + 1\Big)+ \frac{s^2}{dT} + \frac{\Delta}{\alpha T} + \frac{\sigma^2}{B}\Big) \\
        & = \mathcal{O}\Big(\frac{r^2}{\widebar{\rho}T} + \frac{s^2}{dT} + \frac{\Delta}{\alpha T} + \frac{\sigma^2}{B}\Big)
        % & \le \frac{3\mu^2 L^2 r^2 d}{2} + \frac{8\Delta}{\alpha T} + 8\mu^2 L^2 (d+s) + 4\mu^2 L^2 s^2 + \frac{\sigma^2}{B} \\
        % & \le \Big[\frac{3L^2r^2}{2} + \frac{8\Delta}{\alpha} + 8L^2 \Big(1 + \frac{s}{d}\Big) + \frac{4L^2s^2}{d} + \sigma^2\Big]\mathcal{O}(\varepsilon^2)
    \end{align*}
\end{proof}

\subsection{Proof of Proposition~\ref{prop:expected_rho}}
\begin{proof}
    For simplicity, we drop the timestep of $\mathbf{M}_t$ and $\mathbf{H}(\bm\theta_t)$ and denote them by simply $\mathbf{M}$ and $\mathbf{H}$ respectively.
    Since it follows that $\mathbb{E}[\mathbf{M}]=\frac{s}{d}\mathbf{I}_d (\neq \mathbf{0})$ for some $s>0$, we have $\mathbf{M}\neq\mathbf{0}$ almost surely. By using $\mathrm{Tr}(\mathbf{A}\mathbf{B}\mathbf{C})=\mathrm{Tr}(\mathbf{B}\mathbf{C}\mathbf{A})$, we have $\mathrm{Tr}(\mathbf{M}\mathbf{H}\mathbf{M})=\mathrm{Tr}(\mathbf{H}\mathbf{M}\mathbf{M})$. As $\mathbf{M}^2=\mathbf{M}$ holds a.s., we have $\mathrm{Tr}(\mathbf{H}\mathbf{M}\mathbf{M})=\mathrm{Tr}(\mathbf{H}\mathbf{M})$ a.s.. Then, it holds that $\rho=\frac{\mathrm{Tr}(\mathbf{M}\mathbf{H}\mathbf{M})}{\lambda_{\mathrm{max}}(\mathbf{H})}=\frac{\mathrm{Tr}(\mathbf{H}\mathbf{M})}{\lambda_{\mathrm{max}}(\mathbf{H})}$. By taking the expectation, we obtain
    \begin{align*}
        \mathbb{E}[\rho] & = \mathbb{E}\left[\frac{\mathrm{Tr}(\mathbf{H}\mathbf{M})}{\lambda_{\mathrm{max}}(\mathbf{H})}\right]\\
        & = \frac{1}{\lambda_{\mathrm{max}}(\mathbf{H})} \mathbb{E}\left[\mathrm{Tr}(\mathbf{H}\mathbf{M})\right] \quad\text{(since }\mathbf{H}\text{ is deterministic)}\\
        & = \frac{1}{\lambda_\mathrm{max}(\mathbf{H})}\mathrm{Tr}(\mathbb{E}\left[\mathbf{H}\mathbf{M}\right])\quad\text{(by the linearity of expectation)}\\
        & = \frac{1}{\lambda_\mathrm{max}(\mathbf{H})}\mathrm{Tr}(\mathbf{H}\mathbb{E}\left[\mathbf{M}\right])\quad\text{(again, since }\mathbf{H}\text{ is deterministic)} \\
        & =\frac{1}{\lambda_\mathrm{max}(\mathbf{H})}\mathrm{Tr}\left(\mathbf{H}\left(\frac{s}{d}\mathbf{I}_d\right)\right) =\frac{s}{d\lambda_\mathrm{max}(\mathbf{H})}\mathrm{Tr}(\mathbf{H}\mathbf{I}_d) = \frac{s\mathrm{Tr}(\mathbf{H})}{d\lambda_\mathrm{max}(\mathbf{H})}.
    \end{align*}
    This completes the proof. 
\end{proof}

\subsection{Proof that Definition~\ref{def:instantiation} satisfies the conditions in Proposition~\ref{prop:expected_rho}}
\begin{proof}
    We consider a case study.
    
    \textbf{(Case 1)} Low-rank Projection $\mathbf{M}=\mathbf{U}\mathbf{U}^\top$.
    
    First, we check the condition (i), orthogonality. The symmetry is confirmed by $\mathbf{M}^\top=(\mathbf{U}\mathbf{U}^\top)^\top=\mathbf{U}^\top(\mathbf{U}^\top)^\top=\mathbf{U}\mathbf{U}^\top$. The idempotency can be shown by $\mathbf{M}^2=(\mathbf{U}\mathbf{U}^\top)(\mathbf{U}\mathbf{U}^\top)=\mathbf{U}(\mathbf{U}^\top\mathbf{U})\mathbf{U}^\top=\mathbf{U}\mathbf{I}_s\mathbf{U}^\top=\mathbf{U}\mathbf{U}^\top$. Since $\mathbf{M}$ is symmetric and idempotent, it is an orthogonal projection matrix, which holds with probability $1$ by the definition of $\mathbf{U}$.
    
     Next, let's check the condition (ii), $\mathbb{E}[\mathbf{M}]=\frac{s}{d}\mathbf{I}_d$. By the definition of $\mathbf{M}$, the $s$-dimensional subspace spanned by column vectors of $\mathbf{U}$ is uniformly sampled from the Grassmannian manifold $\mathbf{Gr}(s,\mathbb{R}^d)$. Since $\mathbb{E}[\mathbf{M}]$ is a $d\times d$ matrix and the distribution of $\mathbf{M}$ is invariant to an orthogonal transformation (i.e., for any orthogonal matrix $\mathbf{P}\in\mathbb{R}^{d\times d}$, $\mathbf{M}$ and $\mathbf{P}\mathbf{M}\mathbf{P}^\top$ have the same distribution), its expectation $\mathbb{E}[\mathbf{M}]$ is also invariant to an orthogonal transformation. That means, 
     \begin{align}\label{eqn:ortho_inv}
         \text{for any orthogonal matrix }\mathbf{P}\in\mathbb{R}^{d\times d}, \mathbf{P}\mathbb{E}[\mathbf{M}]\mathbf{P}^\top=\mathbb{E}[\mathbf{P}\mathbf{M}\mathbf{P}^\top]=\mathbb{E}[\mathbf{M}]
     \end{align} holds. Since the unique matrix $\mathbf{M}$ satisfying~\ref{eqn:ortho_inv} is a scalar multiplier of identity matrix $\mathbf{I}_d$, there exists a $c\in\mathbb{R}$ such that $\mathbb{E}[\mathbf{M}]=c\mathbf{I}_d$.

     To find such a $c$, let's take a trace: $\mathrm{Tr}(\mathbb{E}[\mathbf{M}])=\mathbb{E}(\mathrm{Tr}(\mathbf{M}))$. Since $\mathbf{M}$ is a projection onto $s$-dimensional subspace of $\mathbb{R}^d$, we have $\mathrm{Tr}(\mathbf{M})=\mathrm{rank}(\mathbf{M})=s$. Hence, $\mathbb{E}[\mathrm{Tr}(\mathbf{M})]=\mathbb{E}[s]=s$. Since $\mathrm{Tr}(c\mathbf{I}_d)=cd$, by letting them equal, we obtain $cd=s$, $c=s/d$. Therefore, we can conclude that $\mathbb{E}[\mathbf{M}]=c\mathbf{I}_d=\frac{s}{d}\mathbf{I}_d$.

     \textbf{(Case 2)} Sparse Perturbation $\mathbf{M}=\diag(\mathbf{m})$, where $m_i\sim\mathrm{Bernoulli}(p=s/d)$. 
     
     Since $m_i$ is either 0 or 1, $\mathbf{M}$ is a diagonal matrix with diagonal entries that are either 0 or 1. Therefore, $\mathbf{M}$ is symmetric and idempotent (with probability $1$), which confirms the condition (i). By the definition of $\mathbf{M}$, $\mathbb{E}[\mathbf{M}]$ is a diagonal matrix with diagonal entries that are $p$, i.e., $\mathbf{M}=p\mathbf{I}_d$. Since $p=s/d$, we have $\mathbb{E}[\mathbf{M}]=\frac{s}{d}\mathbf{I}_d$.

     \textbf{(Case 3)} Block Sparse Perturbation $\mathbf{M}=\diag(\mathbf{m})$, where $m_i=1$ if $i\in B_j, j\sim\mathcal{U}\{1,N\}, |B_j|=s$. 
     
     Similar to the sparse perturbation, here $\mathbf{M}$ is a diagonal matrix with diagonal entries that are either 0 or 1. Thus, $\mathbf{M}$ satisfies the condition (i). 
     
     Next, let's start by letting $J~\sim\mathcal{U}\{1,N\}$ be the random variable representing the block index. Each diagonal entry $\mathbf{M}_{kk}$ is $1$ if the $k$ is in the selected block and otherwise, 0. Since the blocks are the partition of $\{1,\ldots,d\}$, every index $k$ is contained in exactly one block $B_{parent(k)}$. Since $\mathbf{M}_{kk}=1$ is equivalent to $J=parent(k)$ and $J\sim\mathcal{U}\{1,N\}$, $\mathbb{P}(J=parent(k))=\frac{1}{N}$. Hence, $\mathbb{E}[\mathbf{M}]=\frac{1}{N}\mathbf{I}_d$. Since $N=s/d$, we finally obtain $\mathbb{E}[\mathbf{M}]=\frac{s}{d}\mathbf{I}_d$.
\end{proof}

\subsection{Proof of Proposition~\ref{prop:prob_rho}}
To be self-contained, we first introduce the results from~\citep{AST_1988__157-158__273_0}.
\begin{lemma}[Example 2.1 in~\citep{AST_1988__157-158__273_0}]\label{lemma:levy}
    Let $\mathbb{S}^n$ be the Euclidean sphere equipped with the geodesic distance $\varphi$ and the rotation-invariant probability measure $\nu$. Then
    \begin{align*}
        \alpha(\mathbb{S}^{n+1};\varepsilon)\leq\sqrt{\pi/8}\exp(-n\varepsilon^2/2)\rightarrow 0\text{ for } n\rightarrow \infty
    \end{align*}
    for any fixed $\varepsilon>0$.
\end{lemma}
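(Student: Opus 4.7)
The plan is to apply the classical Lévy–Schmidt spherical isoperimetric inequality to reduce the bound on $\alpha(\mathbb{S}^{n+1};\varepsilon)$ to estimating the measure of a single spherical cap, and then estimate that cap measure via a Gaussian-tail comparison.

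First, I would unpack the definition $\alpha(\mathbb{S}^{n+1};\varepsilon) = \sup\{1-\nu(A_\varepsilon) : \nu(A)\ge 1/2\}$, where $A_\varepsilon$ is the closed geodesic $\varepsilon$-neighborhood of $A$. By the spherical isoperimetric inequality (invoked as a black box, since its derivation via two-point symmetrization is a substantial classical result), among all Borel $A$ with $\nu(A)\ge 1/2$, the quantity $\nu(A_\varepsilon)$ is minimized when $A$ is a hemisphere $H$. The complement of $H_\varepsilon$ is then precisely a spherical cap of geodesic radius $\pi/2-\varepsilon$, so
\[
\alpha(\mathbb{S}^{n+1};\varepsilon) \le \nu\bigl(\mathrm{Cap}(\tfrac{\pi}{2}-\varepsilon)\bigr).
\]

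Next, I would express this cap measure by disintegrating the surface measure along a polar axis. Using $\omega_{n+1} = \omega_n \int_0^{\pi} \sin^n\theta\, d\theta$ and the substitution $\phi = \pi/2 - \theta$,
\[
\nu\bigl(\mathrm{Cap}(\tfrac{\pi}{2}-\varepsilon)\bigr) = \frac{\int_\varepsilon^{\pi/2}\cos^n\phi\, d\phi}{2\int_0^{\pi/2}\cos^n\phi\, d\phi}.
\]
For the numerator, the pointwise inequality $\cos\phi \le e^{-\phi^2/2}$ on $[0,\pi/2]$ (which follows from $\frac{d}{d\phi}\ln\cos\phi = -\tan\phi \le -\phi$ and integration) combined with the Mills-type tail bound $\int_x^\infty e^{-t^2/2}\, dt \le \sqrt{\pi/2}\, e^{-x^2/2}$ yields $\int_\varepsilon^{\pi/2}\cos^n\phi\, d\phi \le \sqrt{\pi/(2n)}\, e^{-n\varepsilon^2/2}$. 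For the denominator, I would recognize $\int_0^{\pi/2}\cos^n\phi\, d\phi = \tfrac{\sqrt{\pi}}{2}\,\Gamma((n+1)/2)/\Gamma(n/2+1)$ as a Wallis integral and apply Gautschi's inequality to extract a matching $\Theta(1/\sqrt{n})$ scaling, so that the $1/\sqrt{n}$ factors cancel and only a universal constant and the exponential survive.

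The main obstacle will be extracting exactly the stated universal constant $\sqrt{\pi/8}$ rather than an $O(1)$ prefactor that mildly degrades with $n$. Combining the two estimates above gives a prefactor that converges to $1/2$ as $n\to\infty$—safely below $\sqrt{\pi/8}\approx 0.626$—so the asymptotic statement is immediate, but small $n$ requires careful handling of the Wallis denominator; an elegant alternative is to bound the cap measure directly by an $(n+2)$-dimensional standard Gaussian tail in $\mathbb{R}^{n+2}$ via projection of Gaussian measure onto $\mathbb{S}^{n+1}$, which naturally produces the Mills-type constant $\sqrt{\pi/8}$. Throughout, the deepest input is the spherical isoperimetric inequality itself, which is used as a citation rather than re-derived.
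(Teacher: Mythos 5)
The paper does not prove this lemma at all: it is imported verbatim as Example~2.1 of the cited reference (the standard L\'evy concentration inequality on the sphere), so there is no internal argument to compare yours against. Your sketch is the classical proof of that external result and is essentially sound: reduce to a hemisphere via the L\'evy--Schmidt isoperimetric inequality (correctly flagged as the deep black-box input), identify the complement of the $\varepsilon$-enlargement as a cap of geodesic radius $\pi/2-\varepsilon$, write its measure as $\int_\varepsilon^{\pi/2}\cos^n\phi\,d\phi$ over $2\int_0^{\pi/2}\cos^n\phi\,d\phi$, and bound numerator and denominator by $\cos\phi\le e^{-\phi^2/2}$ plus a Mills ratio and a Wallis-integral lower bound respectively. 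One point deserves care, which you partly anticipate: the Wallis bound $I_n\ge\sqrt{\pi/(2(n+1))}$ gives a prefactor $\tfrac12\sqrt{(n+1)/n}$, which exceeds $\sqrt{\pi/8}\approx 0.6267$ at $n=1$; to recover the stated constant uniformly you need the slightly different lower bound $I_n\ge 1/\sqrt{n}$ (checked directly at $n=1$ and via $I_nI_{n-1}=\pi/(2n)$ for $n\ge2$), which pairs with the numerator bound $n^{-1/2}\sqrt{\pi/2}\,e^{-n\varepsilon^2/2}$ to give exactly $\sqrt{\pi/8}\,e^{-n\varepsilon^2/2}$. Your proposed Gaussian-projection alternative also works. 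Given that the paper treats this as a citation, your level of detail already exceeds what the paper supplies, and nothing in your outline would fail.
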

Here, $\alpha(X;\epsilon)=1-\inf\left\{\nu(A_\epsilon):A\text{ be a Borel Subset of } X, \nu(A)\geq \frac{1}{2}\right\}$ where $A_\varepsilon$ is the geodesic $\varepsilon$-neighourhood of $A$ on $X$, i.e., $A_\epsilon=\{x\in X:\varphi(x,A)\leq \epsilon\}$.

Here's another supporting lemma:
\begin{lemma}[Lipschitz constant of $f(\mathbf{U})=\Tr(\mathbf {U}^\top\mathbf{H}\mathbf{U})$]\label{lemma:Lipschitz}
    Let $\mathbf{H}\in\mathbb{R}^{d\times d}$ be a symmetric positive semi-definite with maximum eigenvalue $\lambda_{\max}(\mathbf{H})>0$. For $1\leq s\leq d$, let $f:V_s(\mathbb{R}^d)\rightarrow \mathbb{R}, f(\mathbf{U})=\Tr(\mathbf{U}^\top\mathbf{H}\mathbf{U})$. Then $f$ is
    \begin{align*}
        L=2\sqrt{s}\lambda_{\max}(\mathbf{H})\text{-Lipschitz on } (V_s(\mathbb{R}^d), \|\cdot\|_F)
    \end{align*}
    i.e.,
    \begin{align*}
        |f(\mathbf{U})-f(\mathbf{V})|\leq2\sqrt{s}\lambda_{\max}(\mathbf{H})\|\mathbf{U}-\mathbf{V}\|_F, \quad \forall \mathbf{U}, \mathbf{V}\in V_s(\mathbb{R}^d).
    \end{align*}
\end{lemma}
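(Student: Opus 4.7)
The plan is to prove this via a direct algebraic decomposition of $f(\mathbf{U})-f(\mathbf{V})$ combined with Cauchy--Schwarz for the Frobenius inner product, exploiting the crucial fact that every $\mathbf{U}\in V_s(\mathbb{R}^d)$ has constant Frobenius norm $\|\mathbf{U}\|_F=\sqrt{s}$. This constancy is what lets us get a global (as opposed to merely local) Lipschitz constant, and it is the only place where the Stiefel structure is actually used.

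First, I would write the telescoping identity
\begin{align*}
    f(\mathbf{U})-f(\mathbf{V})
    &=\operatorname{Tr}(\mathbf{U}^\top\mathbf{H}\mathbf{U})-\operatorname{Tr}(\mathbf{V}^\top\mathbf{H}\mathbf{V})\\
    &=\operatorname{Tr}\bigl((\mathbf{U}-\mathbf{V})^\top\mathbf{H}\mathbf{U}\bigr)+\operatorname{Tr}\bigl(\mathbf{V}^\top\mathbf{H}(\mathbf{U}-\mathbf{V})\bigr),
\end{align*}
which follows by adding and subtracting $\operatorname{Tr}(\mathbf{V}^\top\mathbf{H}\mathbf{U})$ and using symmetry of $\mathbf{H}$ (alternatively, it is just the standard product-rule-style identity for a symmetric quadratic form). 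Next I would apply the Cauchy--Schwarz inequality in the Frobenius inner product $\langle \mathbf{A},\mathbf{B}\rangle_F=\operatorname{Tr}(\mathbf{A}^\top\mathbf{B})$ to each of the two trace terms, yielding
\begin{align*}
    |f(\mathbf{U})-f(\mathbf{V})|\le \|\mathbf{U}-\mathbf{V}\|_F\,\|\mathbf{H}\mathbf{U}\|_F+\|\mathbf{U}-\mathbf{V}\|_F\,\|\mathbf{H}\mathbf{V}\|_F.
\end{align*}

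Then I would invoke the submultiplicative bound $\|\mathbf{H}\mathbf{X}\|_F\le\|\mathbf{H}\|_{\mathrm{op}}\,\|\mathbf{X}\|_F=\lambda_{\max}(\mathbf{H})\,\|\mathbf{X}\|_F$, valid for any symmetric PSD $\mathbf{H}$ and any matrix $\mathbf{X}$. Finally, since $\mathbf{U},\mathbf{V}\in V_s(\mathbb{R}^d)$ satisfy $\mathbf{U}^\top\mathbf{U}=\mathbf{V}^\top\mathbf{V}=\mathbf{I}_s$, we have $\|\mathbf{U}\|_F^2=\|\mathbf{V}\|_F^2=\operatorname{Tr}(\mathbf{I}_s)=s$, so $\|\mathbf{U}\|_F=\|\mathbf{V}\|_F=\sqrt{s}$. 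Substituting back gives
\begin{align*}
    |f(\mathbf{U})-f(\mathbf{V})|\le 2\sqrt{s}\,\lambda_{\max}(\mathbf{H})\,\|\mathbf{U}-\mathbf{V}\|_F,
\end{align*}
which is precisely the claimed Lipschitz bound.

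There is essentially no serious obstacle here; the only subtlety worth flagging is that one must \emph{not} try to bound the Euclidean gradient $\nabla f(\mathbf{U})=2\mathbf{H}\mathbf{U}$ and integrate along a straight-line path in $\mathbb{R}^{d\times s}$, because such a path generically leaves the Stiefel manifold and the intermediate points have $\|\mathbf{U}\|_F>\sqrt{s}$, which would inflate the constant. The trace decomposition above sidesteps this issue by only ever evaluating $\mathbf{H}\mathbf{U}$ and $\mathbf{H}\mathbf{V}$ at the endpoints, where the Stiefel constraint directly pins $\|\cdot\|_F=\sqrt{s}$.
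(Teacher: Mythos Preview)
Your proof is correct and, in fact, slightly tighter and more elementary than the paper's. The paper takes a different route: it rewrites $f(\mathbf{U})-f(\mathbf{V})=\Tr\bigl(\mathbf{H}(\mathbf{P}-\mathbf{Q})\bigr)$ with $\mathbf{P}=\mathbf{U}\mathbf{U}^\top$, $\mathbf{Q}=\mathbf{V}\mathbf{V}^\top$, bounds this by $\lambda_{\max}(\mathbf{H})\,\|\mathbf{P}-\mathbf{Q}\|_*$ via the trace--nuclear-norm duality, then uses $\mathrm{rank}(\mathbf{P}-\mathbf{Q})\le 2s$ to get $\|\mathbf{P}-\mathbf{Q}\|_*\le\sqrt{2s}\,\|\mathbf{P}-\mathbf{Q}\|_F$, and finally $\|\mathbf{P}-\mathbf{Q}\|_F\le 2\|\mathbf{U}-\mathbf{V}\|_F$. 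Chaining these yields $2\sqrt{2s}\,\lambda_{\max}(\mathbf{H})$, which the paper then writes as $2\sqrt{s}\,\lambda_{\max}(\mathbf{H})$ (an arithmetic slip in the final line). Your direct telescoping plus Cauchy--Schwarz avoids the projector detour entirely, uses only the spectral--Frobenius submultiplicativity $\|\mathbf{H}\mathbf{X}\|_F\le\lambda_{\max}(\mathbf{H})\|\mathbf{X}\|_F$ and the Stiefel constraint $\|\mathbf{U}\|_F=\sqrt{s}$, and lands cleanly on the stated constant without the extra $\sqrt{2}$. So your approach is both simpler and actually delivers the bound as stated.
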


\begin{proof}
    Let $\mathbf{P}=\mathbf{U}\mathbf{U}^\top$ and $\mathbf{Q}=\mathbf{V}\mathbf{V}^\top$. Since $|f(\mathbf{U})-f(\mathbf{V})|=|\Tr(\mathbf{H}(\mathbf{P}-\mathbf{Q}))|$ and $|\Tr(\mathbf{A}\mathbf{B})|\leq\|\mathbf{A}\|_{2\rightarrow2}\|B\|_*$ for any $\mathbf{A}, \mathbf{B}$, we have
    $$|f(\mathbf{U})-f(\mathbf{V})|\leq\lambda_{\max}(\mathbf{H})\|\mathbf{P}-\mathbf{Q}\|_*.$$ Then, since $\mathbf{P}-\mathbf{Q}$ has rank at most $2s$, we know $$\|\mathbf{P}-\mathbf{Q}\|_*\leq \sqrt{2s}\|\mathbf{P}-\mathbf{Q}\|_F.$$ (From the fact that $\|\mathbf{A}\|_*\leq\sqrt{\mathrm{rank}(\mathbf{A})}\|\mathbf{A}\|_F$.)

    By writing $\mathbf{P}-\mathbf{Q}=(\mathbf{U}-\mathbf{V})\mathbf{U}^\top+\mathbf{V}(\mathbf{U}-\mathbf{V})^\top$, then apply $\|\mathbf{A}\mathbf{B}\|_F\leq\|\mathbf{A}\|_{2\rightarrow 2}\|\mathbf{B}\|_F$ and $\|\mathbf{U}\|_{2\rightarrow2}=\|\mathbf{V}\|_{2\rightarrow 2}=1$, we get $$\|\mathbf{P}-\mathbf{Q}\|_F\leq 2\|\mathbf{U}-\mathbf{V}\|_F.$$

    Combining all the inequalities, we obtain $$|f(\mathbf{U})-f(\mathbf{V})|\leq\lambda_{\max}(\mathbf{H})\sqrt{2s}(2\|\mathbf{U}-\mathbf{V}\|_F)=2\sqrt{s}\lambda_{\max}\|\mathbf{U}-\mathbf{V}\|_F.$$ Hence, $L=2\sqrt{s}\lambda_{\max}(\mathbf{H})$.
\end{proof}

\begin{proof}
    \textbf{(Case 1) Low-rank Projection.} Let $V_s(\mathbb{R}^d)=\{\mathbf{U}\in\mathbb{R}^{d\times s}:\mathbf{U}^\top\mathbf{U}=\mathbf{I}_s$ be the Stiefel manifold. If $\mathbf{U}$ is drawn uniformly from $V_s(\mathbb{R}^d)$ (with respect to Haar measure), then $\mathbf{U}\mathbf{U}^\top$ be an orthogonal projection matrix onto an $s$-dimensional subspace chosen uniformly at random from the Grasmannian manifold $\mathbf{Gr}(s,\mathbb{R}^d)$.
    
    By Lemma~\ref{lemma:levy}, if a function $F$ on $\mathbb{S}^d$ is $L$-Lipschitz, then for every $\Delta>0$, we have
    \begin{align*}
        \mathbb{P}(|F-\tilde{F}|\geq\Delta)\leq 2\alpha(\mathbb{S}^n;\Delta/L)
    \end{align*}
    where $\tilde{F}$ is the median of $F$. Incorporate with the definition of $\alpha$, we obtain
    \begin{align}\label{eqn:levy_variant}
        \mathbb{P}(|F-\tilde{F}|\geq\Delta)\leq 2\sqrt{\pi/8}\exp\big(-\frac{n\Delta^2}{2L^2}\big).
    \end{align}
    
    Since the Stiefel manifold $V_s(\mathbb{R}^d)\subset\sqrt{s}\mathbb{S}^{ds-1}$ and the Lipschitz constant $L$ for the function $F(\mathbf{U})=\mathrm{Tr}(\mathbf{U}^\top\mathbf{H}\mathbf{U})$ is $2\sqrt{s}\lambda_{\max}(\mathbf{H})$ (Lemma~\ref{lemma:Lipschitz}), by (\ref{eqn:levy_variant}), we obtain
    \begin{align*}
        \mathbb{P}(\rho\geq\hat\rho)&=\mathbb{P}(\rho-\mathbb{E}[\rho]\geq\Delta)\leq 2\sqrt{\pi/8}\exp\big(-\frac{(ds-1)(\lambda_{\max}\Delta)^2}{4s\lambda_{\max}^2}\big)\\&\leq 2\exp\big(-\frac{d\Delta^2}{8s\lambda_{\max}^2}\big)
    \end{align*}
    where in the last inequality we used $\sqrt{\pi/8}\leq 1$ and $ds-1\geq d$. Hence, by letting $c_1=\frac{1}{8\lambda_{\max}^2}$, this completes the proof.
    
    \textbf{(Case 2) Sparse Perturbation.} Given that $p=s/d$ and independent random variables $m_i\sim\mathrm{Bernoulli}(p)$, let $\rho-\mathbb{E}[\rho]=\sum_{i=1}^dX_i$ where $X_i\coloneqq\frac{\mathbf{H}_{ii}}{\lambda_{\mathrm{max}}(\mathbf{H})}(m_i-p)$. Then, by the definition of $X_i$, we have $\mathbb{E}[X_i]=0$ and $|X_i|\leq 1$ since $\mathbf{H}_{ii}<\lambda_{\mathrm{max}}(\mathbf{H})$. Also, we have
    \begin{align*}
        \mathrm{Var}(X_i)=p(1-p)\big(\frac{\mathbf{H}_{ii}}{\lambda_{\mathrm{max}}(\mathbf{H})}\big)^2, \sigma^2\coloneqq\mathrm{Var}\big(\sum_{i=1}^dX_i\big)=p(1-p)\sum_{i=1}^d\big(\frac{\mathbf{H}_{ii}}{\lambda_{\mathrm{max}}(\mathbf{H})}\big)^2.
    \end{align*}

    Since $\sum_{i=1}^d\mathbf{H}_{ii}^2\leq(\max_i \mathbf{H}_{ii})\sum_{i=1}^d\mathbf{H}_{ii}\leq\lambda_{\mathrm{max}}(\mathbf{H})\mathrm{Tr}(\mathbf{H})$, we have $$\sum_{i=1}^d\big(\frac{\mathbf{H}_{ii}}{\lambda_{\mathrm{max}}(\mathbf{H})}\big)^2\leq\frac{\lambda_{\mathrm{max}}(\mathbf{H})\mathrm{Tr}(\mathbf{H})}{\lambda_{\mathrm{max}}^2(\mathbf{H})}=\mathrm{intdim}(\mathbf{H})\leq r$$
    since $\mathrm{intdim(\mathbf{H})}\leq r$. Thus, $\sigma^2\leq p(1-p)r$.

    Then, by the Chebyshev's inequality, we obtain
    \begin{align*}
        \mathbb{P}(\rho\geq\hat\rho)&=\mathbb{P}(\rho-\mathbb{E}[\rho]\geq\Delta)=\mathbb{P}(\sum_{i=1}^d X_i\geq\Delta)\leq\mathbb{P}(|\sum_{i=1}^dX_i|\geq \Delta)\leq \frac{\sigma^2}{\Delta^2}\leq c_2\frac{sr}{d\Delta^2}
    \end{align*}
    where $c_2=1-p<1$, which completes the proof.
    
    \textbf{(Case 3) Block Sparse Perturbation.} Given that $\{1,\ldots,d\}$ is partitioned into $N=d/s$ disjoint blocks $B_1,\ldots,B_N$ of size $s$, choose a block index $J\sim\mathcal{U}\{1,\ldots,N\}$, we have
    \begin{align*}
        \rho=\frac{1}{\lambda_{\mathrm{max}}(\mathbf{H})}\sum_{i\in B_J}\mathbf{H}_{ii}
    \end{align*} by the definition of block sparse perturbation $\mathbf{M}$. By letting $h(J)=\frac{1}{\lambda_{\mathrm{max}}(\mathbf{H})}\sum_{i\in B_J}\mathbf{H}_{ii}$, we have
    \begin{align*}
        \mathbb{P}(\rho\geq\hat\rho)&=\mathbb{P}(h(J)\geq\hat\rho)=\sum_{j=1}^N\mathbb{P}(J=j)\mathbf{1}_{\{h(j)\geq\hat\rho\}}=\frac{1}{N}\sum_{j=1}^N\mathbf{1}_{\{h(j)\geq\hat\rho\}}\\&=\frac{1}{N}\big|\big\{j:\sum_{i\in B_j}\mathbf{H}_{ii}\geq\hat\rho\lambda_{\mathrm{max}}(\mathbf{H})\big\}\big|.
    \end{align*}
    since $J~\sim\mathcal{U}\{1,\ldots,N\}$. This completes the proof.
    
\end{proof}
\clearpage
\section{Proof of Generalization Error Bound}

In this section, we consider the following online update.
\begin{align*}
    \widehat\nabla \loss(\bm\theta_t; \mathbf{z}_{i_t}) & = \frac{\loss(\bm\theta_t + \mu \mathbf{M}_t \mathbf{u}_t; \mathbf{z}_{i_t}) - \loss(\bm\theta_t - \mu \mathbf{M}_t\mathbf{u}_t; \mathbf{z}_{i_t})}{2\mu} \mathbf{M}_t\mathbf{u}_t\\
    \bm\theta_{t+1} & = \bm\theta_t - \alpha_t \widehat\nabla \loss(\bm\theta_t; \mathbf{z}_{i_t})
\end{align*}

For generalization error bound, we employ the uniform stability framework. Note that we define some quantities
\begin{align*}
    \mathcal{E}_{t_0} & = \{\bm\theta_t = \bm\theta_{t_0}\}~ \text{(the event)} \\
    \delta_t & = \|\bm\theta_t - \bm\theta_t'\|, \quad \Delta_t = \mathbb{E}\big[\delta_t | \mathcal{E}_{t_0}\big]
\end{align*}

\subsection{Auxiliary Lemmas}

\begin{lemma}\label{lemma:diff_same_datapoint}
    For any $t > t_0$, the distance between the zeroth-order gradients, evaluated on the same datapoint $\mathbf{z}_{i_t}$ but on the different parameters $\bm\theta_t$ and $\bm\theta_t'$, is bounded as 
    \begin{align*}
        \mathbb{E}\Big[\Big\|\widehat\nabla \loss(\bm\theta_t; \mathbf{z}_{i_t}) - \widehat\nabla \loss(\bm\theta_t'; \mathbf{z}_{i_t})\Big\|\Big] & \le \sqrt{3s}\big(4L\rho + \mu Ls \exp(-s/2)\big) + L\sqrt{s} \delta_t
    \end{align*}
\end{lemma}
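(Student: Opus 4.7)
The plan is to decompose the ZO-gradient difference and bound each piece separately, using the shared random seed (i.e. the same $\mathbf{u}_t$, $\mathbf{M}_t$, and $\mathbf{z}_{i_t}$ across the two chains, as is customary in stability arguments). Writing $\mathbf{v} := \mathbf{M}_t\mathbf{u}_t$ and adding and subtracting the directional-derivative proxy $\langle \nabla\loss(\bm\theta;\mathbf{z}_{i_t}),\mathbf{v}\rangle \mathbf{v}$ at each of $\bm\theta_t$ and $\bm\theta_t'$ splits the difference into a ``linear'' piece
\[
T_1 \;=\; \langle \nabla\loss(\bm\theta_t;\mathbf{z}_{i_t}) - \nabla\loss(\bm\theta_t';\mathbf{z}_{i_t}),\,\mathbf{v}\rangle\,\mathbf{v},
\]
together with two one-sided finite-difference Taylor remainders $R(\bm\theta_t)$ and $R(\bm\theta_t')$ that depend on only one parameter each. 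The triangle inequality then reduces the claim to bounding $\mathbb{E}\|T_1\|$, $\mathbb{E}\|R(\bm\theta_t)\|$, and $\mathbb{E}\|R(\bm\theta_t')\|$ individually.

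For $T_1$, I would apply Cauchy--Schwarz over $\mathbf{u}_t \sim \mathcal{N}(\bm0,\mathbf{I}_d)$ to obtain
\[
\mathbb{E}\|T_1\| \;\le\; \sqrt{\mathbb{E}\bigl[\langle\nabla\loss(\bm\theta_t) - \nabla\loss(\bm\theta_t'),\mathbf{v}\rangle^2\bigr]}\;\sqrt{\mathbb{E}\|\mathbf{v}\|^2}.
\]
Since $\mathbb{E}[\mathbf{v}\mathbf{v}^\top] = \mathbf{M}_t^2 \preceq \mathbf{I}$ (because $\sigma_{\max}(\mathbf{M}_t)=1$), the first factor is at most $\|\nabla\loss(\bm\theta_t)-\nabla\loss(\bm\theta_t')\| \le L\delta_t$ by condition \ref{con:conv_smooth}; the second is at most $\sqrt{\mathrm{Tr}(\mathbf{M}_t^2)} \le \sqrt{s}$ since $\mathrm{srank}(\mathbf{M}_t)\le s$. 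This directly yields the $L\sqrt{s}\,\delta_t$ summand of the claim.

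For the two remainders, I would use the integral form of Taylor's theorem to write
\[
R(\bm\theta) \;=\; \tfrac{\mu}{2}\!\int_0^1\!(1{-}\tau)\,\mathbf{v}^\top\!\bigl[\nabla^2\loss(\bm\theta+\tau\mu\mathbf{v};\mathbf{z}_{i_t}) - \nabla^2\loss(\bm\theta-\tau\mu\mathbf{v};\mathbf{z}_{i_t})\bigr]\mathbf{v}\,d\tau\cdot\mathbf{v},
\]
and split on the ``good'' event $\mathcal{E}$ where the line segment stays inside the $(2\alpha sG + 2\mu\sqrt{d})$-ball of Assumption~\ref{assumption:intdim}. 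On $\mathcal{E}$, the Hessian integrand is dominated by $\mathbf{v}^\top \mathbf{H}(\bm\theta_t)\mathbf{v}$; Cauchy--Schwarz combined with Lemma~\ref{lemma:nth_moments} then gives $\sqrt{\mathbb{E}[(\mathbf{v}^\top\mathbf{H}\mathbf{v})^2]} \le \sqrt{3}\,\mathrm{Tr}(\mathbf{M}_t^\top\mathbf{H}(\bm\theta_t)\mathbf{M}_t) = \sqrt{3}\,L\rho_t$ by Definition~\ref{def:subspace_alignment}, and $\sqrt{\mathbb{E}\|\mathbf{v}\|^2}\le\sqrt{s}$, producing the $\sqrt{3s}\cdot L\rho$-type term. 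On $\mathcal{E}^c$, the crude smoothness bound $\|R\| \le \tfrac{\mu L}{2}\|\mathbf{v}\|^3$ together with another Cauchy--Schwarz plus the fourth-moment estimate of Lemma~\ref{lemma:nth_moments} controls $\mathbb{E}[\|R\|^2\mathbb{I}(\mathcal{E}^c)]^{1/2}$; applying Hanson--Wright to $\|\mathbf{v}\|^2 = \mathbf{u}_t^\top\mathbf{M}_t^2\mathbf{u}_t$ (exactly as in the proof of Lemma~\ref{lemma:descent}) furnishes $\mathbb{P}(\mathcal{E}^c)\le \exp(-s/2)$, yielding the $\sqrt{3s}\cdot\mu Ls\,\exp(-s/2)$ contribution.

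The main obstacle, in my view, is keeping the dimension prefactor at $\sqrt{s}$ rather than the naive $s$ or $s^{3/2}$ one would obtain by bounding $\|R\|$ by $\mu L\|\mathbf{v}\|^3$ and computing $\mathbb{E}\|\mathbf{v}\|^3$ directly. The route around this is to arrange Cauchy--Schwarz so that one factor isolates the subspace alignment through $\mathbb{E}[(\mathbf{v}^\top\mathbf{H}\mathbf{v})^2]\le 3(L\rho_t)^2$, while the other only picks up $\mathbb{E}\|\mathbf{v}\|^2\le s$. The second delicate point is that the tail estimate on $\mathcal{E}^c$ must decay at rate $\exp(-s/2)$ rather than the dimension-dependent $\exp(-d/2)$; this is precisely what Hanson--Wright provides for the quadratic form $\mathbf{u}_t^\top\mathbf{M}_t^2\mathbf{u}_t$, since $\mathrm{Tr}(\mathbf{M}_t^2)\le s$ governs its concentration radius independently of $d$.
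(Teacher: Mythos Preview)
Your proposal is essentially the same as the paper's proof: the paper performs the identical decomposition into the linear term $\langle \nabla\loss_{i_t}(\bm\theta_t) - \nabla\loss_{i_t}(\bm\theta_t'),\mathbf{M}_t\mathbf{u}_t\rangle\mathbf{M}_t\mathbf{u}_t$ plus Taylor remainders, bounds the linear part by Cauchy--Schwarz and $L$-smoothness to get $L\sqrt{s}\,\delta_t$, and handles the remainders via the same event split (Assumption~\ref{assumption:intdim} on $\mathcal{E}$, crude $L$-smoothness on $\mathcal{E}^c$) followed by Cauchy--Schwarz with Lemma~\ref{lemma:nth_moments} and a Hanson--Wright tail. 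The only cosmetic difference is that the paper keeps the $+\mu\mathbf{v}$ and $-\mu\mathbf{v}$ one-sided remainders separate (four terms total) rather than combining them into your central-difference form; this matters slightly because your integrand is a \emph{difference} of Hessians, so ``dominated by $\mathbf{v}^\top\mathbf{H}\mathbf{v}$'' is cleanest to justify by reverting to the one-sided pieces, exactly as the paper does.
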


\begin{proof}
    For simplicity, we denote $\loss(\bm\theta_t; \mathbf{z}_{i_t}) \eqqcolon \loss_{i_t}(\bm\theta_t)$. By the definition of zeroth-order gradient estimator, we have
    \begin{align*}
        \widehat\nabla \loss_{i_t}(\bm\theta_t) & = \frac{\loss_{i_t}(\bm\theta_t + \mu \mathbf{M}_t \mathbf{u}_t) - \loss_{i_t}(\bm\theta_t - \mu \mathbf{M}_t\mathbf{u}_t)}{2\mu} \mathbf{M}_t\mathbf{u}_t \\
        & = \frac{\loss_{i_t}(\bm\theta_t + \mu \mathbf{M}_t\mathbf{u}_t) - \loss_{i_t}(\bm\theta_t) - \mu\langle \nabla\loss_{i_t}(\bm\theta_t), \mathbf{M}_t\mathbf{u}_t\rangle}{2\mu} \mathbf{M}_t\mathbf{u}_t \\
        & \qquad - \frac{\loss_{i_t}(\bm\theta_t - \mu \mathbf{M}_t\mathbf{u}_t) - \loss_{i_t}(\bm\theta_t) - \mu\langle \nabla\loss_{i_t}(\bm\theta_t), -\mathbf{M}_t\mathbf{u}_t\rangle}{2\mu} \mathbf{M}_t\mathbf{u}_t \\
        & \qquad + \langle \nabla\loss_{i_t}(\bm\theta_t), \mathbf{M}_t\mathbf{u}_t\rangle \mathbf{M}_t\mathbf{u}_t
    \end{align*}
    Therefore, the gradient difference is
    \begin{align*}
        \widehat\nabla\loss_{i_t}(\bm\theta_t) - \widehat\nabla\loss_{i_t}(\bm\theta_t') & = \frac{\loss_{i_t}(\bm\theta_t + \mu \mathbf{M}_t\mathbf{u}_t) - \loss_{i_t}(\bm\theta_t) - \mu\langle \nabla\loss_{i_t}(\bm\theta_t), \mathbf{M}_t\mathbf{u}_t\rangle}{2\mu} \mathbf{M}_t\mathbf{u}_t \\
        & \qquad - \frac{\loss_{i_t}(\bm\theta_t - \mu \mathbf{M}_t\mathbf{u}_t) - \loss_{i_t}(\bm\theta_t) - \mu\langle \nabla\loss_{i_t}(\bm\theta_t), -\mathbf{M}_t\mathbf{u}_t\rangle}{2\mu} \mathbf{M}_t\mathbf{u}_t \\
        & \qquad - \frac{\loss_{i_t}(\bm\theta_t' + \mu \mathbf{M}_t\mathbf{u}_t) - \loss_{i_t}(\bm\theta_t') - \mu\langle \nabla\loss_{i_t}(\bm\theta_t'), \mathbf{M}_t\mathbf{u}_t\rangle}{2\mu} \mathbf{M}_t\mathbf{u}_t \\
        & \qquad + \frac{\loss_{i_t}(\bm\theta_t' - \mu \mathbf{M}_t\mathbf{u}_t) - \loss_{i_t}(\bm\theta_t') - \mu\langle \nabla\loss_{i_t}(\bm\theta_t'), -\mathbf{M}_t\mathbf{u}_t\rangle}{2\mu} \mathbf{M}_t\mathbf{u}_t \\
        & \qquad + \langle \nabla\loss_{i_t}(\bm\theta_t) - \nabla \loss_{i_t}(\bm\theta_t'), \mathbf{M}_t\mathbf{u}_t \rangle \mathbf{M}_t\mathbf{u}_t
    \end{align*}
    Let $\mathcal{E}$ be the event that $\|\mu \mathbf{M}_t\mathbf{u}_t\| \le 2\alpha_t s G + 2\mu\sqrt{d}$. By Hanson-Wright inequality or Gaussian tail bound, it is easy to check $\mathbb{P}[\mathcal{E}^c] \le \exp(-c_3 d)$ for some universal constant $c_3 > 0$. On the event $\mathcal{E}$, we have
    \begin{align*}
        \Bigg|\frac{\loss_{i_t}(\bm\theta_t + \mu \mathbf{M}_t\mathbf{u}_t) - \loss_{i_t}(\bm\theta_t) - \mu\big\langle \nabla\loss_{i_t}(\bm\theta_t), \mathbf{M}_t\mathbf{u}_t\big\rangle}{2\mu}\Bigg| \le \frac{\mu}{4} \mathbf{u}_t^\top \mathbf{M}_t\mathbf{H}(\bm\theta_t) \mathbf{M}_t \mathbf{u}_t 
    \end{align*}

    On the event $\mathcal{E}^c$, with probability at most $\exp(-c_3 d)$, it follows that
    \begin{align*}
        \Bigg|\frac{\loss_{i_t}(\bm\theta_t + \mu \mathbf{M}_t\mathbf{u}_t) - \loss_{i_t}(\bm\theta_t) - \mu\big\langle \nabla\loss_{i_t}(\bm\theta_t), \mathbf{M}_t\mathbf{u}_t\big\rangle}{2\mu}\Bigg| \le \frac{\mu L}{4} \|\mathbf{M}_t\mathbf{u}_t\|^2
    \end{align*}
    Hence, we have (in fact, $c_3 = 1/4$ is enough)
    \begin{align*}
        \mathbb{E}\Bigg[\Bigg(\frac{\loss_{i_t}(\bm\theta_t + \mu \mathbf{M}_t \mathbf{u}_t) - \loss_{i_t}(\bm\theta_t) - \mu\big\langle \nabla\loss_{i_t}(\bm\theta_t), \mathbf{M}_t\mathbf{u}_t \big\rangle}{2\mu} \Bigg)^2\Bigg] & \le \frac{3\mu^2 L^2\rho_{\text{max}}^2}{16} + \frac{\mu^2 L^2 }{16}\mathbb{E}\big[\|\mathbf{M}_t \mathbf{u}_t\|^4\big] \exp(-s/4) \\ 
        & \le \frac{3\mu^2 L^2\rho_{\text{max}}^2}{16} + \frac{3\mu^2 L^2 s^2}{16} \exp(-s/4)
    \end{align*}
    Combining two inequalities, we obtain
    \begin{align*}
        & \mathbb{E}\Bigg[\Bigg\|\frac{\loss_{i_t}(\bm\theta_t + \mu \mathbf{M}_t\mathbf{u}_t) - \loss_{i_t}(\bm\theta_t) - \mu\langle \nabla\loss_{i_t}(\bm\theta_t), \mathbf{M}_t\mathbf{u}_t\rangle}{2\mu} \mathbf{M}_t\mathbf{u}_t\Bigg\|\Bigg] \\
        \le~ & \mathbb{E}\Bigg[\Bigg(\frac{\loss_{i_t}(\bm\theta_t + \mu \mathbf{M}_t\mathbf{u}_t) - \loss_{i_t}(\bm\theta_t) - \mu\langle \nabla\loss_{i_t}(\bm\theta_t), \mathbf{M}_t\mathbf{u}_t\rangle}{2\mu} \Bigg)^2\Bigg]^{1/2} \mathbb{E}\big[\|\mathbf{M}_t\mathbf{u}_t\|^2\big]^{1/2} \\
        \le~ & \Big(\frac{3\mu^2 L^2\rho_{\text{max}}^2}{16} + \frac{3\mu^2 L^2 s^2}{16}\exp(-s/4)\Big)^{1/2} s^{1/2} \\
        \le~ & \frac{\sqrt{3s}}{4} \Big(\mu L\rho_{\text{max}} + \mu Ls \exp(-s/2)\Big)
    \end{align*}
    Hence, the expected distance can be bounded as 
    \begin{align*}
        \mathbb{E}\Big[\|\widehat\nabla \loss_{i_t}(\bm\theta_t) - \widehat\nabla\loss_{i_t}(\bm\theta_t')\|\Big] & \le \sqrt{3s}\Big(\mu L\rho_{\text{max}} + \mu Ls \exp(-s/2)\Big) + \mathbb{E}\big[\|\langle \nabla\loss_{i_t}(\bm\theta_t) - \nabla\loss_{i_t}(\bm\theta_t'), \mathbf{M}_t \mathbf{u}_t\rangle \mathbf{M}\mathbf{u}_t\|] \\
        & \le \sqrt{3s}\Big(\mu L\rho_{\text{max}} + \mu Ls \exp(-s/2)\Big) + \|\nabla\loss_{i_t}(\bm\theta_t) - \nabla\loss_{i_t}(\bm\theta_t')\| \sqrt{s} \\
        & \le \sqrt{3s}\Big(\mu L\rho_{\text{max}} + \mu Ls \exp(-s/2)\Big) + L\sqrt{s}\|\bm\theta_t - \bm\theta_t'\| \\
        & = \sqrt{3s}\Big(\mu L\rho_{\text{max}} + \mu Ls \exp(-s/2)\Big) + L\sqrt{s} \delta_t
    \end{align*}
\end{proof}

\begin{lemma}
    The following recursive relation
    \begin{align*}
        \Delta_t & \le (1 + \alpha_t L \sqrt{s})\Delta_t + \alpha_t \sqrt{3s} (\mu L\rho_{\text{max}} + \mu L s\exp(-s/2)) + \alpha_t \frac{4G\sqrt{s}}{n}
    \end{align*}
\end{lemma}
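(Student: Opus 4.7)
The plan is a standard stability-style cascade: decompose $\Delta_{t+1}$ according to whether the uniformly drawn index $i_t\in\{1,\ldots,n\}$ coincides with the single coordinate on which $S$ and $S'$ disagree. Since $i_t$ is uniform, the matching event has probability $1-\tfrac{1}{n}$ and the differing event has probability $\tfrac{1}{n}$; the two regimes contribute very differently and must be bounded separately before being averaged by the law of total expectation, conditional on the event $\mathcal{E}_{t_0}$. Note also that the statement as written must read $\Delta_{t+1}$ on the left (otherwise the inequality is trivially false whenever $\Delta_t>0$), and this is the quantity the decomposition naturally produces.

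In the matching case, $\widehat\nabla\loss(\bm\theta_t;\mathbf{z}_{i_t})$ and $\widehat\nabla\loss(\bm\theta_t';\mathbf{z}_{i_t})$ are evaluated at the same data point (and, under the natural coupling, with the same $\mathbf{u}_t$ and $\mathbf{M}_t$), so the triangle inequality gives $\delta_{t+1}\le\delta_t+\alpha_t\|\widehat\nabla\loss(\bm\theta_t;\mathbf{z}_{i_t})-\widehat\nabla\loss(\bm\theta_t';\mathbf{z}_{i_t})\|$, and Lemma~\ref{lemma:diff_same_datapoint} bounds the expected second term by $L\sqrt{s}\,\delta_t+\sqrt{3s}(\mu L\rho_{\max}+\mu Ls\exp(-s/2))$. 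Conditioning on $\mathcal{E}_{t_0}$ yields $(1+\alpha_t L\sqrt{s})\Delta_t+\alpha_t\sqrt{3s}(\mu L\rho_{\max}+\mu Ls\exp(-s/2))$. In the differing case the two oracle queries are on distinct data points and cannot cancel, so each zeroth-order gradient norm must be bounded individually: writing $\delta_{t+1}\le\delta_t+\alpha_t\bigl(\|\widehat\nabla\loss(\bm\theta_t;\mathbf{z}_{i_t})\|+\|\widehat\nabla\loss(\bm\theta_t';\mathbf{z}_{i_t'})\|\bigr)$ and applying Jensen to the second-moment bound from Lemma~\ref{lemma:expected_grad}, combined with the per-sample bound $\|\nabla\loss_i\|\le G$ from condition \ref{con:conv_gradient}, gives $\mathbb{E}[\|\widehat\nabla\loss_i(\bm\theta)\|]\le 2G\sqrt{s}$ under the parameter regime of the theorem. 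This contributes $\Delta_t+4\alpha_t G\sqrt{s}$.

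Averaging the two cases with weights $1-\tfrac{1}{n}$ and $\tfrac{1}{n}$, then using $1-\tfrac{1}{n}\le 1$ to simplify both the linear and constant coefficients, produces exactly the stated recursion. The one delicate step is the per-sample square-root-moment bound of $2G\sqrt{s}$ used in the differing case: it follows from Lemma~\ref{lemma:expected_grad} applied to a single sample rather than a minibatch, but one must verify that the $\tfrac{15}{2}\mu^2L^2s^3$ contribution inside the square root is dominated by $4(s+2)G^2$ under the prescribed $\mu=\mathcal{O}(1/(nL\rho_{\max}))$, so that the leading-order $\mathcal{O}(G\sqrt{s})$ scaling survives cleanly; absorbing the residual constant into the $4G\sqrt{s}$ factor is harmless. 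Everything else is a direct substitution of the two preceding lemmas into the classical uniform-stability template of~\citep{hardt2016train}.
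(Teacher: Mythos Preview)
Your proposal is correct and mirrors the paper's proof: both split by whether $i_t$ hits the differing coordinate (probabilities $1-\tfrac{1}{n}$ and $\tfrac{1}{n}$), invoke Lemma~\ref{lemma:diff_same_datapoint} in the matching case, bound the two gradient norms separately in the differing case, and then average and use $1-\tfrac{1}{n}\le 1$; you also correctly identify that the left side should be $\Delta_{t+1}$.

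The one minor methodological difference is in the differing case. You appeal to Lemma~\ref{lemma:expected_grad} plus Jensen and then absorb the $\tfrac{15}{2}\mu^2L^2s^3$ residual into the $4G\sqrt{s}$ constant under the theorem's $\mu$-regime. The paper instead reuses the very same decomposition that underlies Lemma~\ref{lemma:diff_same_datapoint} to bound each single gradient as $\mathbb{E}\|\widehat\nabla\loss_i\|\le \tfrac{\sqrt{3s}}{2}(\mu L\rho_{\max}+\mu Ls\exp(-s/2))+2G\sqrt{s}$, so that the $\sqrt{3s}(\mu L\rho_{\max}+\mu Ls\exp(-s/2))$ term appears in \emph{both} cases and their probability weights sum exactly to $1$, eliminating any need for a small-$\mu$ absorption step. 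Your route is equally valid but leans on the parameter regime, whereas the paper's is unconditional at this stage and postpones the use of the $\mu$ condition to the next lemma.
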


\begin{proof}
    For any $t > t_0$, we have
    \begin{align*}
        \delta_{t+1} & = \big\|\bm\theta_t - \alpha_t \widehat\nabla\loss_{i_t}(\bm\theta_t) - \bm\theta_t' + \alpha_t \widehat\nabla\loss_{i_t'}(\bm\theta_t') \big\| \\
        & \le \|\bm\theta_t - \bm\theta_t'\| + \alpha_t \underbrace{\big\|\widehat\nabla\loss_{i_t}(\bm\theta_t) - \widehat\nabla\loss_{i_t'}(\bm\theta_t')\big\|}_{R_t} \\
        & = \delta_t  + \alpha_t R_t 
    \end{align*}
    We consider two cases for recursive relation for $R_t$ (in fact, for $\mathbb{E}[R_t]$).

    \textbf{Case 1: } With probability $1 - \dfrac{1}{n}$, we have $i_t = i_t'$ where the same datapoint is sampled at time $t$. By Lemma \ref{lemma:diff_same_datapoint}, we have
    \begin{align*}
        E[R_t] & = \mathbb{E}\big[\|\widehat\nabla\loss_{i_t}(\bm\theta_t) - \widehat\nabla\loss_{i_t}(\bm\theta_t')\|\big] \\
        & \le \sqrt{3s}\Big(\mu L\rho_{\text{max}} + \mu Ls \exp(-s/2)\Big) + L\sqrt{s} \Delta_t.
    \end{align*}
    
    \textbf{Case 2: } With probability $\dfrac{1}{n}$, we have $i_t \neq i_t'$. Therefore, we obtain
    \begin{align*}
        \mathbb{E}[R_t] & = \mathbb{E}\big[\|\widehat\nabla \loss_{i_t}(\bm\theta_t) - \widehat\nabla \loss_{i_t'}(\bm\theta_t') \|\big] \\
        & \le \mathbb{E}\big[\|\widehat\nabla \loss_{i_t}(\bm\theta_t)\|\big] + \mathbb{E}\big[\widehat\nabla\loss_{i_t'}(\bm\theta_t')\|\big] \\
        & \le \sqrt{3s} \Big(\mu L\rho_{\text{max}} + \mu Ls \exp(-s/2)\Big) + 4G\sqrt{s}
    \end{align*}
    From two cases, we can derive the recursive relation for $\Delta_t$ as 
    \begin{align*}
        \Delta_{t+1} & \le \Delta_t + \alpha_t\Big(1 - \frac{1}{n}\Big)\Big(\sqrt{3s}\Big(\mu L\rho_{\text{max}} + \mu Ls \exp(-s/2)\Big) + L\sqrt{s} \Delta_t\Big) \\
        & \qquad + \frac{\alpha_t}{n} \Big(\sqrt{3s} \Big(\mu L\rho_{\text{max}} + \mu Ls \exp(-s/2)\Big) + 4G\sqrt{s}\Big) \\
        & \le (1 + \alpha_t L \sqrt{s})\Delta_t + \alpha_t \sqrt{3s} (\mu L\rho_{\text{max}} + \mu L s\exp(-s/2)) + \alpha_t \frac{4G\sqrt{s}}{n}
    \end{align*}
\end{proof}

\subsection{Proof of Theorem \ref{thm:generalization_zosgd}}

\begin{proof}
    Under the following parameter condition
    \begin{align}\label{eqn:gen_param_cond}
        \mu \le \frac{1}{nL(\rho_{\text{max}} + s\exp(-s/2))}, \quad \alpha_t \le \frac{C}{t},
    \end{align}
    we have simpler recursive relation as 
    \begin{align*}
        \Delta_{t+1} & \le \Big(1 + \alpha_t L\sqrt{s}\Big)\Delta_t + \alpha_t\Big(\sqrt{3s}\mu L \rho_{\text{max}} + \sqrt{3s}\mu Ls\exp(-s/2) + \frac{4G\sqrt{s}}{n}\Big)
    \end{align*}
    Solving this inequality yields that
    \begin{align*}
        \Delta_T & \le \Big(\sqrt{3s}\mu L \rho_{\text{max}} + \sqrt{3s}\mu Ls\exp(-s/2) + \frac{4G\sqrt{s}}{n}\Big) \sum\limits_{t=t_0 + 1}^{T} \alpha_t \prod_{j=t+1}^{T} (1 + \alpha_j L\sqrt{s}) \\
        & \le C\Big(\sqrt{3s}\mu L \rho_{\text{max}} + \sqrt{3s}\mu Ls\exp(-s/2) + \frac{4G\sqrt{s}}{n}\Big) \sum\limits_{t=t_0+1}^{T} \frac{1}{t} \prod_{j=t+1}^{T} \Big(1 + \frac{CL\sqrt{s}}{j}\Big) \\
        & \le C\Big(\sqrt{3s}\mu L \rho_{\text{max}} + \sqrt{3s}\mu Ls\exp(-s/2) + \frac{4G\sqrt{s}}{n}\Big) \sum\limits_{t=t_0+1}^{T} \frac{1}{t} \prod_{j=t+1}^{T} \exp\Big(\frac{CL\sqrt{s}}{j}\Big) \\
        & \le C\Big(\sqrt{3s}\mu L \rho_{\text{max}} + \sqrt{3s}\mu Ls\exp(-s/2) + \frac{4G\sqrt{s}}{n}\Big) \sum\limits_{t=t_0+1}^{T} \frac{1}{t} \exp\Big(CL\sqrt{s}\sum\limits_{j=t+1}^{T} \frac{1}{j}\Big) \\
        & \le C\Big(\sqrt{3s}\mu L \rho_{\text{max}} + \sqrt{3s}\mu Ls\exp(-s/2) + \frac{4G\sqrt{s}}{n}\Big) \sum\limits_{t=t_0+1}^{T} \frac{1}{t} \exp\Big(CL\sqrt{s}\big(\log T - \log t\big)\Big) \\
        & \le C T^{CL\sqrt{s}} \Big(\sqrt{3s}\mu L \rho_{\text{max}} + \sqrt{3s}\mu Ls\exp(-s/2) + \frac{4G\sqrt{s}}{n}\Big) \sum\limits_{t=t_0+1}^{T} \frac{1}{t^{CL\sqrt{s}+1}} \\
        & \le \frac{1}{L\sqrt{s}} \Big(\sqrt{3s}\mu L \rho_{\text{max}} + \sqrt{3s}\mu Ls\exp(-s/2) + \frac{4G\sqrt{s}}{n}\Big) \Big(\Big(\frac{T}{t_0}\Big)^{CL\sqrt{s}} - 1\Big) \\
        & = \underbrace{\Big(\frac{\sqrt{3}+4G}{nL}\Big)}_{D} \Big[\Big(\frac{T}{t_0}\Big)^{CL\sqrt{s}} - 1\Big] \tag{Apply \eqref{eqn:gen_param_cond}}
        % & \le C(eT)^{CL\sqrt{s}}\Big(2\sqrt{3s}\mu L \rho_{\text{max}} + \frac{4G\sqrt{s}}{n}\Big) \sum\limits_{t=t_0+1}^{T} \frac{1}{t} \frac{1}{(t+1)^{CL\sqrt{s}}}
    \end{align*}
    Let $q \coloneqq CL\sqrt{s}$ and the optimal value $t_0^*$ which minimizes the RHS of the following inequality
    \begin{align*}
        \mathbb{E}\big[\|\loss(\bm\theta_T; \mathbf{z}) - \loss(\bm\theta_T'; \mathbf{z})\|\big] & \le \frac{t_0}{n} \sup_{\bm\theta, \mathbf{z}} \loss(\bm\theta; \mathbf{z}) + G\Delta_T \\
        & \le \frac{t_0}{n} + GD \Big[\Big(\frac{T}{t_0}\Big)^q - 1\Big]
    \end{align*}
    is given by $t_0^* = \min\{(nqGD T^q)^{\frac{1}{1+q}}, T\}$. Under the smoothing parameter condition $\mu \le \frac{1}{nL\rho_{\text{max}}}$, we have
    \begin{align*}
        \epsilon_{\textnormal{gen}} & \le \frac{q+1}{nq} (nqGDT^q)^{\frac{1}{1+q}} - GD \\
        & = \frac{q+1}{nq}\Big[\frac{qG(\sqrt{3}+4G)}{L}\Big]^{\frac{1}{1+q}} T^{1 - \frac{1}{1+q}} - \frac{G(\sqrt{3} + 4G)}{nL} \\
        & = \frac{q+1}{nq^{\frac{q}{1+q}}}\Big(\frac{G(\sqrt{3} + 4G)}{L}\Big)^{\frac{1}{1+q}} T^{1-\frac{1}{1+q}} - \frac{G(\sqrt{3} + 4G)}{nL} \\
        & = \mathcal{O}\Bigg(\frac{T^{1-\frac{1}{1+q}}}{n}\Bigg),
        % (q+1) \frac{T^{\frac{q}{1+q}}(GD)^{\frac{1}{1+q}}}{(nq)^{\frac{q}{q+1}}} \\
        % & \le \mathcal{O}\Big(\frac{T^{1-\frac{1}{1+q}}}{n}\Big)
    \end{align*}
    where $q = CL\sqrt{s}$. 
\end{proof}
\clearpage
\section{Detailed Description of MeZO-BCD}\label{app:mezo-bcd}

This section provides a comprehensive description of our proposed method, MeZO-BCD, including its core design choices, the rationale for each, empirical comparisons, and algorithmic details. We focus on two critical dimensions: (1) \emph{how to define blocks for block-wise updates}, and (2) \emph{how to select the active block at each iteration}. For both, we describe the main alternatives considered, summarize supporting experiments, and justify our final recommendations.

\subsection{Defining Blocks in MeZO-BCD}

We primarily consider decoder-only Transformer architectures, which underlie most modern large language models (LLMs)~\citep{zhang2022opt,touvron2023llama}. These consist of an embedding layer (mapping input tokens to vectors; (\texttt{embedding}), a sequence of $L$ decoder layers (each with self-attention, feed-forward networks, and normalization layers), and a language model head (\texttt{lm\_head}) projecting to token logits.

We evaluate three natural block partitioning strategies:
\begin{itemize}[leftmargin=4mm]
    \item \textbf{Layer-wise:} The \texttt{embedding}, the \texttt{lm\_head}, and each decoder layer are treated as individual blocks.
    \item \textbf{Linear-wise:} The \texttt{embedding}, the \texttt{lm\_head}, and each linear layer within every decoder block are treated as separate blocks.
    \item \textbf{Multi-layer:} The \texttt{embedding}, the \texttt{lm\_head}, and each block formed by grouping every two consecutive decoder layers together.
\end{itemize}

The layer-wise scheme aligns with functional modularity: decoder layers represent the principal processing units in LLMs, as recognized by both architectural conventions and empirical studies~\citep{ju2024large, langedijk2024decoderlens, skean2025layer, zhang2024investigating}. This definition maintains interpretability and matches the typical tensor organization of model parameters.  
The linear-wise strategy increases block granularity by splitting each decoder layer into its constituent linear submodules, potentially enabling finer updates but at the cost of increased fragmentation.  
The multi-layer scheme, motivated by the sequential nature of hidden state propagation in transformers, merges every two consecutive decoder layers into a single block, reducing the number of blocks and further coarsening the update granularity.

It is important to note that defining blocks as arbitrary sub-tensors within a weight matrix (\emph{e.g.}, slicing a linear layer's weight matrix) may appear to increase flexibility. However, it is impractical in modern deep learning frameworks since the entire tensor must still be loaded into memory for perturbation and update, negating any gain in memory transfer or efficiency.

\paragraph{Empirical Insights.}
To empirically assess the effect of block partitioning, we conduct experiments on OPT-1.3B across representative classification tasks (SST-2, RTE, and WIC). To isolate the effect of block definition, we fix the block selection strategy to cyclic random permutation (see Appendix~\ref{app:block_selection}), train for 20K steps, and evaluate every 4K steps, reporting the best test accuracy for each run. Learning rates are adapted to account for the varying block sizes, reflecting the general trend that finer-grained partitions require larger learning rates~\citep{kim-etal-2024-ra, pmlr-v235-zhang24ax, lialin2023scaling}. The search spaces used are: \textit{linear-wise}: $\{1 \times 10^{-5},\ 2 \times 10^{-5},\ 3 \times 10^{-5}\}$; \textit{layer-wise}: $\{5 \times 10^{-6},\ 1 \times 10^{-5}\}$; \textit{multi-layer}: $\{1 \times 10^{-6},\ 2 \times 10^{-6},\ 5 \times 10^{-6}\}$.

\begin{table}[!h]
    \centering
    \caption{Test accuracy of MeZO-BCD with different block partitioning schemes on OPT-1.3B. Results are the best accuracy across 20K steps, where test accuracies are evaluated every 4K steps.}\label{tab:block_def}
    \vspace{5mm}
    \begin{tabular}{l|ccc}
        \toprule
        \textbf{Block Definition} & \textbf{SST-2} & \textbf{RTE} & \textbf{WIC} \\
        \midrule 
        Linear-wise      & 88.1 & 57.4 & 58.2 \\
        Layer-wise       & 92.3 & 65.0 & 60.2 \\
        Multi-layer (2-layer) & 91.2 & 65.0 & 58.2 \\
        \bottomrule
    \end{tabular}
\end{table}

Table~\ref{tab:block_def} presents the results, showing that the linear-wise strategy consistently underperforms, even after hyperparameter tuning, whereas both layer-wise and multi-layer deliver similar results given appropriate learning rates. While smaller blocks reduce memory usage per step, excessively fine granularity can harm convergence. The \textit{\textbf{layer-wise}} scheme offers the best trade-off between empirical accuracy and practical overhead, and is therefore adopted as the default in MeZO-BCD.

\subsection{Block Selection Strategies in MeZO-BCD}\label{app:block_selection}

Our theoretical analysis in the main text assumes that the active block is chosen i.i.d. uniformly at random at each step. However, in practical training, this can cause short-term “block starvation” periods where some blocks are updated much more or less frequently than others, which may hinder convergence stability.

To mitigate this, while preserving the marginal selection probability of $1/N$ for each block, we evaluate four block selection strategies:
\begin{itemize}[leftmargin=4mm]
    \item \textbf{Ascending order}: $1 \rightarrow 2 \rightarrow \ldots \rightarrow N \rightarrow 1 \rightarrow \ldots$
    \item \textbf{Descending order}: $N \rightarrow N-1 \rightarrow \ldots \rightarrow 1 \rightarrow N \rightarrow \ldots$
    \item \textbf{Flip-flop order}: $1 \rightarrow 2 \rightarrow \ldots \rightarrow N \rightarrow N-1 \rightarrow \ldots \rightarrow 2 \rightarrow 1 \rightarrow 2 \rightarrow \ldots$
    \item \textbf{Cyclic random permutation}: At the start of every $N$ steps, a random permutation of the $N$ blocks is generated and followed for the subsequent $N$ updates.
\end{itemize}
Here, a lower block index corresponds to layers closer to the input, and a higher index to layers closer to the output. While all strategies yield the same marginal probability of updating each block, they differ in their local ordering and thus in their short-term update patterns.

\paragraph{Empirical Insights.} To isolate the effect of block selection strategy, we conduct experiments with OPT-1.3B on SST-2, RTE, and WIC, fixing the block definition to \textit{layer-wise} and searching over learning rates $\{5\times10^{-6}, 1\times10^{-5}\}$. The experimental protocol mirrors that in Table~\ref{tab:block_def}, ensuring comparability.

Table~\ref{tab:block_ordering} presents the results. The flip-flop and cyclic random permutation strategies exhibit robust and consistently strong performance across all tasks, whereas the ascending and descending orderings show larger variance and less favorable results, particularly on RTE and WIC. Based on these findings, we adopt \textit{\textbf{flip-flop}} and \textit{\textbf{cyclic random permutation}} as our default block selection strategies, treating the choice between them as a discrete design parameter subject to tuning.

\begin{table}[!h]
    \centering
    \caption{Test accuracy of MeZO-BCD with different block selection strategies on OPT-1.3B. Results are the best accuracy across 20K steps, where test accuracies are evaluated every 4K steps}
    \label{tab:block_ordering}
    \vspace{5mm}
    \begin{tabular}{l|ccc}
        \toprule
        \textbf{Block Selection} & \textbf{SST-2} & \textbf{RTE} & \textbf{WIC} \\
        \midrule 
        Ascending order & 91.9 & 61.7 & 58.0 \\
        Descending order & 92.8 & 59.6 & 58.5 \\
        Flip-flop order & 92.7 & 62.9 & 58.2 \\
        Cyclic random permutation & 92.3 & 65.0 & 60.2 \\
        \bottomrule
    \end{tabular}
\end{table}

\subsection{Theoretical Memory Consumption Analysis}

To rigorously compare the memory consumption efficiency of zeroth-order methods, we analyze the theoretical \textit{peak} memory requirements during training, as summarized in Table~\ref{tab:theoretical_memory}. Here, “peak” refers to the maximum instantaneous memory usage (\emph{e.g.}, VRAM) at any point during optimization, including temporary allocations for perturbations, but excluding activations or any other intermediate byproducts such as temporary forward buffers, dataloader prefetch, or I/O buffer.

Let the model consist of $L$ layers, with $\mathbf{W}_\ell \in \mathbb{R}^{m_\ell \times n_\ell}$ denoting the weight matrix of the $\ell$-th layer and $r$ the low-rank perturbation rank. For simplicity, we denote $|\mathbf{W}_\ell| \coloneqq m_\ell n_\ell $ by the total dimension of $\mathbf{W}_\ell$. Also, we refer to \textit{held perturbation} as a segment of memory that is not continuously resident in the system but is instead allocated transiently in response to a Gaussian perturbation.

\begin{table}[h]
    \centering
    \caption{Theoretical peak memory requirements (in number of parameters) during training, excluding activations or any other intermediate byproducts such as temporary forward buffers, dataloader prefetch, or I/O buffer. The \texttt{max} operator reflects instantaneous memory usage due to temporarily held perturbations.}\label{tab:theoretical_memory}
    \vspace{2mm}
    \begin{tabular}{l|l}
        \toprule
        \textbf{Method} & \textbf{Peak Memory Requirement} \\
        \midrule
        MeZO~\citep{malladi2023fine} & 
        $\underbrace{\sum_{\ell=1}^L |\mathbf{W}_\ell|}_{\text{model weights}} 
         + \underbrace{\max_{\ell} |\mathbf{W}_\ell|}_{\text{held perturbation}}$ \\
        SparseMeZO~\citep{liu2024sparse} & 
        $\underbrace{\sum_{\ell=1}^L |\mathbf{W}_\ell|}_{\text{model weights}}
         + \underbrace{\max_{\ell} |\mathbf{W}_\ell|}_{\text{held perturbation}}
         + \underbrace{\max_{\ell} |\mathbf{W}_\ell|}_{\text{sparse mask}}$ \\
        LoZO~\citep{chen2024enhancing} & 
        $\underbrace{\sum_{\ell=1}^L |\mathbf{W}_\ell|}_{\text{model weights}}
         + \underbrace{\max_{\ell} m_\ell r}_{\text{instant low-rank perturbation}}
         + \underbrace{\sum_{\ell=1}^L n_\ell r}_{\text{persistent low-rank factor}}$ \\
        MeZO-BCD (Ours) & 
        $\underbrace{\sum_{\ell=1}^L |\mathbf{W}_\ell|}_{\text{model weights}}
         + \underbrace{\max_{\ell \in B} |\mathbf{W}_\ell|}_{\text{held perturbation}}$ \\
        \bottomrule
    \end{tabular}
\end{table}

All methods must store the full set of model parameters ($\sum_{\ell=1}^L |\mathbf{W}_\ell|$) in memory throughout training. The additional \textit{held perturbation} term represents the transient memory usage needed to apply weight perturbations within each training step. For MeZO~\citep{malladi2023fine} and SparseMeZO~\citep{liu2024sparse}, this extra allocation never exceeds the size of a single layer’s weights at any moment, as perturbations are generated, used, and discarded immediately. SparseMeZO also incurs overhead from an additional mask of the same size. In contrast, LoZO~\citep{chen2024enhancing}’s low-rank scheme involves both instant memory for sampled low-rank factors and persistent memory for factors reused across steps. MeZO-BCD, on the other hand, requires perturbing only the currently active block, so its peak auxiliary memory is, by construction, at most the largest weight size within any block $B$.

Importantly, although MeZO-BCD perturbs only one block at a time, every block is eventually selected during training. Therefore, the peak memory usage still corresponds to the largest block, and the effective bound matches that of MeZO. As a result, the true peak is determined by the largest layer across all blocks, making the effective memory bound identical to that of MeZO. Therefore, the use of the $\max$ operator in the analysis reflects that, while only one block is perturbed at a time, the overall peak is dictated by the largest such allocation encountered over all steps. Consequently, MeZO-BCD (like MeZO) achieves the theoretical minimum peak memory for zeroth-order optimization, essentially matching inference-time requirements, whereas alternative methods may require substantially more memory. This guarantees that MeZO-BCD’s temporary perturbations never incur a memory usage exceeding the size of the largest individual block or layer.

\paragraph{Remark on the experimental results in Table~\ref{tab:time_mem_comparison}.} On SST-2, where activation buffers are relatively small, the reduction in auxiliary perturbation memory afforded by MeZO-BCD is large enough to lower the overall peak VRAM usage slightly, hence the small differences observed between methods on this dataset. By contrast, BoolQ and MultiRC use much longer input sequences, so their activation tensors dominate the GPU footprint. In those cases, the extra memory for perturbations or masks becomes negligible, and all methods converge to essentially the same peak memory.

Pseudocode for the full MeZO-BCD procedure is provided in Algorithm~\ref{alg:mezo_bcd}. We conclude this section with the following remark.

\paragraph{Why BCD is Particularly Effective in Zeroth-Order Optimization.} It is worth emphasizing why BCD, despite its classical status in first-order (FO) optimization, is particularly advantageous in the zeroth-order (ZO) context. In FO optimization, updating a single block still incurs the cost of computing and storing gradients for all downstream layers via backpropagation, resulting in substantial computational and memory overhead (especially for blocks closer to the input). In contrast, ZO optimization leverages only forward evaluations, so updating a block affects solely its own parameters with no added cost for the rest of the network. As a result, ZO-BCD achieves genuine reductions in memory transfer and computational overhead per step, avoiding the bottlenecks that limit the efficiency of BCD in first-order regimes. This distinction underlies the scalability and efficiency of MeZO-BCD for large-scale LLM fine-tuning.

\begin{algorithm}[!h]
\caption{MeZO-BCD}\label{alg:mezo_bcd} 
\centering
\begin{algorithmic}
    \REQUIRE block partitioning $\{B_i\}_{i=1}^{N}$, block ordering $\texttt{order}$, parameters $\mathbf{\theta}\in\mathbb{R}^d$ is partitioned into $[\theta_{B_1},\ldots,\theta_{B_N}]$, loss function $\mathcal{L}:\mathbb{R}^d\rightarrow\mathbb{R}$, step budget $T$, perturbation scale $\mu$, learning rate schedule $\{\eta_t\}_{t=1}^{T}$
    \item[]
    \FOR{$t=1,\ldots,T$}
        \STATE Sample minibatch $\mathcal{B}\subset \mathcal{D}$ and random seed $s$
        \STATE Update current active block index $j\leftarrow\mathrm{\texttt{UpdateBlockIdx}}(\texttt{order}, t, N)$
        \STATE $\mathbf{\theta}_{B_j}\leftarrow \mathrm{\texttt{PerturbParameters}}(\mathbf{\theta}_{B_j},\mu,s)$
        \STATE $\ell_+\leftarrow\mathcal{L}(\mathbf{\theta};\mathcal{B})$
        \STATE $\mathbf{\theta}_{B_j}\leftarrow \mathrm{\texttt{PerturbParameters}}(\mathbf{\theta}_{B_j},-2\mu,s)$
        \STATE $\ell_-\leftarrow\mathcal{L}(\mathbf{\theta};\mathcal{B})$
        \STATE $\mathbf{\theta}_{B_j}\leftarrow \mathrm{\texttt{PerturbParameters}}(\mathbf{\theta}_{B_j},\mu,s)$
        \item[]
        \STATE\texttt{projected\_grad} $\leftarrow (\ell_+-\ell_-)/2\mu$
        \STATE Reset random number generator with seed $s$
        \FOR{$\theta_i\in\mathbf{\theta}_{B_j}$}
            \STATE $u\sim\mathcal{N}(0,1)$
            \STATE $\theta_i\leftarrow\theta_i-\eta_t$ * \texttt{projected\_grad} * $u$
        \ENDFOR
    \ENDFOR
\end{algorithmic}
\end{algorithm}

\begin{figure}[htb]
\centering
\begin{minipage}[t]{0.48\textwidth}
    \begin{algorithm}[H]
    \caption{\texttt{PerturbParameters($\mathbf{\theta}, \mu, s$)}}\label{alg:perturb_params}
    \begin{algorithmic}
        \REQUIRE parameters $\mathbf{\theta}\in\mathbb{R}^n$, perturbation scale $\mu$, random seed $s$
        \item[]
        \STATE Reset random number generator with seed $s$
        \FOR{$\theta_i\in\mathbf{\theta}$}
            \STATE $u\sim\mathcal{N}(0,1)$
            \STATE $\theta_i\leftarrow \theta_i+\mu u$
        \ENDFOR
        \STATE \textbf{return} $\mathbf{\theta}$
    \end{algorithmic}
    \end{algorithm}
\end{minipage}
\hspace{0.02\textwidth}
\begin{minipage}[t]{0.48\textwidth}
    \begin{algorithm}[H]
    \caption{\texttt{UpdateBlockIdx(order, $t, N$)}}\label{alg:update_block_idx}
    \begin{algorithmic}
        \REQUIRE block ordering \texttt{order}, current time step $t$, number of blocks $N$
        \item[]
        \IF{\texttt{order} is ascending}
            \STATE $i\leftarrow ((t-1)\bmod N) +1$
        \ELSIF{\texttt{order} is descending}
            \STATE $i\leftarrow N - ((t-1) \bmod N)$
        \ELSIF{\texttt{order} is flip-flop}
            \STATE $i\leftarrow N-|((t-1)\bmod(2N-2))-(N-1)|$
        \ELSIF{\texttt{order} is random}
            \IF{$t\bmod N = 0$}
                \STATE Reset the random permutation $\pi\leftarrow \texttt{Shuffle}([N])$
            \ENDIF
            \STATE $i\leftarrow \pi[(t \bmod N)]$
        \ENDIF
        \STATE\textbf{return} $i$
    \end{algorithmic}
    \end{algorithm}
\end{minipage}
\end{figure}
\clearpage
\section{Experimental Details}

\subsection{Randomized Quadratic Minimization}\label{app:exp_detail_empirical}
For the experiments in Section~\ref{subsec:empirical_validation}, the perturbation scale is globally set to $\mu=10^{-4}$ unless stated otherwise. The Hessian-like matrix $\mathbf{H}$ is generated as follows. The matrix is constructed as a block-diagonal structure, where each block is a randomly generated positive semi-definite matrix. Specifically, the $d \times d$ matrix $\mathbf{H}$ is divided into $B$ non-overlapping blocks, each of size $\frac{d}{B} \times \frac{d}{B}$. For each block, a random $\frac{d}{B} \times \frac{d}{B}$ matrix is sampled, and an orthogonal matrix $\mathbf{Q}_i$ is obtained via QR decomposition. Given a predefined maximum eigenvalue $\lambda_{\max}^{(i)}$ for each block, the diagonal matrix $\mathbf{\Lambda}^{(i)}$ is constructed as:

\[
\mathbf{\Lambda}^{(i)} = \mathrm{diag}(\underbrace{\lambda_{\max}^{(i)}, \ldots, 0.1\times\lambda_{\max}^{(i)}}_{r}, \underbrace{0, \ldots, 0}_{\frac{d}{B} - r}),
\]

where eigenvalues decay linearly. Each block of the Hessian-like matrix is then formed as:

\[
\mathbf{H}_i = \mathbf{Q}_i \mathbf{\Lambda}^{(i)} \mathbf{Q}_i^\top.
\]

The final Hessian-like matrix $\mathbf{H}$ is assembled by placing these block matrices along the diagonal:

\[
\mathbf{H} =
\begin{bmatrix}
    \mathbf{H}_1 & 0 & \cdots & 0 \\
    0 & \mathbf{H}_2 & \cdots & 0 \\
    \vdots & \vdots & \ddots & \vdots \\
    0 & 0 & \cdots & \mathbf{H}_B
\end{bmatrix}.
\]

This process ensures that $\mathbf{H}$ maintains a block-diagonal structure while preserving the spectral properties within each block. The following function constructs $\mathbf{H}$ according to the above procedure:

\begin{python}[language=Python, caption=]
def generate_hessian_like_matrix(dim=256, rank=64, num_blocks=1, max_eigenvals=[10]):
    block_size = dim // num_blocks
    H = torch.zeros((dim, dim))

    for i in range(num_blocks):
        start = i * block_size
        end = start + block_size

        Q, _ = torch.linalg.qr(torch.randn(block_size, block_size))

        max_eigenval = max_eigenvals[i]
        eigenvalues = torch.cat([
            torch.linspace(max_eigenval, 0.1 * max_eigenval, steps=rank),
            torch.zeros(block_size - rank)
        ])
        S = torch.diag(eigenvalues)
        block = Q @ S @ Q.T
        
        H[start:end, start:end] = block

    return H
\end{python}

\subsubsection{Configuration for Figure~\ref{fig:curves_rho}}\label{app:exp_detail_fig1}
To examine the influence of the subspace alignment $\rho$, we use a dense Hessian matrix (i.e., $B=1$). The subspace alignment $\rho$ is controlled by introducing a parameter $\gamma \in [0,1]$, which determines the proportion of Hessian eigenvectors incorporated into the low-rank projection matrix $\mathbf{M}$. Specifically, $\lceil s\gamma\rceil$ eigenvectors (where $s = \mathrm{srank}(\mathbf{M})$) are randomly selected, while the remaining vectors are randomly generated and orthogonalized against the selected ones.

Formally, let the Hessian matrix $\mathbf{H}$ have rank $r$, and let $s$ denote the stable rank of $\mathbf{M}$. Denote the eigenvectors of $\mathbf{H}$ corresponding to nonzero eigenvalues as $\{\mathbf{q_1}, \mathbf{q_2}, \dots, \mathbf{q_r}\}$. To construct $\mathbf{M}$ with a controlled alignment $\rho$:
\begin{enumerate}[leftmargin=7mm]
    \item Randomly selected $s\gamma$ eigenvectors are set as columns of $\mathbf{M}_1$ (assuming $s\gamma$ is an integer for simplicity).
    \item A random $d \times (1-s\gamma)$ matrix $\mathbf{R}$ is generated.
    \item The QR decomposition is applied to $\mathbf{R} - \mathbf{M}_1\mathbf{M}_1^\top \mathbf{R}$ to obtain an orthogonal matrix $\mathbf{M}_2$ of size $d \times (1-s\gamma)$.
    \item The final projection matrix is constructed as $\mathbf{M} = [\mathbf{M}_1, \mathbf{M}_2]$.
\end{enumerate}

\paragraph{Experimental Setup for the Left Panel of Figure~\ref{fig:curves_rho}.}
For this experiment, we set $d=256$, the Hessian rank to 64, and the stable rank $s$ of $\mathbf{M}$ to 64. This ensures that when $\gamma=1.0$, the projection matrix fully captures the Hessian eigenvectors. We use a learning rate of $\eta = 10^{-3}$, a maximum of 1000 steps, and test five values of $\gamma$:  
\[
\gamma = \{0, 0.2, 0.4, 0.7, 1.0\}.
\]

\paragraph{Experimental Setup for the Middle Panel of Figure~\ref{fig:curves_rho}.}
The configuration is the same as in the left panel, except:
\begin{itemize}[leftmargin=7mm]
    \item The maximum number of iterations is increased to 10,000.
    \item The number of $\gamma$ values is expanded to 12:
      \[
      \gamma = \{0.0, 0.05, 0.1, 0.2, 0.3, 0.4, 0.5, 0.6, 0.7, 0.8, 0.9, 1.0\}.
      \]
    \item The target loss is defined as the minimum loss achieved during training under $\gamma=0.0$.
    \item The number of required iterations is computed as the minimum number of steps needed for the loss curve to reach this target value.
\end{itemize}

\subsubsection{Configuration for the Right Panel of Figure~\ref{fig:curves_rho}}\label{app:exp_detail_fig2}
To evaluate the subspace alignment $\rho$ across different methods, we conduct randomized quadratic minimization experiments using a block-diagonal Hessian matrix with heterogeneous eigenspectra. To ensure a broader range of $\mathrm{srank}(\mathbf{M})$ values, the problem dimension is set to $d=1024$, with the Hessian consisting of 16 diagonal blocks, each of rank 16. To introduce heterogeneity in the eigenspectrum across blocks, the eigenvalues in each block are set by randomly choosing a reference from $\{10, 40, 70, 100\}$ and sampling each eigenvalue as an integer within $\pm 2$ of it.

\paragraph{Measuring subspace alignment.}
Since the subspace alignment $\rho$ depends only on the projection matrix $\mathbf{M}$ and not on the parameter $\theta$, we measure $\rho$ without performing actual optimization. Instead, we randomly sample $\mathbf{M}$ 1000 times for each method and compute $\rho$ accordingly.

The projection matrices $\mathbf{M}$ are generated as follows:
\begin{itemize}[leftmargin=7mm]
    \item \textbf{Low-rank Projection}: The projection matrix is defined as $\mathbf{M} = \mathbf{U} \mathbf{U}^\top$, where $\mathbf{U}$ is obtained by performing QR decomposition on a randomly generated $d \times s$ matrix.
    \item \textbf{Sparse Perturbation}: A binary mask vector $\mathbf{m} = (m_1, \ldots, m_d)$ is created by randomly permuting $[d]$ and selecting the first $(1 - \frac{s}{d})$ fraction of indices.
    \item \textbf{Block Sparse Perturbation}: Given a predefined number of blocks $N$, each block is defined as:
    \[
    B_i = \left\{ (i-1) \times \frac{d}{N} + 1, \dots, i \times \frac{d}{N} \right\}.
    \]
    A block index $j$ is selected uniformly at random, and the mask vector is set such that $m_i = 1$ for $i \in B_j$, while all other entries are set to 0.
\end{itemize}

\paragraph{Experimental Setup.}
To examine the distribution of $\rho$ across different values of $\mathrm{srank}(M)$, we conduct experiments with:
\[
s \in \{16, 32, 64, 128, 256\}.
\]
For each $s$, the distribution of $\rho$ is computed over 1000 trials.

\subsection{Fine-tuning Large Language Models}\label{app:exp_detail_opt}
We now describe the experimental details for our large-scale fine-tuning experiments on LLMs. To ensure a fair comparison, we strictly adhere to the experimental setup of LoZO~\citep{chen2024enhancing}, maintaining consistency in the training environment.

\paragraph{Model and Datasets.}
For fine-tuning OPT-13B~\citep{zhang2022opt}, we conduct experiments on a diverse set of datasets, including SST-2~\citep{sst2}, RTE~\citep{dagan2005pascal, bar2006second, giampiccolo2007third, bentivogli2009fifth}, CB~\citep{de2019commitmentbank}, BoolQ~\citep{clark2019boolq}, WSC~\citep{levesque2012winograd}, WIC~\citep{pilehvar2018wic}, MultiRC~\citep{khashabi2018looking}, COPA~\citep{roemmele2011choice}, ReCoRD~\citep{zhang2018record}, SQuAD~\citep{rajpurkar2018know}, and DROP~\citep{dua2019drop}.  
For OPT-1.3B, we focus exclusively on SST-2 to evaluate convergence speed and optimization efficiency.

\paragraph{Licenses for Models and Datasets}
All models and datasets used in this work are publicly released and available for academic research. Specifically, we use the OPT-125M, OPT-1.3B, and OPT-13B language models released by Meta AI under a custom non-commercial research-only license.\footnote{\url{https://huggingface.co/facebook/opt-13b}} For all fine-tuning and evaluation tasks, we rely on standard NLP benchmarks: SST-2, RTE, CB, COPA, WSC, WIC, BoolQ, SQuAD, DROP, ReCoRD, and MultiRC. These datasets are part of the GLUE/SuperGLUE benchmark suite or other widely used QA corpora, and are distributed under permissive open-source or research-friendly licenses, including Apache 2.0, CC BY 4.0, CC BY-SA 4.0, and similar. We cite all relevant sources and have ensured that our usage adheres to each asset’s licensing terms. Where applicable, non-commercial restrictions have been strictly observed.

\paragraph{Prompts.}
In all our experiments, we use the same prompt templates as MeZO~\citep{malladi2023fine} (and thus LoZO~\citep{chen2024enhancing}) for every dataset, ensuring strict comparability and reproducibility. The prompt templates are provided in Table~\ref{tab:prompt}

\paragraph{Hyperparameters.}
Tables~\ref{tab:opt_hyper_13b} and~\ref{tab:opt_hyper_1.3b} outline the hyperparameter search spaces used for OPT-13B and OPT-1.3B, respectively, ensuring reproducibility.  
All experiments utilize a constant learning rate schedule, and training is conducted for 20K steps. The test accuracy is evaluated for every 4K steps, and we report the best one. The wall-clock time for OPT-1.3B experiments is measured on a single NVIDIA RTX 3090 GPU with an Intel Xeon Gold 5215 Processor.

\paragraph{Notes on the statistical significance.} Table~\ref{tab:opt-13b} reports results for a single random seed: $0$ (here, the random seed refers to the overall seed that controls the entire training process, including dataset sampling and super seeds for ZO perturbations, rather than just the per-iteration ZO seed). This choice was made to ensure a fair comparison with our main baseline, LoZO~\citep{chen2024enhancing}, which did not report any measures of statistical significance in their original paper. Analyzing their released code, we inferred that their published results were also obtained with a single run using random seed $0$. To maintain consistency and fairness in our comparisons, we therefore adopted the same experimental protocol for Table~\ref{tab:opt-13b}. While this approach enables a direct comparison, we acknowledge that statistical significance is essential for a robust evaluation. Accordingly, for completeness, we report additional results averaged over three random seeds in Section~\ref{app:stat_sig}, providing a more comprehensive assessment of performance variability.

\paragraph{Notes on the reproducibility of SparseMeZO~\citep{liu2024sparse}.}
We attempted to reproduce SparseMeZO~\citep{liu2024sparse} using our own implementation, as the official code is not publicly released. However, on fine-tuning OPT-13B for the RTE task, our best reproduction achieved only $57.0\%$ test accuracy, which is substantially lower than the $77.6\%$ reported in the original paper (Table 4 in~\citep{liu2024sparse}). Despite extensive tuning efforts, we were unable to close this gap. Due to this substantial gap and the unavailability of reference code for verification, we refrained from including our reproduction to avoid unfairly disadvantaging the original method. We therefore exclude SparseMeZO from our main experiments to preserve the integrity of cross-method comparisons.

\begin{table}[h]
\centering
\caption{Prompt templates and evaluation formats for all datasets. Prompt templates are identical to those used in MeZO~\citep{malladi2023fine}, ensuring strict consistency across methods.}\label{tab:prompt}
\vspace{4mm}
\resizebox{\textwidth}{!}{
\begin{tabular}{lll}
\toprule
Dataset & Task Type & Prompt Format \\
\midrule
SST-2 & Classification & \texttt{<text>} It was terrible/great \\
RTE & Classification & \texttt{<premise>} Does this mean that ``\texttt{<hypothesis>}'' is true? Yes or No? \\
CB & Classification & Suppose \texttt{<premise>} Can we infer that ``\texttt{<hypothesis>}''? Yes, No, or Maybe? \\
BoolQ & Classification & \texttt{<passage>} \texttt{<question>}? Yes/No \\
WSC & Classification & \texttt{<text>}\\
& & In the previous sentence, does the pronoun ``\texttt{<span2>}'' refer to \texttt{<span1>}? Yes or No? \\
WIC & Classification & Does the word ``\texttt{<word>}'' have the same meaning in these two sentences? Yes, No?\\
& & \texttt{<sent1>}\\
& & \texttt{<sent2>} \\
MultiRC & Classification & \texttt{<paragraph>}\\
& & Q: \texttt{<question>}\\
& & I found this answer ``\texttt{<answer>}''. Is that correct? Yes/No \\
COPA & Multiple Choice & \texttt{<premise>} so/because \texttt{<candidate>} \\
ReCoRD & Multiple Choice & \texttt{<passage>}\\
& & \texttt{<query>}.replace("@placeholder", \texttt{<candidate>})\\
SQuAD & QA & Title: \texttt{<title>}\\
& & Context: \texttt{<context>}\\
& & Q: \texttt{<question>}\\
& & A: \\
DROP & QA & Passage: \texttt{<context>}\\
& & Q: \texttt{<question>}\\
& & A: \\
\bottomrule
\end{tabular}
}
\label{tab:prompt-templates}
\end{table}

\begin{table*}[h]
    \centering
    \caption{The hyperparameter configurations utilized for OPT-13B experiments.}
    \begin{tabular}{lcc}
    \toprule
    Method & Hyperparameters & Values \\
    \midrule
    MeZO & Batch size & $16$ \\
    & Learning rate & $\{1\mathrm{e}{-6}, 1\mathrm{e}{-7} \}$\\
    & $\mu$ & $1\mathrm{e}{-3}$ \\
    \midrule
    LOZO & Batch size & $16$ \\
    & Learning rate & $\{1\mathrm{e}{-6}, 1\mathrm{e}{-7}\}$ \\
    & $\mu$ & $1\mathrm{e}{-3}$ \\
    & Rank ($r$) & $\{1,2,4\}$ \\
    & Interval ($\nu$) & $\{50\}$ \\
    \midrule
    MeZO-BCD & Batch size & $16$ \\
    & Learning rate & $\{1\mathrm{e}{-5}, 5\mathrm{e}{-6}\}$\\
    & $\mu$ & $1\mathrm{e}{-3}$ \\
    & Block Update Order & \{random, flip-flop\} \\
    \bottomrule
    \end{tabular}
    \label{tab:opt_hyper_13b}
\end{table*}

\begin{table*}[h]
    \centering
    \caption{The hyperparameter configurations utilized for OPT-1.3B experiments}
    % \resizebox{0.8\columnwidth}{!}{
    \begin{tabular}{lcc}
    \toprule
    Method & Hyperparameters & Values \\
    \midrule
    MeZO & Batch size & $16$ \\
    & Learning rate & $\{1\mathrm{e}{-6}, 1\mathrm{e}{-7} \}$\\
    & $\mu$ & $1\mathrm{e}{-3}$ \\
    \midrule
    LOZO & Batch size & $16$ \\
    & Learning rate & $\{1\mathrm{e}{-6}, 1\mathrm{e}{-7}\}$ \\
    & $\mu$ & $1\mathrm{e}{-3}$ \\
    & Rank ($r$) & $\{1,2,4\}$ \\
    & Interval ($\nu$) & $\{50\}$ \\
    \midrule
    SparseMeZO & Batch size & $16$ \\
    & Learning rate & $\{1\mathrm{e}{-7}, 2\mathrm{e}{-7}, 5\mathrm{e}{-7} \}$ \\
    & $\mu$ & $1\mathrm{e}{-3}$ \\
    & Sparsity & $\{0.75, 0.8\}$ \\
    \midrule
    MeZO-BCD & Batch size & $16$ \\
    & Learning rate & $\{1\mathrm{e}{-5}, 5\mathrm{e}{-6}\}$\\
    & $\mu$ & $1\mathrm{e}{-3}$ \\
    & Block Update Order & \{random, flip-flop\} \\
    \bottomrule
    \end{tabular}
    % }
    \label{tab:opt_hyper_1.3b}
\end{table*}
\clearpage
\section{Supplementary Experimental Results}
\subsection{Supplementary Results on Empirical Study}\label{app:supp_empirical_study}

To complement the empirical observations from Figure~\ref{fig:curves_rho} and further validate our theoretical insights in Section~\ref{subsec:empirical_validation}, we present additional visualizations and convergence curves under varying subspace perturbations. These results reinforce the role of the subspace alignment $\rho$ and offer a deeper view into the distributional behavior of different subspace perturbation schemes.

\begin{figure}[ht]
    \centering
    \subfigure[Log-scaled boxen plot of distribution of $\rho$]{\includegraphics[width=0.48\textwidth]{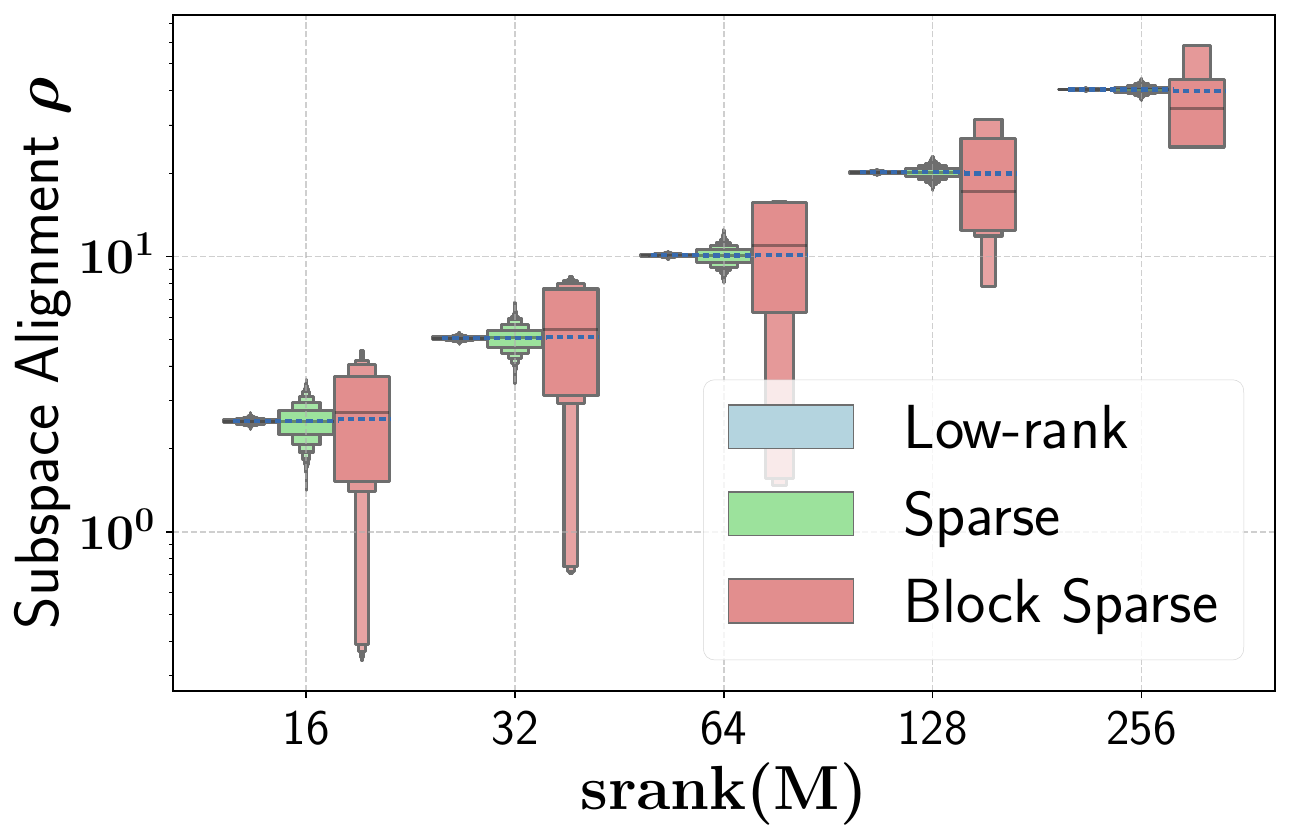}}
    \subfigure[Strip plot of distribution of $\rho$]{\includegraphics[width=0.48\textwidth]{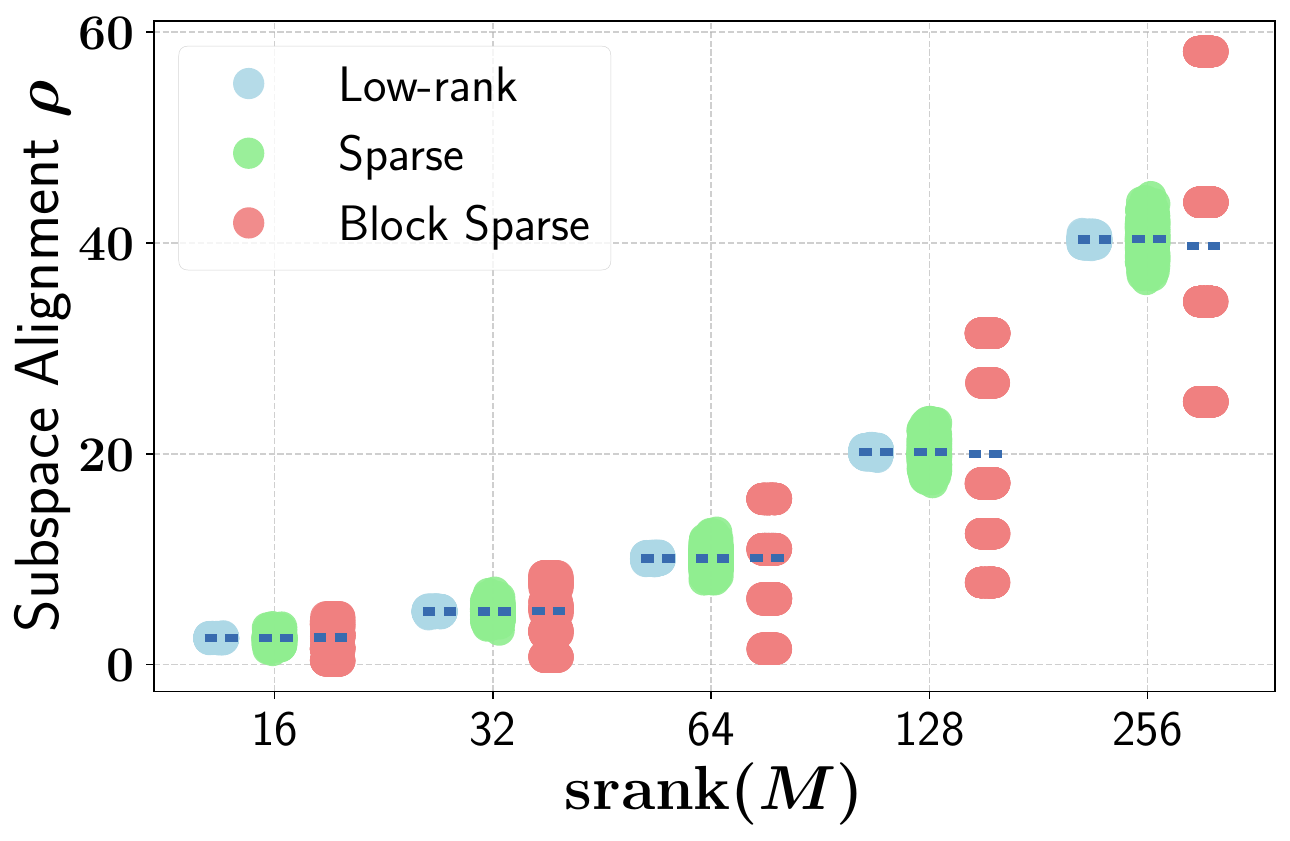}}
    \caption{Comparative visualization of the distribution of subspace alignment $\rho$ under varying $\mathrm{srank}(\mathbf{M})$ across three perturbation types: low-rank projection, sparse perturbation, and block sparse perturbation. (a) The log-scaled boxen plot visualizes the lower tail behavior of $\rho$. (b) The strip plot highlights the discrete nature of block sparse perturbations.}
    \label{fig:sup_fig1}
\end{figure}

Figure~\ref{fig:sup_fig1} provides alternative visualizations of the subspace alignment $\rho$ for different subspace perturbation schemes. While the mean alignment $\bar\rho$ remains similar across all cases (consistent with Proposition~\ref{prop:expected_rho}), the distributional patterns differ significantly. In particular, block sparse perturbation displays a discrete and more dispersed distribution compared to the tightly concentrated low-rank projection and sparse perturbation, aligning with the predictions of Proposition~\ref{prop:prob_rho}.

\begin{figure}[!h]
    \centering
    \vskip -10pt
    \subfigure[$\mathrm{srank}(M)=16$]{\includegraphics[width=0.32\textwidth]{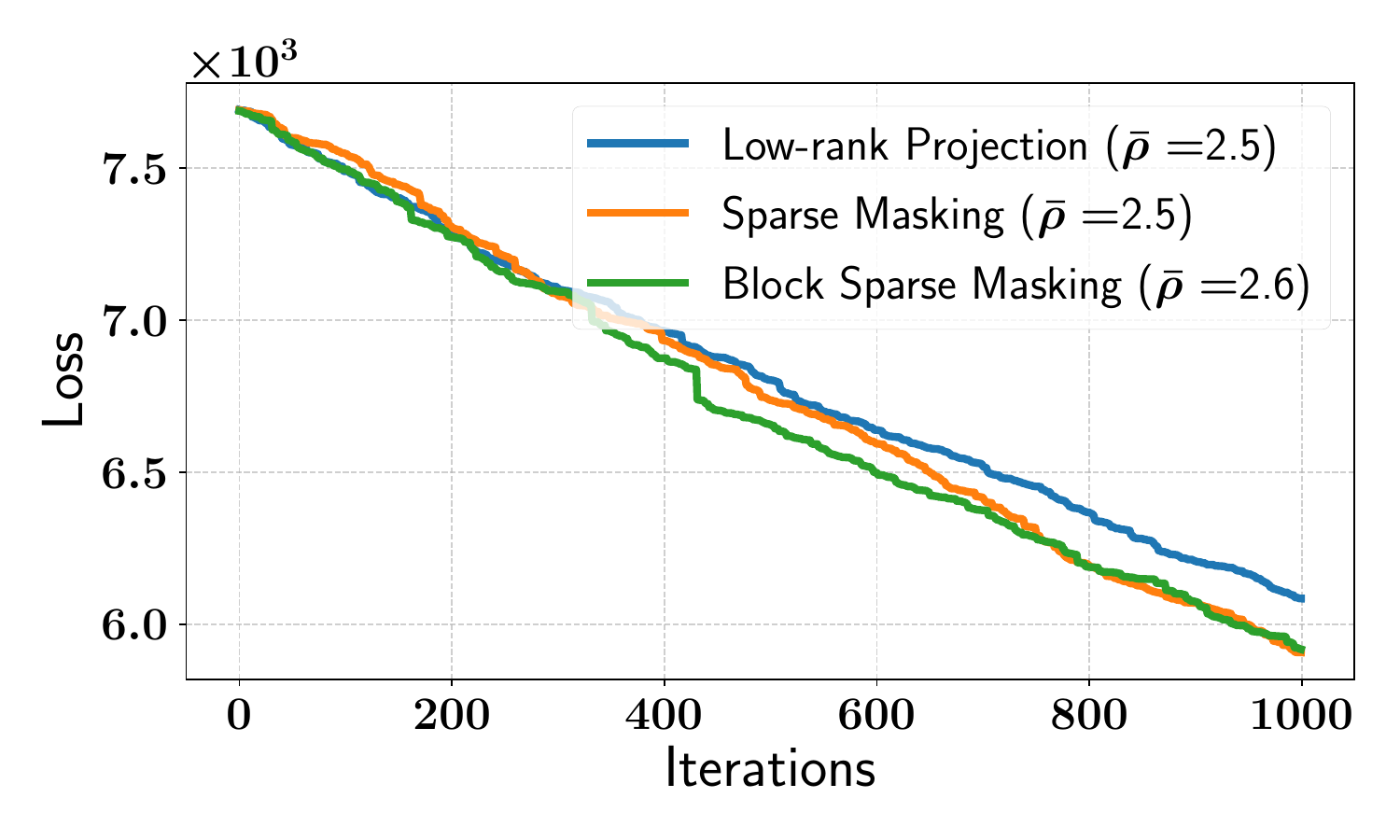}}
    \subfigure[$\mathrm{srank}(M)=32$]{\includegraphics[width=0.32\textwidth]{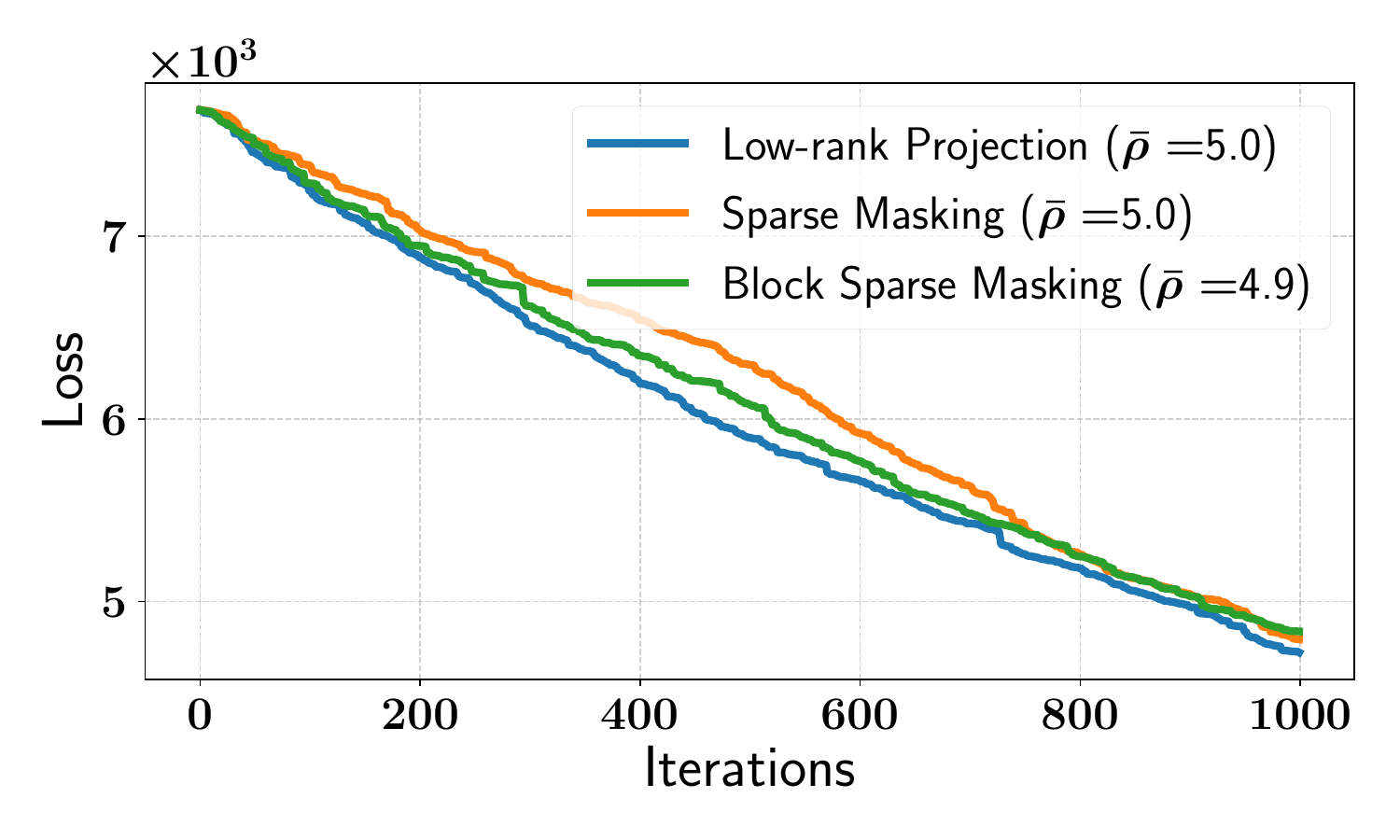}}
    \subfigure[$\mathrm{srank}(M)=64$]{\includegraphics[width=0.32\textwidth]{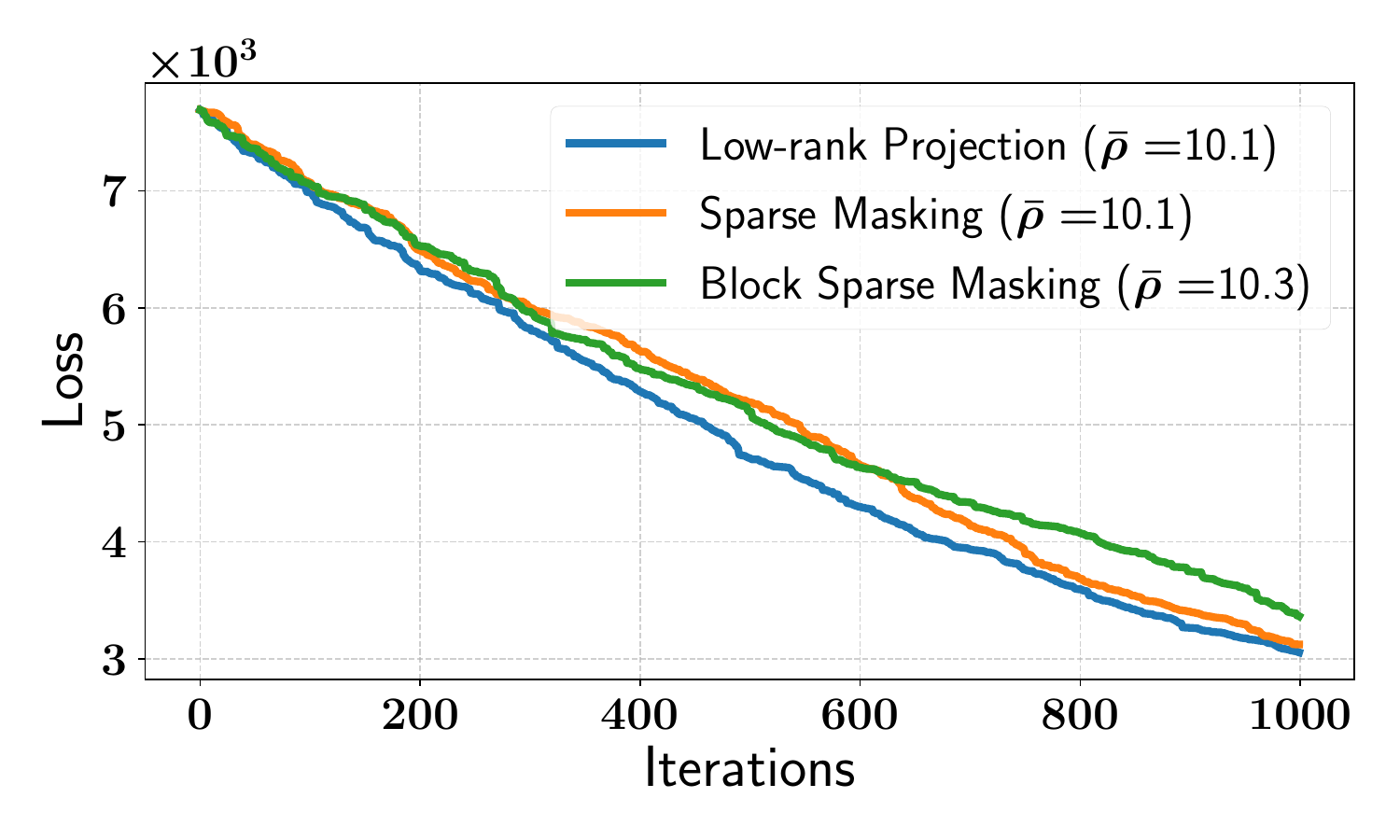}}
    \subfigure[$\mathrm{srank}(M)=128$]{\includegraphics[width=0.32\textwidth]{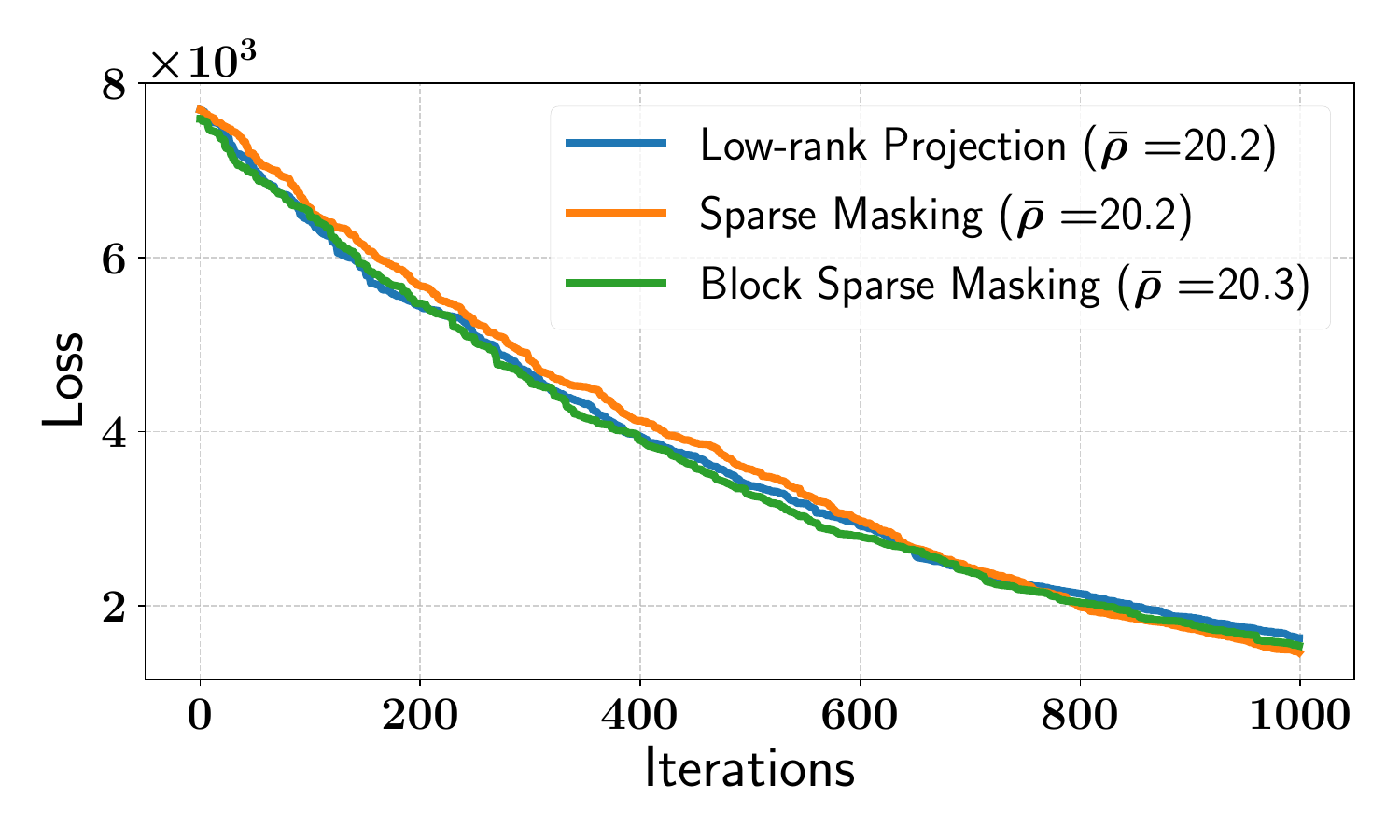}}
    \subfigure[$\mathrm{srank}(M)=256$]{\includegraphics[width=0.32\textwidth]{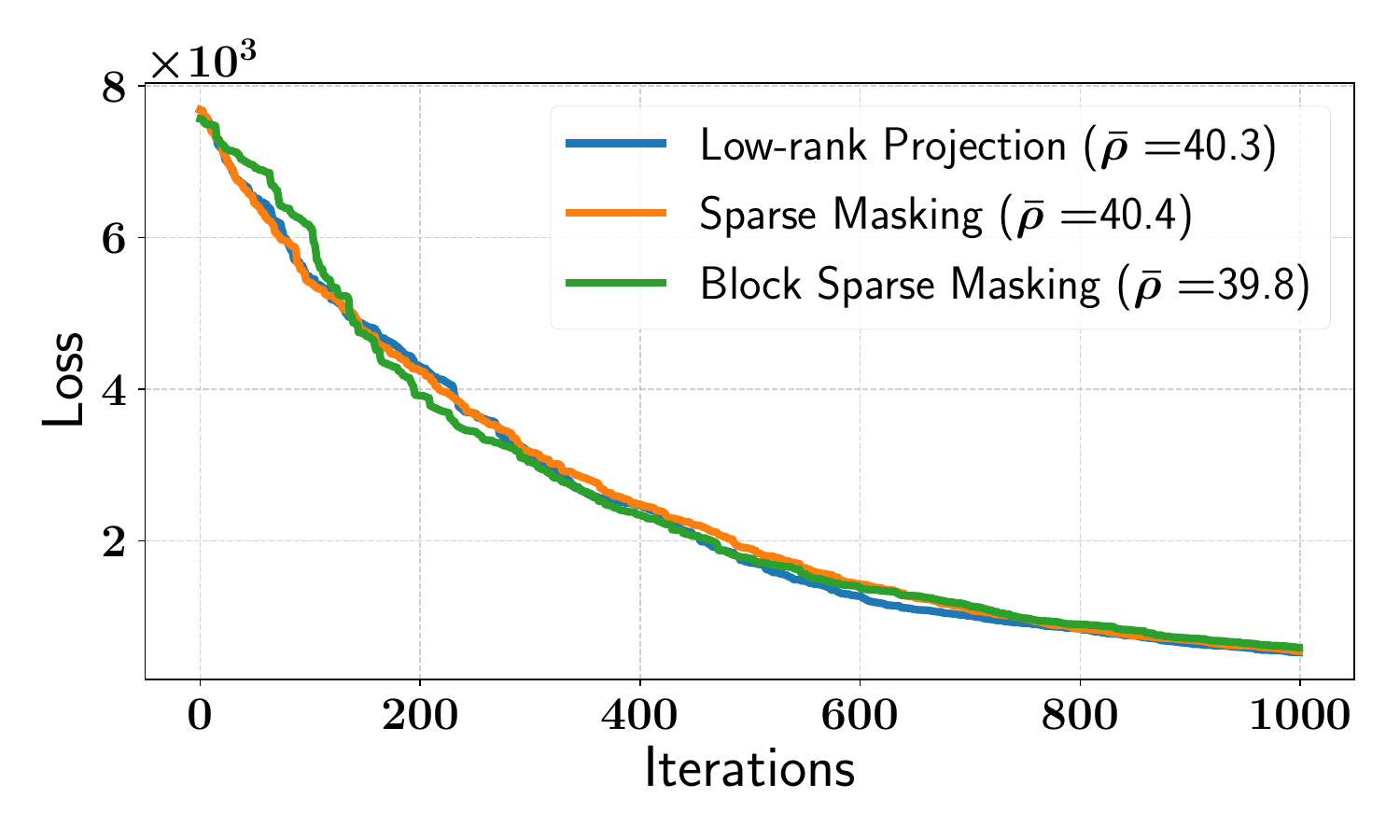}}
    \caption{Training loss curves for varying $\mathrm{srank}(\mathbf{M})$ across low-rank, sparse, and block sparse perturbation methods. Each plot corresponds to a fixed $\mathrm{srank}(\mathbf{M})$. The reported $\bar\rho$ values indicate the mean subspace alignment for each method. Learning rate $\eta = 10^{-4}$ is used throughout.}
    \label{fig:sup_fig2}
\end{figure}

Figure~\ref{fig:sup_fig2} shows that all three subspace perturbation schemes lead to comparable convergence rates under random subspace selection. This confirms that in the absence of adaptivity (in subspace selection), the convergence is largely governed by the mean subspace alignment $\bar\rho$, rather than the structural form of the perturbation itself.

\subsection{On the Statistical Significance of LLM Fine-tuning}
\label{app:stat_sig}

As mentioned in Appendix~\ref{app:exp_detail_opt}, our main results in Table~\ref{tab:opt-13b} were reported using a single random seed (seed $0$) to ensure fair comparison with the primary baseline, LoZO~\citep{chen2024enhancing}, whose results were also obtained using the same seed. To provide a more comprehensive evaluation of performance variability and robustness, we conduct additional experiments using three random seeds: $\{0, 42, 100\}$.

We emphasize that performance considerations did not influence the choice of seeds but instead reflect common practice in the literature. Due to the high computational cost of fine-tuning OPT-13B, we restrict the evaluation to four representative datasets (SST-2, RTE, WIC, and SQuAD) covering both classification and generation tasks.

\begin{table*}[!h]
\centering
\caption{
Mean ($\pm$ standard deviation) performance of fine-tuning OPT-13B on 1000 examples, averaged over three seeds \{$0$, $42$, $100$\}. Tasks include classification (SST-2, RTE, WIC) and generation (SQuAD).}
\label{tab:opt-13b-add}
\resizebox{0.8\linewidth}{!}{
    \begin{tabular}{lcccc}
    \toprule
    Task      & \textbf{SST-2} & \textbf{RTE} & \textbf{WIC} & \textbf{SQuAD} \\
    Task type & \multicolumn{3}{c}{------------------ classification ------------------} & generation \\
    \midrule
    MeZO      & $91.14\scriptstyle\pm0.30$ & $64.58\scriptstyle\pm3.71$ & $58.93\scriptstyle\pm0.90$ & $82.18\scriptstyle\pm0.47$ \\
    LoZO      & $92.72\scriptstyle\pm0.89$ & $66.73\scriptstyle\pm3.28$ & $59.51\scriptstyle\pm2.53$ & $\mathbf{84.55}\scriptstyle\pm1.11$ \\
    MeZO-BCD  & $\mathbf{92.81}\scriptstyle\pm0.53$ & $\mathbf{69.56}\scriptstyle\pm0.45$ & $\mathbf{60.92}\scriptstyle\pm0.99$ & $84.31\scriptstyle\pm0.53$ \\
    \bottomrule
    \end{tabular}
    }
\end{table*}

Overall, the results in Table~\ref{tab:opt-13b-add} suggest that MeZO-BCD achieves consistently strong performance across tasks. In particular, it outperforms or matches LoZO on most benchmarks despite using a slightly smaller tuning grid (Table~\ref{tab:opt_hyper_13b}). Although LoZO performs competitively on SST-2 and SQuAD, its relatively high variance and occasional degradation (e.g., on RTE and WIC) highlight the importance of tuning stability. These findings reinforce our main message: \emph{MeZO-BCD delivers competitive fine-tuning performance while offering favorable robustness and lower tuning sensitivity.}

\begin{figure}[ht]
    \centering
    \subfigure[Mean loss curve of OPT-13B on SST-2]{\includegraphics[width=0.45\textwidth]{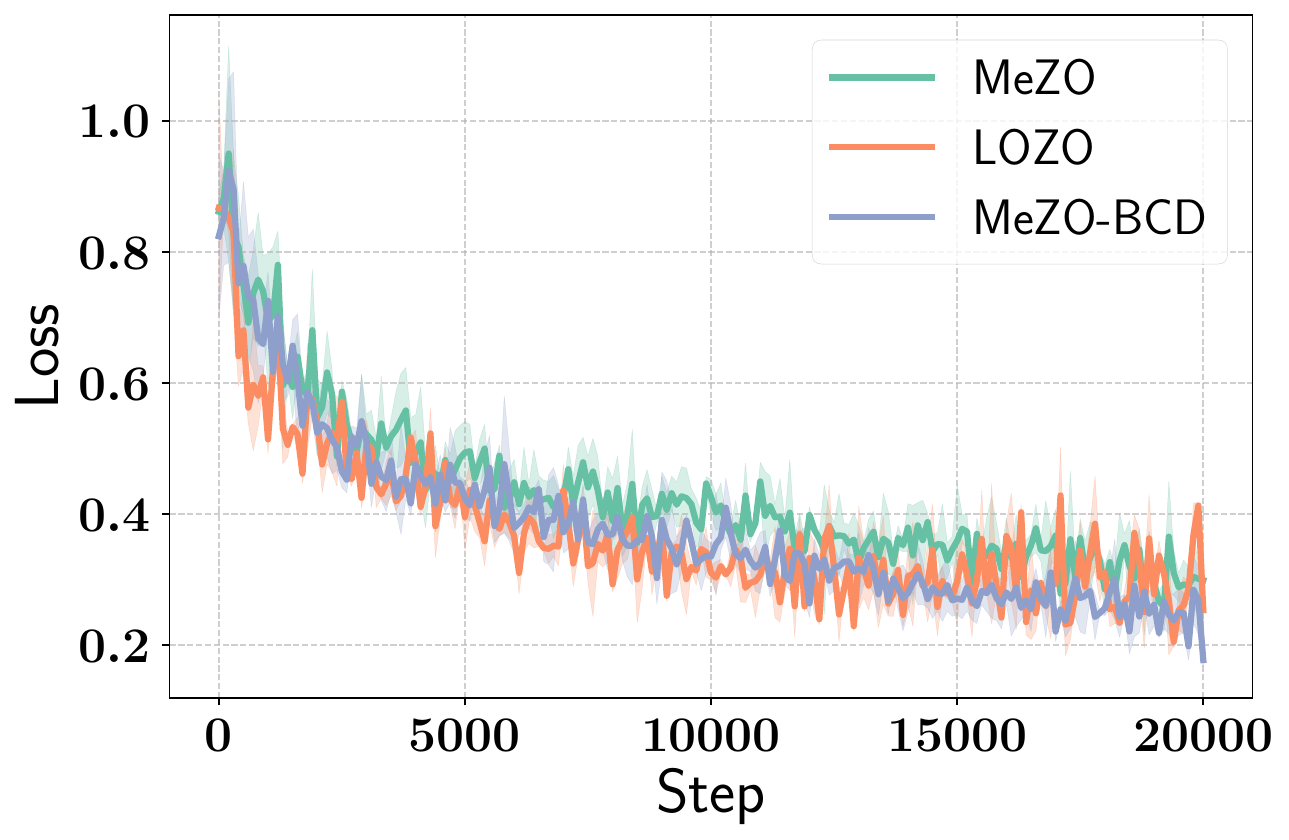}}
    \subfigure[Mean loss curve of OPT-13B on SQuAD]{\includegraphics[width=0.45\textwidth]{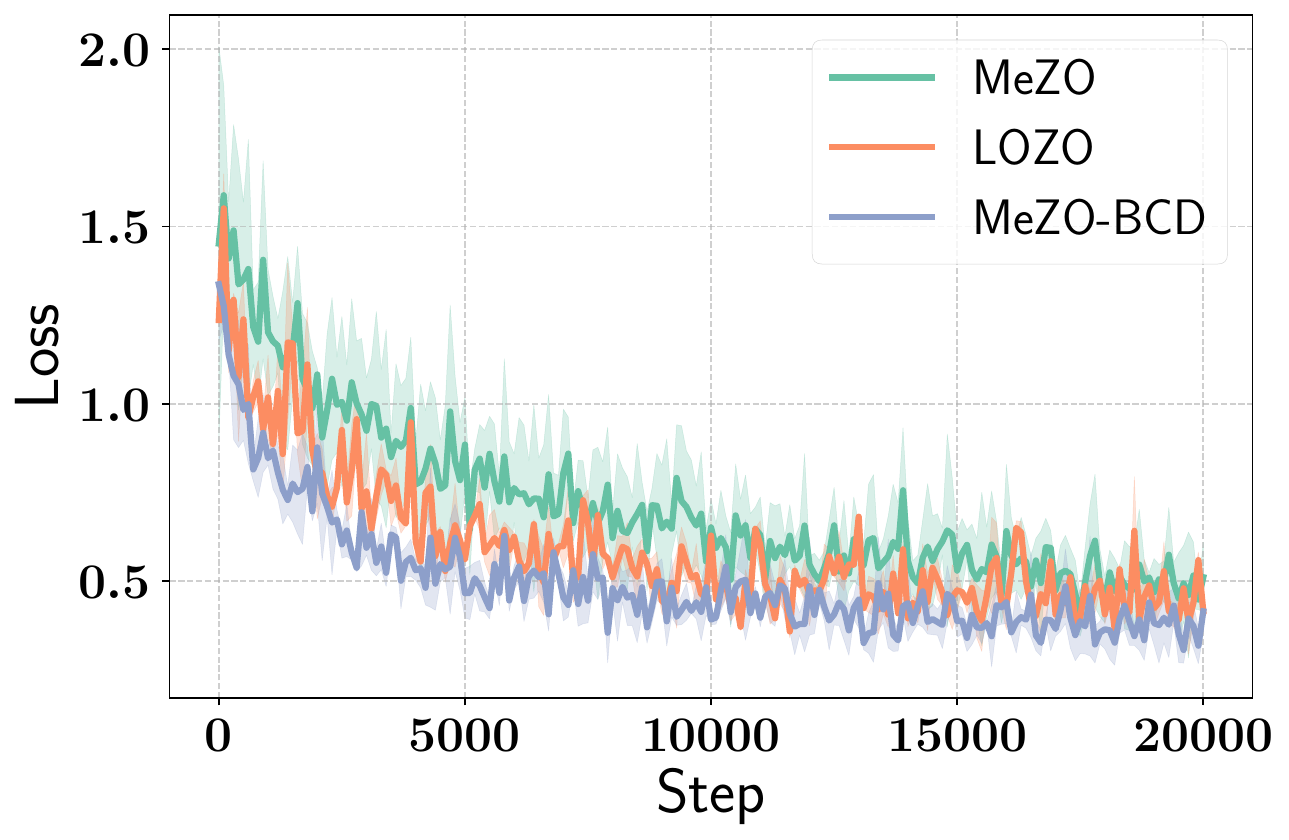}}
    \caption{Training loss curves averaged over three seeds \{$0$, $42$, $100$\} for OPT-13B on SST-2 and SQuAD. The shaded region indicates the standard deviation.}
    \label{fig:loss_opt_13b}
\end{figure}

To further support the claim that MeZO-BCD retains comparable iteration complexity to other ZO methods, we present additional training loss curves averaged over the three random seeds (Figure~\ref{fig:loss_opt_13b}). These results corroborate the claim in Section~\ref{subsec:llm_exp} that MeZO-BCD converges at a similar rate to other subspace methods, further confirming that it maintains comparable iteration complexity while reducing overall wall-clock time.

\clearpage
\section{Preliminary Exploration on Future Directions}\label{app:future_directions}

We supplement our main discussion with preliminary explorations of two promising extensions of MeZO-BCD: (1) adaptive block selection for dynamically prioritizing parameter updates, and (2) integration with stateful optimizers such as Adam. For each, we provide the motivation, methodological formulation, and initial empirical findings. Comprehensive evaluation and further algorithmic refinement remain as future work.

\subsection{MeZO-BCD with Adaptive Block Selection}

The convergence rate of zeroth-order optimization under subspace perturbation is governed by the subspace alignment $\rho_t$ (see Theorem~\ref{thm:improved_convergence_zosgd}). Our theoretical analysis (Propositions~\ref{prop:expected_rho},~\ref{prop:prob_rho}) shows that while various subspace schemes (e.g., low-rank, sparse, block sparse) share the same expected alignment, their probability distributions differ significantly. In particular, block sparse perturbation admits \emph{“good”} blocks with substantially higher alignment, suggesting that adaptively prioritizing such blocks can yield practical convergence gains beyond expectation-based analysis.

In practice, however, directly computing $\rho$ for each block is infeasible, especially in large-scale or black-box settings. As a practical alternative, we consider using the norm of the projected zeroth-order gradient for each block as a proxy for its alignment and importance. This quantity is both easy to compute and naturally highlights blocks that are either far from stationary or particularly sensitive to updates, making it a convenient and effective surrogate for guiding adaptive block selection.

\paragraph{Method.}
Let the model parameters be partitioned into $N$ disjoint blocks $\bm\theta = [\bm\theta^{(1)}, \dots, \bm\theta^{(N)}]$.  
At each iteration $t$, for each block $j \in \{1,\dots,N\}$, define the projected gradient norm
\[
    z_t^{(j)} := \frac{\mathcal{L}(\bm\theta_t + \mu \cdot \mathbf{e}_{B_j}(\mathbf{u}_t^{(j)}); \mathcal{B}_t) - \mathcal{L}(\bm\theta_t - \mu \cdot \mathbf{e}_{B_j}(\mathbf{u}_t^{(j)}); \mathcal{B}_t)}{2\mu}\in\mathbb{R},
\]
where $\mathbf{u}_t^{(j)} \sim \mathcal{N}(0, I_{|B_j|})$ and $\mathbf{e}_{B_j}$ embeds the block-local vector into the full parameter space. We use the $z_t^{(j)}$ as a block importance metric.

To avoid excessive computation, all $z_t^{(j)}$ are not evaluated at every iteration. Instead, during a warmup period $T_{\mathrm{warm}}$, blocks are updated in random cycle order to collect statistics. For each block, we maintain an exponential moving average (EMA) of the projected gradient norm:
\[
    \bar{z}_t^{(j)} = \alpha \cdot z_t^{(j)} + (1-\alpha)\bar{z}_{t-1}^{(j)},
\]
with $\bar{z}_0^{(j)} = 0$ and $\alpha \in (0,1]$. After warmup, we convert EMA values into a sampling probability via softmax:
\[
    p_t^{(j)} = \frac{\exp(\bar{z}_t^{(j)}/\tau)}{\sum_{k=1}^N \exp(\bar{z}_t^{(k)}/\tau)},
\]
where $\tau$ is a softmax temperature. At each iteration, block $j_t$ is sampled from this distribution and updated using the corresponding ZO projected gradient. Also, $\bar z_t^{(j)}$ is updated. Pseudocode for the full procedure is provided in Algorithm~\ref{alg:mezo_bcd_adaptive}.

\paragraph{Preliminary Experiments and Analysis.}
To assess the effectiveness of our adaptive block selection strategy, we conduct a preliminary fine-tuning experiment using OPT-1.3B on the SST-2 dataset. We use a learning rate of $\eta=5\times 10^{-6}$ and run experiments over three random seeds: $\{0, 42, 100\}$. We run each experiment for 20K steps, and evaluate the test accuracy for every 4K steps. The adaptive selection hyperparameters are set as follows: $\alpha=0.1$, $\tau=1$, and warmup period $T_{\text{warm}}=10N=270$ (since $N=27$ for OPT-1.3B).

\begin{figure}[!h]
    \centering
    \includegraphics[width=0.6\linewidth]{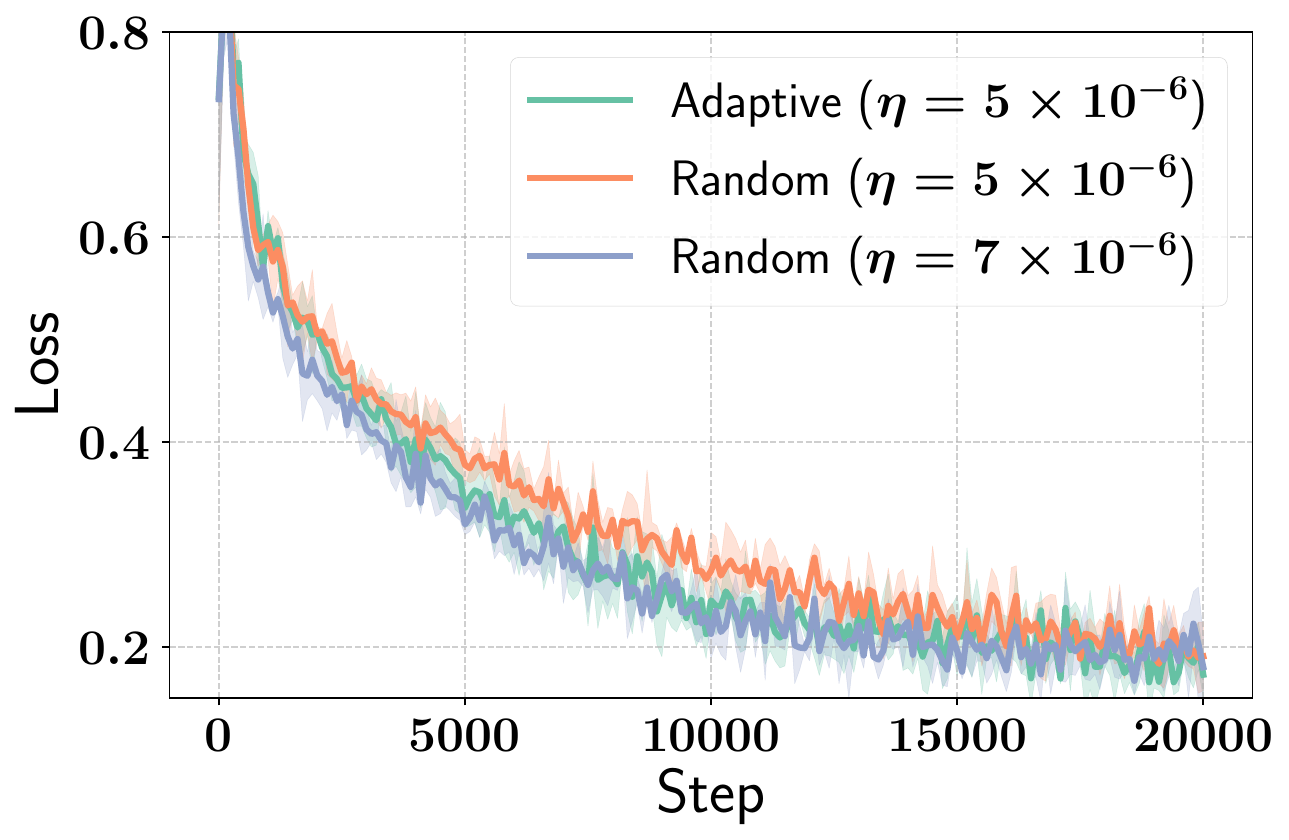}
    \caption{Loss curves for MeZO-BCD under adaptive vs random block selection strategies on OPT-1.3B, SST-2. Each curve represents the average $\pm$ one standard deviation over three seeds.}\label{fig:adaptive_loss_curve}
\end{figure}

Figure~\ref{fig:adaptive_loss_curve} presents the training loss curves of MeZO-BCD under three configurations: random (cyclic permutation) with $\eta=5\times10^{-6}$, random with $\eta=7\times10^{-6}$, and adaptive selection with $\eta=5\times10^{-6}$. The results show that, at the same learning rate, the adaptive strategy leads to faster convergence compared to random selection. However, this advantage is largely offset when a slightly higher learning rate is used for the random baseline. This suggests that the effect of current adaptive selection is not substantially different from that of simple learning rate tuning.

\renewcommand{\arraystretch}{1.15}
\begin{table}[!h]
    \centering
    \footnotesize
    \caption{Comparison of MeZO-BCD test accuracy (\%) under different block selection strategies and learning rates of OPT-1.3B fine-tuning on SST-2. Each value is mean $\pm$ std over 3 runs.}
    \label{tab:adaptive}
    \vspace{1em}
    \resizebox{0.7\linewidth}{!}{
    \begin{tabular}{lccc}
        \toprule
        \multirow{2}{*}{\textbf{Block Selection}} & \multicolumn{2}{c}{\textbf{Random}} & \textbf{Adaptive} \\
        \cmidrule(lr){2-3} \cmidrule(lr){4-4}
        & $\eta=5\times 10^{-6}$ & $\eta=7\times 10^{-6}$ & $\eta=5\times 10^{-6}$ \\
        \midrule
        MeZO-BCD & $91.32{\scriptstyle\pm1.10}$ & $91.28{\scriptstyle\pm0.83}$ & $91.17{\scriptstyle\pm0.41}$ \\
        \bottomrule
    \end{tabular}
    }
\end{table}
\renewcommand{\arraystretch}{1.0}

Table~\ref{tab:adaptive} summarizes the best test accuracies under each setting. All methods achieve statistically comparable performance, with no significant differences across selection strategies or learning rates. These observations imply that the current form of adaptive block selection, though conceptually appealing, yields limited improvement in either convergence speed or fine-tuning performance. Therefore, further investigation is necessary to realize its full potential.

\begin{figure}[!h]
    \centering
    \includegraphics[width=0.48\linewidth]{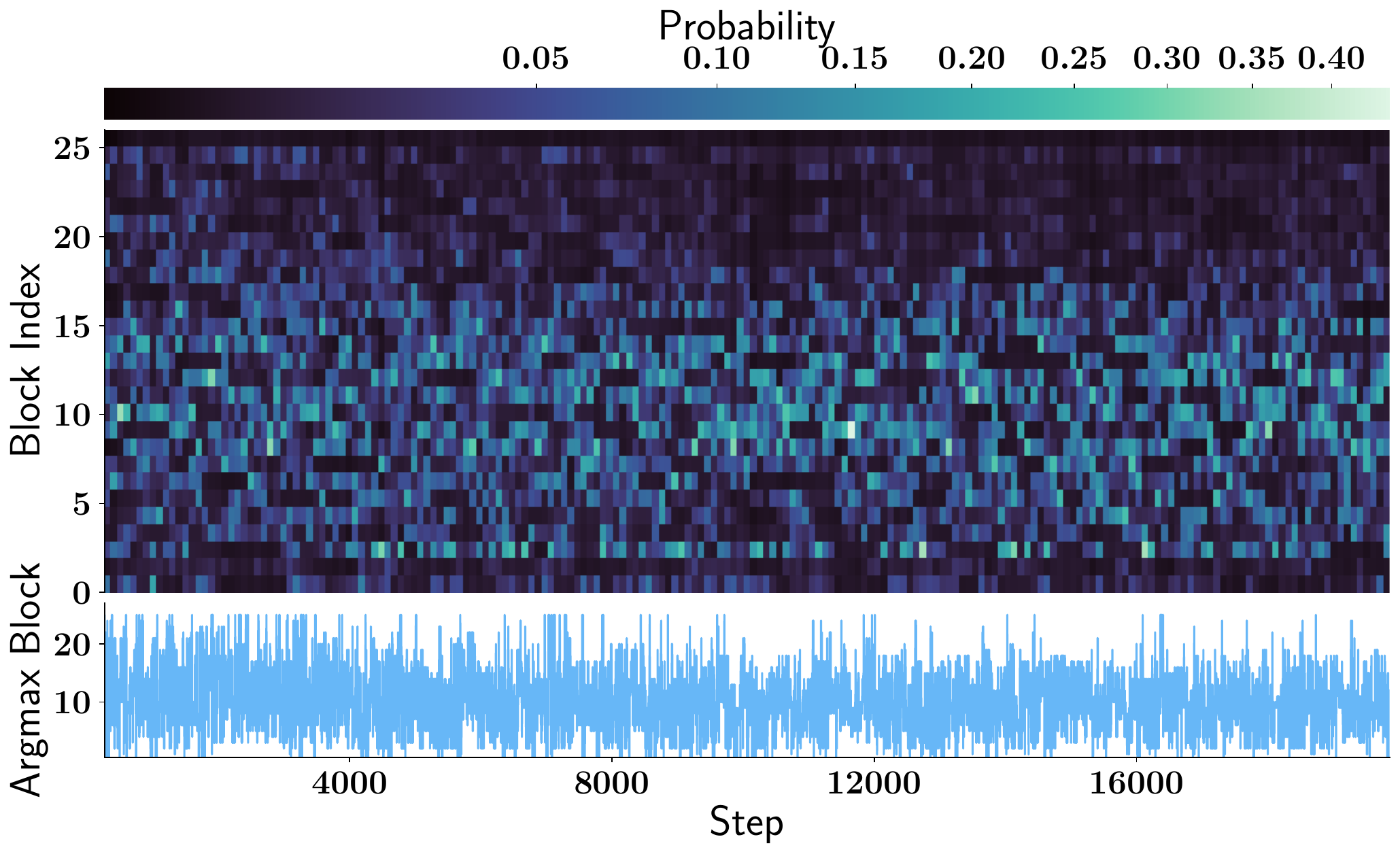}
    \includegraphics[width=0.48\linewidth]{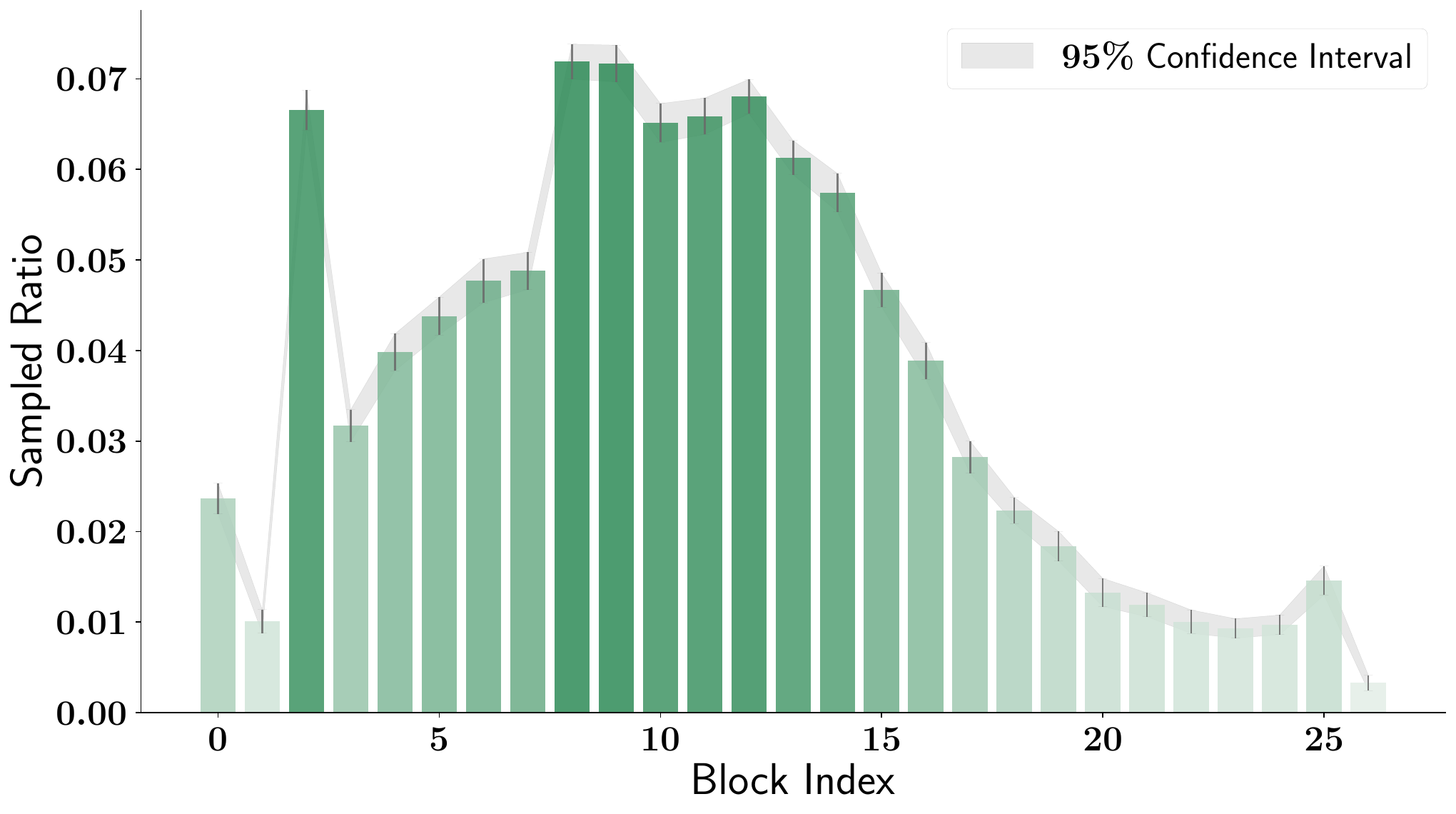}
    \caption{Statistics of adaptive block selection. \emph{Left:} The heatmap shows the $p_t^{(j)}$ for each block as training progresses. For heatmap visualization, the probability distribution is averaged every 100 steps. The line plot below the heatmap shows the block index with the highest selection probability (\emph{i.e.}, the $\mathrm{argmax}$) at each step. \emph{Right:} The sampled ratio of block indices over all training steps, simulated by drawing samples from the per-step probabilities at each step and aggregating across 1000 trials. The shaded region indicates the 95\% confidence interval.}
    \label{fig:adaptive_block_selection}
\end{figure}

To verify that the adaptive mechanism behaves as intended, we analyze its sampling behavior in more detail. Figure~\ref{fig:adaptive_block_selection} provides a diagnostic visualization. The left panel shows how the per-block sampling probabilities $p_t^{(j)}$ evolve over time, averaged every 100 steps. The line plot below tracks the block index with the highest selection probability at each step. We observe that the distribution varies throughout training, indicating that no single block or small set of blocks dominates selection persistently. This confirms that the selection mechanism, driven by the projected gradient norm and modulated through softmax, effectively balances exploration and exploitation by prioritizing informative blocks without collapsing to a single choice.

The right panel shows the overall sampling ratio of each block, obtained by sampling 1000 trajectories from the per-step distributions $\{p_t^{(j)}\}_{j=1}^N$. We find that most blocks are selected with reasonable frequency, although some (e.g., blocks 1 and 26) appear less frequently. These results validate that the mechanism functions as designed and achieves nontrivial adaptivity.

Taken together, this analysis demonstrates that our proposed adaptive selection mechanism operates correctly and provides a sound basis for further research. While its current form does not deliver substantial performance gains over random selection, we believe that it provides a solid foundation for exploring more sophisticated adaptive scheduling strategies.

\begin{algorithm}[!h]
\caption{MeZO-BCD with Adaptive Block Selection}\label{alg:mezo_bcd_adaptive} 
\centering
\begin{algorithmic}
    \REQUIRE block partitioning $\{B_i\}_{i=1}^{N}$, parameters $\mathbf{\theta}\in\mathbb{R}^d$ is partitioned into $[\theta_{B_1},\ldots,\theta_{B_N}]$, loss function $\mathcal{L}:\mathbb{R}^d\rightarrow\mathbb{R}$, step budget $T$, perturbation scale $\mu$, learning rate schedule $\{\eta_t\}_{t=1}^{T}$, warm-up budget $T_{\mathrm{warm}}=10N <T$, decay rate $\alpha=0.1$, temperature $\tau=1$ 
    \item[]
    \STATE \textbf{Initialize: } $\bar z_0^{(i)}\leftarrow 0$ for all $i=1,\ldots, N$
    \FOR{$t=1,\ldots,T$}
        \STATE Sample minibatch $\mathcal{B}\subset \mathcal{D}$ and random seed $s$\item[]
        \IF{$t\leq T_{\mathrm{warm}}$}
            \STATE Update current active block index $j\leftarrow\texttt{UpdateBlockIdx}(\text{ascending}, t, N)$ (Algorithm~\ref{alg:update_block_idx})
        \ELSE
            \STATE Compute probabilities $p_t^{(i)}= \frac{\exp(\bar{z}_t^{(i)}/\tau)}{\sum_{k=1}^N \exp(\bar{z}_t^{(k)}/\tau)}$ for all $i=1,\ldots, N$
            \STATE Update current active block index $j\sim\mathrm{Categorical}(p_t^{(1)},\ldots, p_t^{(N)})$
        \ENDIF
        \item[]
        \STATE $\mathbf{\theta}_{B_j}\leftarrow \mathrm{\texttt{PerturbParameters}}(\mathbf{\theta}_{B_j},\mu,s)$ (Algorithm~\ref{alg:perturb_params})
        \STATE $\ell_+\leftarrow\mathcal{L}(\mathbf{\theta};\mathcal{B})$
        \STATE $\mathbf{\theta}_{B_j}\leftarrow \mathrm{\texttt{PerturbParameters}}(\mathbf{\theta}_{B_j},-2\mu,s)$
        \STATE $\ell_-\leftarrow\mathcal{L}(\mathbf{\theta};\mathcal{B})$
        \STATE $\mathbf{\theta}_{B_j}\leftarrow \mathrm{\texttt{PerturbParameters}}(\mathbf{\theta}_{B_j},\mu,s)$
        \item[]
        \STATE\texttt{projected\_grad} $\leftarrow (\ell_+-\ell_-)/2\mu$
        \item[]
        \FOR{$i=1,\ldots,N$}
            \IF{$i=j$}
                \STATE $\bar z_t^{(j)}\leftarrow \alpha\cdot\texttt{projected\_grad}+(1-\alpha)\bar z_{t-1}^{(j)}$
            \ELSE
                \STATE $\bar z_t^{(j)}\leftarrow \bar z_{t-1}^{(j)}$
            \ENDIF
        \ENDFOR
        \item[]
        \STATE Reset random number generator with seed $s$
        \FOR{$\theta_i\in\mathbf{\theta}_{B_j}$}
            \STATE $u\sim\mathcal{N}(0,1)$
            \STATE $\theta_i\leftarrow\theta_i-\eta_t$ * \texttt{projected\_grad} * $u$
        \ENDFOR
    \ENDFOR
\end{algorithmic}
\end{algorithm}

\subsection{MeZO-BCD with Adam}

Stateful optimizers such as Adam~\citep{KingBa15} are widely used in first-order optimization, but their application in zeroth-order methods has been limited by high memory overhead and only marginal gains~\citep{malladi2023fine,zo-bench}. MeZO-BCD provides a practical alternative: since only a single block is perturbed and updated at each step, optimizer states can be stored and updated locally for the active block, eliminating the need for full-model state storage. This enables scalable use of stateful optimizers in zeroth-order settings.

To fully leverage Adam’s adaptivity, it is important that the optimizer state for each block is updated across multiple consecutive steps. If the active block were switched every step, the optimizer state would not accumulate meaningful historical information, and Adam would degenerate into vanilla SGD. Thus, we introduce an interval parameter $\nu$, ensuring that each selected block is optimized for $\nu$ consecutive steps before switching. This design allows the optimizer state to capture meaningful trends within each block, unlocking the potential benefits of stateful optimization in the MeZO-BCD framework. Note that this design closely resembles the BAdam~\citep{luo2024badam}.

\paragraph{Method.}
For the active block $j$, and at interval step $t'=1,\dots,\nu$, let $g_{t'}$ be the block-local ZO gradient at step $t'$. Adam’s state (momentum $m_{t'}$ and variance $v_{t'}$) is updated and used according to:
\[
    m_{t'} = \beta_1 m_{t'-1} + (1-\beta_1)g_{t'}, \quad v_{t'} = \beta_2 v_{t'-1} + (1-\beta_2)g_{t'}^2
\]
\[
    \widehat{m}_{t'} = m_{t'}/(1-\beta_1^{t'}), \quad \widehat{v}_{t'} = v_{t'}/(1-\beta_2^{t'})
\]
\[
    \theta^{(j)}_{t'+1} = \theta^{(j)}_{t'} - \eta \frac{\widehat{m}_{t'}}{\sqrt{\widehat{v}_{t'}}+\varepsilon}
\]
where $\beta_1,\beta_2$ are the Adam decay rates. After $\nu$ iterations, the optimizer state for block $j$ is discarded, and the next block is selected. Pseudocode for the full procedure is provided in Algorithm~\ref{alg:mezo_bcd_adam}.

\paragraph{Preliminary Experiments and Analysis.}

To empirically assess the effectiveness of MeZO-BCD when integrated with Adam, we conduct preliminary experiments. Experiments are carried out using OPT-1.3B across five representative tasks: SST-2, RTE, WIC, BoolQ, and SQuAD. Following the same protocol as in prior sections, models are trained for 20K steps with evaluation every 4K steps. The block selection strategy is fixed to cyclic random permutation, and we set the consecutive update interval to $\nu=50$ to allow the Adam state to accumulate meaningfully within each block.

We perform hyperparameter tuning over two learning rates for each method. For MeZO-BCD, we consider $\eta \in \{5 \times 10^{-6}, 1 \times 10^{-5}\}$, while for MeZO-BCD (+Adam), we explore $\eta \in \{5 \times 10^{-5}, 1 \times 10^{-4}\}$. We found that the Adam variant tends to require a higher learning rate for stable convergence. While not rigorously analyzed here, one possible reason is the normalization effect of Adam, which scales updates by the inverse square root of the accumulated variance. All experiments are repeated over three random seeds $\{0, 42, 100\}$.

\renewcommand{\arraystretch}{1.15}
\begin{table*}[!h]
\centering
\caption{
Mean ($\pm$ standard deviation) test accuracy of fine-tuning OPT-1.3B on 1000 examples across five tasks, averaged over three seeds \{0, 42, 100\}. MeZO-BCD is compared against its variant with Adam.}\label{tab:adam}
\resizebox{0.85\linewidth}{!}{
    \begin{tabular}{lccccc}
    \toprule
    Task      & \textbf{SST-2} & \textbf{RTE} & \textbf{WIC} & \textbf{BoolQ}& \textbf{SQuAD} \\
    Task type & \multicolumn{4}{c}{------------------------ classification ------------------------} & generation \\
    \midrule
    MeZO-BCD & $91.32\scriptstyle\pm1.10$ & $63.78\scriptstyle\pm2.73$ & $59.98\scriptstyle\pm0.24$ & $63.53\scriptstyle\pm0.86$ & $76.58\scriptstyle\pm0.58$\\
    MeZO-BCD (+Adam)  & $91.40\scriptstyle\pm0.69$ & $62.82\scriptstyle\pm4.00$ & $59.51\scriptstyle\pm0.33$ & $62.83\scriptstyle\pm1.07$ & $76.28\scriptstyle\pm0.92$\\
    \bottomrule
    \end{tabular}
    }
\end{table*}
\renewcommand{\arraystretch}{1.0}

\begin{figure}[ht]
    \centering
    \subfigure[Mean loss curve of OPT-1.3B on SST-2]{\includegraphics[width=0.45\textwidth]{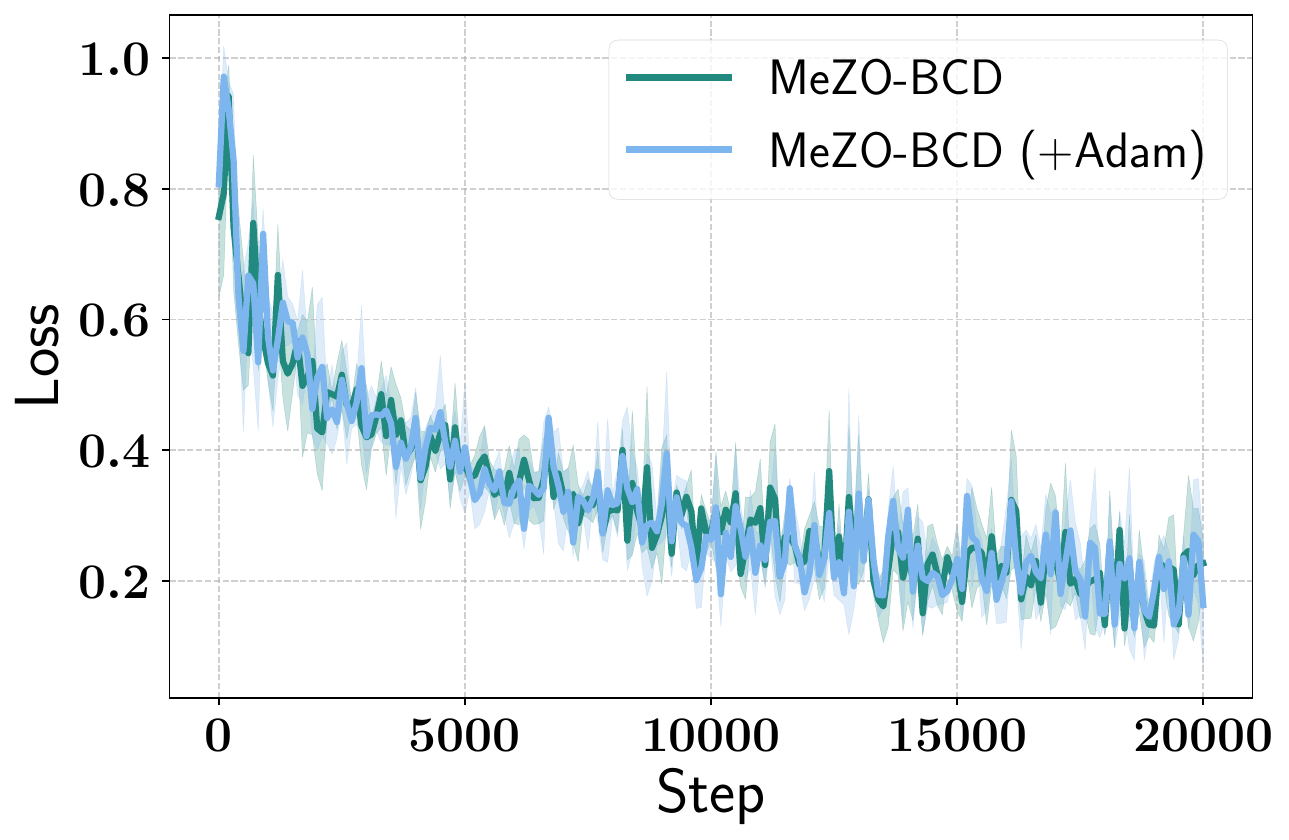}}
    \subfigure[Mean loss curve of OPT-1.3B on SQuAD]{\includegraphics[width=0.45\textwidth]{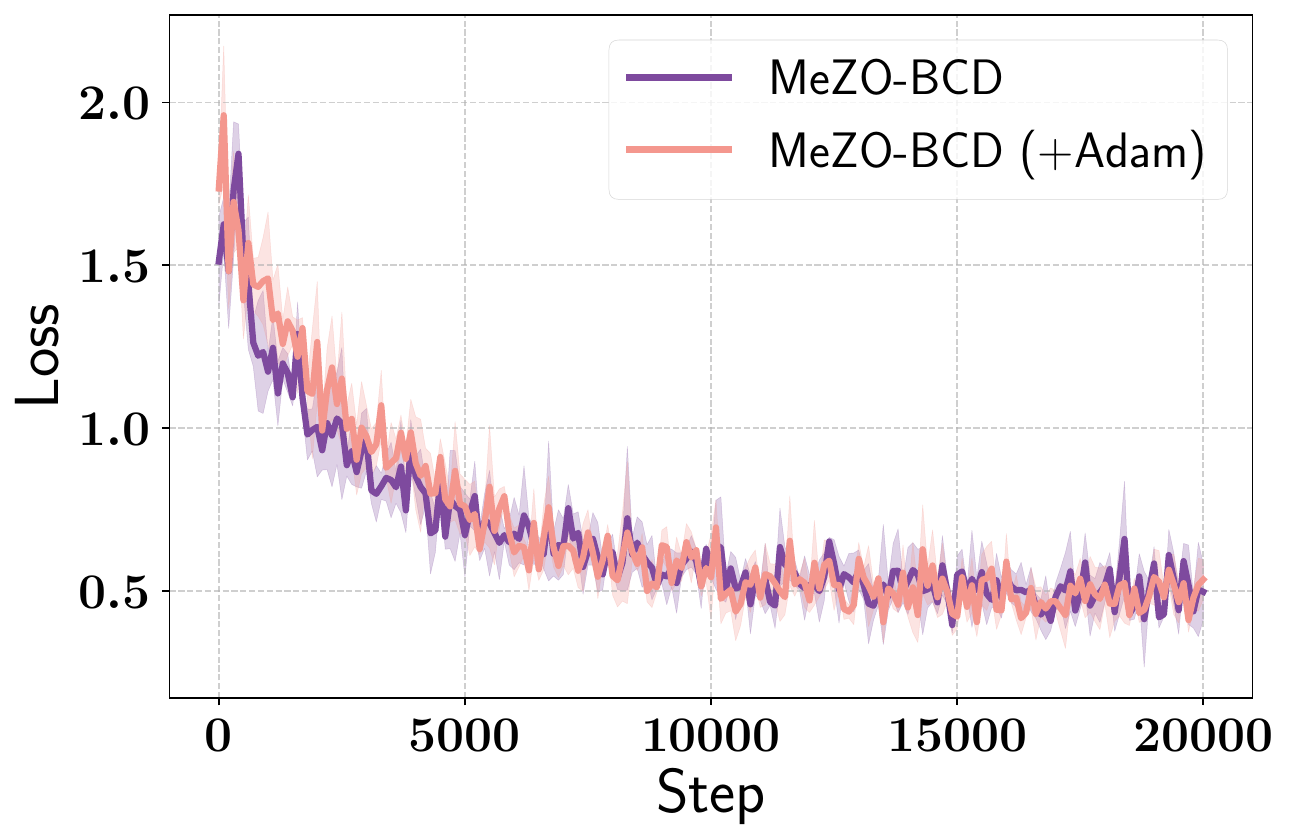}}
    \caption{Training loss curves for MeZO-BCD and MeZO-BCD (+Adam) on SST-2 and SQuAD. Each curve shows the average over three seeds \{0, 42, 100\}, with shaded regions denoting standard deviation. Both methods display similar convergence trajectories over 20K training steps.}
    \label{fig:loss_adam}
\end{figure}

Table~\ref{tab:adam} reports the best test accuracy across all tasks, and Figure~\ref{fig:loss_adam} shows the corresponding loss trajectories on SST-2 and SQuAD. In line with earlier studies~\citep{malladi2023fine,zo-bench}, we observe that integrating Adam does not yield meaningful improvements in convergence speed or fine-tuning accuracy. While this outcome is expected, a more meaningful insight emerges when we examine the memory requirements and iteration time shown in Figure~\ref{fig:adam_mem_time}.

\begin{figure}
    \centering
    \includegraphics[width=0.9\linewidth]{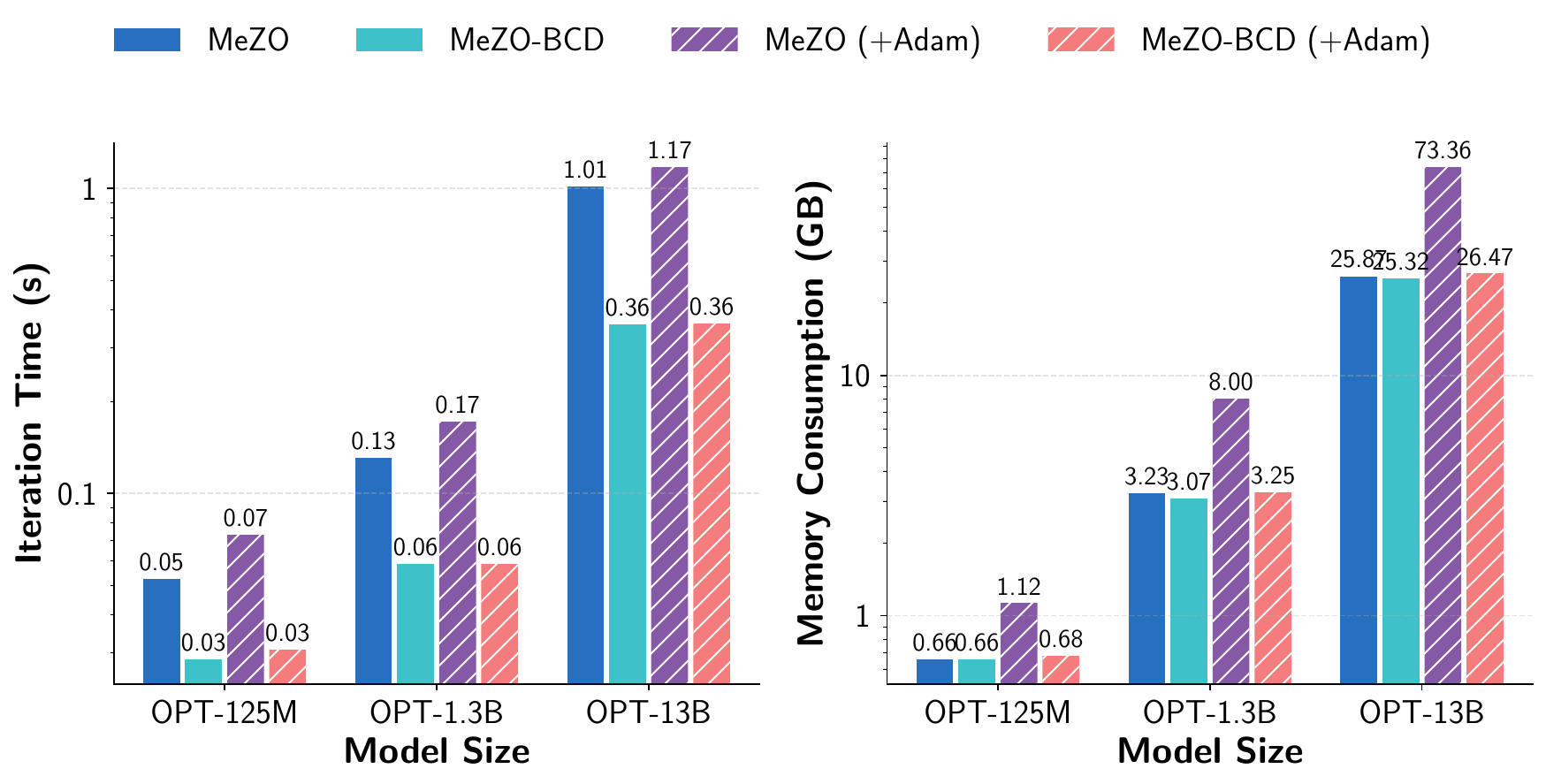}
    \caption{Comparison of per-step iteration time (\emph{left}) and peak memory consumption (\emph{right}) for MeZO and MeZO-BCD with and without Adam, across model sizes for fine-tuning on SST-2 (averaged over 100 training steps). MeZO-BCD (+Adam) achieves significantly lower memory usage compared to MeZO (+Adam), while maintaining comparable iteration time to MeZO-BCD. All results are measured using a single A100 80GB PCIe.}\label{fig:adam_mem_time}
\end{figure}

Most notably, MeZO-BCD (+Adam) achieves a drastic reduction in memory consumption compared to MeZO (+Adam). For instance, on OPT-13B, MeZO (+Adam) requires $\mathbf{73.4} $GB of memory due to full-model optimizer states, whereas MeZO-BCD (+Adam) only requires $\mathbf{26.5}$GB, almost identical to the non-Adam version of MeZO-BCD ($25.3$GB). This result is significant: it demonstrates that MeZO-BCD enables the use of stateful optimizers in large-scale zeroth-order settings where such methods were previously impractical due to prohibitive memory costs.

Equally important, this memory saving is achieved without sacrificing wall-clock efficiency. As shown in the left panel of Figure~\ref{fig:adam_mem_time}, MeZO-BCD (+Adam) maintains iteration time nearly identical to MeZO-BCD, while MeZO (+Adam) exhibits increased latency caused by additional memory transfer overhead involved by optimizer states. Thus, MeZO-BCD not only makes the use of Adam feasible from a memory standpoint but also preserves the low iteration cost critical to large-scale deployment.

These results collectively position MeZO-BCD as a scalable and practical foundation for extending zeroth-order optimization with adaptive and momentum-based techniques. Although Adam does not yield significant gains in our setting, the block-wise structure of MeZO-BCD makes stateful optimization both memory-efficient and computationally viable, thus opening the door to more sophisticated strategies that were previously infeasible at scale.

\begin{algorithm}[!h]
\caption{MeZO-BCD with Adam}\label{alg:mezo_bcd_adam} 
\centering
\begin{algorithmic}
    \REQUIRE block partitioning $\{B_i\}_{i=1}^{N}$, block ordering $\texttt{order}$, parameters $\mathbf{\theta}\in\mathbb{R}^d$ is partitioned into $[\mathbf{\theta}_{B_1},\ldots,\mathbf{\theta}_{B_N}]$, loss function $\mathcal{L}:\mathbb{R}^d\rightarrow\mathbb{R}$, step budget $T$, perturbation scale $\mu$, learning rate schedule $\{\eta_t\}_{t=1}^{T}$, block update interval $\nu$, $(\beta_1, \beta_2, \varepsilon)=(0.9, 0.999, 10^{-8})$
    \item[]

    \FOR{$t_b=1,\ldots,\lceil T/\nu\rceil$}
        \STATE Update current active block index $j\leftarrow\mathrm{\texttt{UpdateBlockIdx}}(\texttt{order}, t_b, N)$ (Algorithm~\ref{alg:update_block_idx})
        \STATE $\mathbf{m}_0\leftarrow \mathbf{0}$, $\mathbf{v}_0\leftarrow \mathbf{0}$
        \item[]
        \FOR{$t=1,\ldots,\nu$}
            \STATE Sample minibatch $\mathcal{B}\subset \mathcal{D}$ and random seed $s$
            \STATE $\mathbf{\theta}_{B_j}\leftarrow \mathrm{\texttt{PerturbParameters}}(\mathbf{\theta}_{B_j},\mu,s)$
            \STATE $\ell_+\leftarrow\mathcal{L}(\mathbf{\theta};\mathcal{B})$
            \STATE $\mathbf{\theta}_{B_j}\leftarrow \mathrm{\texttt{PerturbParameters}}(\mathbf{\theta}_{B_j},-2\mu,s)$
            \STATE $\ell_-\leftarrow\mathcal{L}(\mathbf{\theta};\mathcal{B})$
            \STATE $\mathbf{\theta}_{B_j}\leftarrow \mathrm{\texttt{PerturbParameters}}(\mathbf{\theta}_{B_j},\mu,s)$
            \item[]
            \STATE Reset random number generator with seed $s$
            \STATE $\mathbf{u}\sim\mathcal{N}(\mathbf{0},\mathbf{I}_{|B_j|})$
            \STATE $\widehat{\mathbf{g}}_t\leftarrow ((\ell_+-\ell_-)/2\mu)\mathbf{u}$
            \STATE $\mathbf{m}_t\leftarrow \beta_1\mathbf{m}_{t-1}+(1-\beta_1)\widehat{\mathbf{g}}_{t}$
            \STATE $\mathbf{v}_t\leftarrow \beta_2\mathbf{v}_{t-1}+(1-\beta_2)\widehat{\mathbf{g}}_{t}^{\odot 2}$
            \STATE $\widehat{\mathbf{m}}_t, \widehat{\mathbf{v}}_t\leftarrow \frac{\mathbf{m}_t}{1-\beta_1^t}, \frac{\mathbf{v}_t}{1-\beta_2^t}$
            \STATE $\mathbf{\theta}_{B_j}\leftarrow \mathbf{\theta}_{B_j}-\eta_{\nu (t_b-1)+t}\frac{\widehat{\mathbf{m}}_t}{\sqrt{\widehat{\mathbf{v}}}_t+\varepsilon}$
        \ENDFOR
    \ENDFOR

    \item[]
    \STATE \textbf{Subroutine:} \texttt{PerturbParameters}$(\mathbf{\theta}, \mu, s)$
    \STATE \hspace{1em} Reset random number generator with seed $s$
    \STATE \hspace{1em} $\mathbf{u} \sim \mathcal{N}(\mathbf{0}, \mathbf{I}_n)$
    \STATE \hspace{1em} $\mathbf{\theta} \leftarrow \mathbf{\theta} + \mu \mathbf{u}$
    \STATE \hspace{1em} \textbf{return} $\mathbf{\theta}$
    \item[]
\end{algorithmic}
\end{algorithm}

\end{document}